\documentclass[11pt]{article}
\usepackage{fullpage}
\usepackage[utf8]{inputenc} \usepackage[T1]{fontenc}    \usepackage{hyperref}       \usepackage{url}            \usepackage{booktabs}       \usepackage{amsfonts}       \usepackage{nicefrac}       \usepackage{microtype}      \usepackage{xcolor}         

\usepackage{amsmath, amssymb, amsthm}
\usepackage{xspace}
\usepackage{enumerate}
\usepackage{enumitem}
\setlist[enumerate]{leftmargin=2em}
\setlist[itemize]{leftmargin=2em} 
\usepackage{booktabs}
\usepackage{mathrsfs}
\usepackage{graphicx}
\usepackage{esvect}
\usepackage{textcomp}
\usepackage{tikz}
\usepackage{bm}
\usepackage{bbm}
\usepackage[normalem]{ulem}
\usepackage{xcolor}
\usepackage{mathrsfs}
\usepackage{thm-restate}
\usepackage{subcaption}
\usepackage{hyperref}
\newtheorem{itheorem}{Informal Theorem}
\newtheorem{theorem}{Theorem}
\newtheorem{remark}{Remark}
\newtheorem{fact}[theorem]{Fact}

\newtheorem{lemma}[theorem]{Lemma}
\newtheorem{conjecture}[theorem]{Conjecture}
\newtheorem{iconjecture}[theorem]{Informal Conjecture}
\theoremstyle{definition}
\newtheorem{definition}{Definition}

\DeclareMathOperator*{\poly}{poly}
\DeclareMathOperator*{\supp}{supp}

\newcommand{\indicator}[1]{\mathbbm{1}\left\{ #1 \right\}}

\newcommand{\pr}[2][]{\mathop{\mathrm{Pr}}\displaylimits_{#1}\left[ #2 \right]}

\newcommand{\norm}[2][]{\left\| #2 \right\|_{#1}}
\newcommand{\norml}[2][]{\| #2 \|_{#1}}

\newcommand{\innerprod}[2]{\left\langle #1, #2 \right\rangle}
\newcommand{\innerprodl}[2]{\langle #1, #2 \rangle}

\newcommand{\sign}{\mathrm{sign}}

\newcommand{\floor}[1]{\left\lfloor #1 \right\rfloor}

\newcommand{\abs}[1]{\left| #1 \right|}
\newcommand{\absl}[1]{| #1 |}

\newcommand{\cardl}[1]{| #1 |}
\newcommand{\set}[1]{\left\{ #1 \right\}}

\newcommand{\paren}[1]{\left( #1 \right)}

\newcommand{\B}{\mathbb{B}}
\newcommand{\N}{\mathbb{N}}
\newcommand{\R}{\mathbb{R}}
\newcommand{\Z}{\mathbb{Z}}

\newcommand{\flip}{\{-1, 1\}}
\newcommand{\bit}{\{0, 1\}}

\newcommand\bx{{\mathbf{x}}}
\newcommand\bX{{\mathbf{X}}}

\newcommand\T{{\scriptscriptstyle\mathsf{T}}}

\newcommand{\fracl}[2]{#1 / #2}

\newcommand\unif{\mathrm{Unif}}

\bibliographystyle{plainnat}

\newcommand{\sm}{\mathrm{softmax}}

\newcommand{\disj}{\mathrm{DISJ}}
\newcommand{\pairid}{\mathrm{Match2}}
\newcommand{\triid}{\mathrm{Match3}}
\newcommand{\triidhelp}{\mathrm{Match3Bigram}}
\newcommand{\triidlocal}{\mathrm{Match3Local}}
\newcommand{\tricycledir}{\mathrm{DirectedCycle3}}

\newcommand{\fivecycle}{\mathrm{Cycle5}}

\newcommand{\congest}{\textsc{Congest}\xspace}
\newcommand{\local}{\textsc{Local}\xspace}

\newcommand{\conv}{\mathrm{Conv}}

\newcommand{\qsa}{q\mathrm{SA}}
\newcommand{\attn}{\mathcal{A}}
\newcommand{\attngraph}{\mathcal{AG}}
\newcommand{\tr}{\mathcal{T}}
\newcommand{\trgraph}{\mathcal{TG}}
\newcommand{\rank}{\mathrm{rank}}

\newcommand{\st}{\ \mathrm{s.t.} \ }

\newcommand{\kernmod}[1]{\kern-8pt \pmod{#1}}
\newcommand{\iif}{\text{if }}
\newcommand{\aand}{\text{ and }}
\usepackage{authblk}
\usepackage{natbib}
\usepackage{cleveref}
\crefname{iconjecture}{Informal Conjecture}{Informal Conjectures}

\title{Representational Strengths and Limitations of Transformers}

\author[1]{Clayton Sanford}
\author[1]{Daniel Hsu}
\author[2]{Matus Telgarsky}
\affil[1]{Columbia University}
\affil[2]{New York University}

\newif\ifanonymoussubmission
\anonymoussubmissionfalse

\begin{document}

\maketitle
{\def\thefootnote{}\footnotetext{E-mail: \texttt{clayton@cs.columbia.edu}, \texttt{djhsu@cs.columbia.edu}, \texttt{mjt@illinois.edu}}}

\begin{abstract}
  Attention layers, as commonly used in transformers,
  form
  the backbone of
  modern deep learning,
  yet
  there is no mathematical description of their benefits and deficiencies 
  as compared with other architectures.
  In this work we establish both positive and negative
  results on the representation power of attention layers,
  with a focus on
  intrinsic 
  complexity parameters such as
  width, depth, and
  embedding dimension.
  On the positive side, we present a \emph{sparse averaging task}, where recurrent networks
  and feedforward networks all have complexity scaling
  polynomially
  in the
  input size,
  whereas transformers scale merely \emph{logarithmically}
  in the input size;
  furthermore, we use the same construction to show the necessity and role of
  a large
  embedding dimension
  in a transformer.
  On the negative side, we present a \emph{triple detection} task, where attention layers in turn
  have complexity scaling
  linearly in the input size;
  as this scenario seems rare in practice, we also present
  natural variants
  that
  can be efficiently solved by attention layers.
  The proof techniques emphasize the value of communication complexity in the analysis of transformers and related models,
    and the role of sparse averaging as a prototypical attention task, which even finds
    use in the analysis of triple detection.
\end{abstract}

\section{Introduction}\label{sec:intro}

In recent years, transformer networks \citep{vsp+17} have been established as a fundamental neural architecture powering state-of-the-art results in many applications, including language modeling \citep{gpt4}, computer vision \citep{dbw+21}, and protein folding \citep{jep+21}.
The key building block of transformer models is the \emph{self-attention unit}, a primitive that represents interactions among input elements as inner-products between low-dimensional embeddings of these elements. 

The success of transformer models is linked to their ability to scale their training and generalization performance to larger datasets and sequence lengths.
Their representational capacity, however, underlies this scaling power,
and is tied to the inductive biases of their learning algorithms.
Empirically,
transformer models trained with gradient-based learning algorithms exhibit biases towards certain algorithmic primitives \citep{egkz22, lagkz22} and learn representations that may encode domain-specific information in the self-attention units \citep{clark2019does,hewitt2019structural,rkr20,chen2022cat}.
These examples indicate that transformer architectures not only provide computational benefits, but also have representational capabilities that are particularly well-matched to practical tasks.

In this paper, we investigate these inductive biases by identifying ``natural'' computational tasks for which transformers are well-suited, especially compared to other neural network architectures, as well as tasks that highlight the limitations of transformers.
The tasks---sparse averaging, pair-matching, and triples-matching---represent primitive operations that aggregate structural information encoded in embeddings.
We use these tasks to 
elucidate the relationship between the embedding dimension $m$ of a self-attention unit and its expressivity, and to showcase the fundamental representational limitations of self-attention layers.

In our model, the primary computational bottleneck faced by a transformer in computing a ``sequence-to-sequence''\footnote{Note, however, that attention units are permutation equivariant, so the order of elements in the input ``sequence'' $X \in \mathcal{X}^N$ is irrelevant.
  In practice, \emph{positional encodings} are used when the sequence order is relevant.}
function $f \colon \mathcal{X}^N \to \mathcal{Y}^N$
is the constrained processing of pairs of input elements \smash{$\{x_i, x_j\} \in \binom{\mathcal{X}}{2}$}; we allow transformers unbounded computational power when processing the individual elements $x_i \in \mathcal{X}$.
This is motivated by modern scaling regimes where the context length $N$ has rapidly increased, the self-attention embedding dimension $m$ remains much smaller than $N$, and the parameterization of multi-layer perceptrons (MLPs) that operate on individual elements is much larger than $m$.
Indeed, the largest GPT-3 model \citep{gpt3} features a context length $N = 2048$, an embedding dimension $m = 128$, and MLPs with a 12288-dimensional parameterization; the context length of GPT-4 is as large as $N = 32000$.
As such, we are interested in the capabilities of transformers with
$N^{o(1)}$ total ``size'', as opposed to $N^{\Omega(1)}$.
The nature of the bottleneck in our model makes the tools of communication complexity indispensable for formalizing computational limits.

\subsection{Our contributions}

\paragraph*{Sparse averaging separations among atomic self-attention units.}
The \emph{$q$-sparse averaging task} $\qsa$ aims to capture
the essential approximation-theoretic properties of self-attention units. 
In $\qsa$,
the $i$th input $x_i$ is a pair $(y_i,z_i)$, where \smash{$z_i \in \R^{d'}$} is the \emph{data}
part of $x_i$, simply a vector in \smash{$\R^{d'}$},
whereas and \smash{$y_i \in \binom{[N]}{q}$} is the \emph{indexing} part,
which specifies $q$ locations in the input sequence;
the $i$th output element in $\qsa$ is obtained by averaging the $q$ \emph{data} parts $z_{j}$
given by $j\in y_i$, meaning
\[\qsa\paren{( y_{1}, z_1), \dots, (y_N, z_N)} =\paren{\frac1q \sum_{j \in y_1} z_j, \dots,\frac1q \sum_{j \in y_N} z_j}. \]
(See also \Cref{def:qsa}.)
As summarized in the following informal theorem,
our analysis of $\qsa$ in \Cref{sec:averaging} and \Cref{asec:other-nns} illustrates the ability of the self-attention primitive to associate 
arbitrary subsets of input elements (as opposed to just ``local'' subsets, as specified by some sequential/topological structure),
measures the expressive power accrued by increasing the embedding dimension $m$ of a self-attention unit, and indicates the representational limitations of ``traditional'' neural architectures on basic computational
tasks.
\begin{itheorem}\label{fact:qsa:informal}
  The task $\qsa$ for $q \in \Z_+$ satisfies the following properties (see \Cref{def:qsa} for a formal definition and approximation metric).
\begin{enumerate}
\item\label{fact:qsa:informal:attention} There exists a unit of self-attention $f$ with an $m$-dimensional embedding that approximates $\qsa$ if and only if $m \gtrsim q$ (Theorems~\ref{thm:qsa-ub-bound-prec} and \ref{thm:qsa-lb-bound-prec}).
\item Any fully-connected neural network whose output approximates $\qsa$ requires its first hidden layer to have width at least $\Omega(Nd)$ (\Cref{thm:fnn}).
\item Any recurrent neural network whose iterates approximate $\qsa$ requires a hidden state of at least $\Omega(N)$ bits (\Cref{thm:rnn}).
\end{enumerate}
\end{itheorem}
We consider the $\qsa$ implementation in \Cref{fact:qsa:informal:attention} \emph{efficient} since the dimension of the model
parameters grows with $\poly(q, d, \log N)$, whereas the latter two are \emph{inefficient}
since their parameter (or state) dimension grows as $\poly(N)$.  
The proofs of the positive results employ embeddings for each index $j$ and each subset $y_i$ that have large inner products if and only if $j \in y_i$. 
The negative results involve communication complexity reductions and geometric arguments.
These arguments naturally introduce a dependence on bits of precision, which we suppress above
within the notation ``$\gtrsim$''; we note that these bounded-precision results are arguably more
relevant to modern networks, which uses as few as $4$ or even $2$ bits of numerical precision.

\paragraph*{Contrast between pairwise and triple-wise matching with self-attention layers.}
We frame standard transformer architectures as being able to efficiently represent functions that are decomposable into sparse pairwise interactions between inputs.
To do so, we introduce two sequential tasks and prove a collection of constructions and hardness results that characterize the abilities of transformers to solve these tasks.
 
Given an input sequence $X = (x_1, \dots, x_N) \in[M]^{N }$ (for some $M = \poly(N)$),
we formalize the problems of \emph{similar pair detection} ($\pairid$) and \emph{similar triple detection} ($\triid$) as
\begin{align}
  \pairid(X)_{i \in [N]} &= \indicator{\exists j \st x_i + x_j = 0 \kernmod{M}}, \label{eq:pairid_defn} \\
  \triid(X)_{i \in [N]} &= \indicator{\exists j_1, j_2 \st x_i + x_{j_1} + x_{j_2} = 0 \kernmod{M}}. \label{eq:triid_defn}
\end{align} 
For both tasks, note that the output is an $N$-dimensional vector whose $i$th element is 1 if and only if the sequence $X$ includes a pair or triple \emph{containing $x_i$}. 
In this sense, the problems differ from 2SUM and 3SUM, which are not sequence-to-sequence tasks.

We believe these two tasks are intrinsically ``pairwise'' and ``triple-wise'', respectively;
moreover, since we also believe self-attention performs a fundamentally ``pairwise'' operation,
we will use $\pairid$ and $\triid$ to show a sharp gap in the representation power of self-attention.

\begin{itheorem}
\ 
\begin{enumerate}
	\item A single unit of standard self-attention with input and output MLPs and an $O(d)$-dimensional embedding can compute $\pairid$ (\Cref{thm:2attn-pairid}).
	\item A single layer of standard multi-headed self-attention cannot compute $\triid$ unless its number of heads $H$ or embedding dimension $m$ grows polynomially in $N$ (\Cref{thm:2heads-triid}).
	\item A standard transformer model \emph{can} efficiently compute a modified version of $\triid$ that makes assumptions about embedding structure or locality (Theorems~\ref{thm:2trans-triidhelp} and \ref{thm:2trans-triidlocal}).
	\item Under a generalized notion of ``third-order tensor self-attention'' introduced in \Cref{assec:tensor-attn}, $\triid$ is efficiently computable with a single unit of third-order attention (\Cref{thm:3attn-triid}).
\end{enumerate}
\end{itheorem}

While the above result demonstrates the limitations of multi-headed self-attention and illustrates the importance of learning embeddings with contextual clues, we believe that a stronger result exists.
Specifically, we conjecture that even multi-layer transformers are unable to efficiently compute $\triid$ without hints or augmentation.

\begin{iconjecture}\label[iconjecture]{iconj:2trans-triid}
  Every multi-layer transformer that computes $\triid$ must have width, depth, embedding dimension, or bit complexity at least $N^{\Omega(1)}$.
\end{iconjecture}

In
Appendices~\ref{assec:conj} and \ref{ssec:graph},
we give
a heuristic information-theoretic argument
to support this conjecture,
prove a matching upper-bound,
and finally prove analogous results for graph-augmented transformers with respect to the problem of cycle detection in directed and undirected graphs.

\subsection{Related work}\label{ssec:related}

Several computational and learning-theoretic aspects of transformers, distinct from but related to the specific aims of the present paper, have been mathematically studied in previous works.

\paragraph{Universality and Turing-completeness.}

To demonstrate the power of transformers, universal approximation results for transformers~\citep{yun2020transformers,wcm21}---analogous to results for feedforward 
networks~\citep{hsw89}---establish
the capability for sufficiently large networks to accurately approximate general classes of functions.
Note, however, that the precise minimal dependence of the required size (e.g., number of attention units, depth of the network) as a function of the input size $N$ does not directly follow from such results, and it is complicated by the interleaving of other neural network elements between attention layers.
(Approximate) Turing-completeness of transformers demonstrates their power in a different manner, and such results have been established, first assuming infinite precision weights~\citep{perez2019turing} and later also with finite-precision~\citep{wcm21}.
Such results are more closely aligned with our aims, because Turing machines represent a uniform model of computation on inputs of arbitrary size.
\citet{wcm21} showed that Turing machines that run for $T$ steps can be approximated by ``encoder-decoder'' transformers of depth $\log(T)$ and size polynomial in $\log(T)$ and the number of states of the Turing machine (but the decoder runs for $T$ steps).

\paragraph{Formal language recognition.}

The ubiquity of transformers in natural language understanding has motivated the theoretical study of their ability to recognize formal languages.
On the positive side, \citet{bag20} constructed transformers that recognize counter languages, and \citet{yppn21} showed that transformers of bounded size and depth can recognize Dyck languages that have bounded stack depth.
\citet{lagkz22} showed that the computations of finite-state automata on sequences of length $N$ can be performed by transformers of depth $\log(N)$ and size polynomial in the number of states.
On the negative side, \citet{hahn20} showed limitations of modeling distributions over formal languages (including Dyck) with fixed-size transformers (though this result does not imply quantitative lower bounds on the size of the transformer).
\citet{hahn20}, as well as \citet{haf22}, also establish the inability of ``hard attention'' Transformers to recognize various formal languages and circuit classes by leveraging depth reduction techniques from circuit complexity~\citep{furst1984parity}.

\paragraph{Learnability.}

The sample complexity of learning with low-weight transformers can be obtained using techniques from statistical learning theory and, in turn, establish learnability of certain boolean concept classes (e.g., sparse parity)~\citep{egkz22,bhattamishra2022simplicity} using transformer-based hypothesis classes.
Our $\qsa$ function is inspired by these classes, and we establish concrete size lower bounds for approximation (and hence also learnability) by transformers.
We note that our constructions use bounded-size weights, and hence, in principle, the aforementioned sample complexity results can be combined with our results to analyze empirical risk minimization for learning transformers.
Prior work of \citet{likhosherstov2021expressive} also shows how sparse attention patterns can be achieved by self-attention units (via random projection arguments); however, when specialized to $\qsa$, their construction is suboptimal in terms of the sparsity level $q$.

\paragraph{Related models.}

Graph neural networks (GNNs), like transformers, process very large inputs (graphs) using neural networks that act only on small collections of the input parts (vertex neighborhoods).
Many classes of GNNs are universal approximators for classes of invariant and equivariant functions~\citep{maron2019universality,keriven2019universal}.
At the same time, they are restricted by the distinguishing power of certain graph isomorphism tests~\citep{xu2018powerful,morris2019weisfeiler,chen2019equivalence}, and lower bounds have been established on the network size to approximate such tests~\citep{aamand2022exponentially}.
\citet{loukas2019graph} established a connection between GNNs and the \local~\citep{angluin1980local} and \congest~\citep{peleg2000distributed} models for distributed computation, and hence directly translates lower bounds for \congest---notably cycle detection problems---into size lower bounds for GNNs.
Our lower bounds for cycle detection using transformers also leverage a connection to the \congest\ model.
However, transformers do not have the same limitations as GNNs, since the computational substrate of a transformer does not depend on the input graph in the way it is with GNNs.
Thus, we cannot directly import lower bounds for \congest\ to obtain lower bounds for transformers.

Transformers are also related to other families of invariant and equivariant networks.
Our focus on $\pairid$ and $\triid$ (and related problems) was inspired by the separation results of~\citet{zb22} between models for processing sets: Deep Sets~\citep{qi2017pointnet,zaheer2017deep}, which are ``singleton symmetric'', and the more expressive Relational Pooling networks~\citep{santoro2017simple}, which are only ``pairwise symmetric''.

\subsection{Conclusion and future work}\label{ssec:future}
Our primary contributions are to present a multi-faceted story about transformer approximation: firstly,  $\qsa$ separates transformer models approximation-theoretically from RNNs and MLPs,
and moreover the attention embedding dimension both necessary and sufficient for $\qsa$
scale directly with $q$, meaning $\qsa$ also functions to characterize representation power amongst
different transformers.
Secondly, while single units of self-attention can solve the $\pairid$ task, even wide layers of self-attention with high-dimensional embeddings cannot solve $\triid$, and we believe that deeper models cannot as well.
This question of deeper models is stated as a formal conjecture and addressed heuristically
in \Cref{ssec:graph},
using both information- and communication-theoretic proof techniques, both of which we feel
are significant steps towards a complete proof.

While our investigation is purely approximation-theoretic,
we also include in \Cref{asec:experiments} a preliminary empirical study, showing that attention
can learn $\qsa$ with vastly fewer samples than recurrent networks and MLPs;
we feel this further emphasizes
the fundamental value of $\qsa$, and constitutes an exciting direction for future work.

Beyond the explicit open question
in \Cref{iconj:2trans-triid},
we anticipate that future research could connect the separation results proved in this work to formal linguistic theory and empirical work on attention matrix interpretation. 
This work examines $\pairid$ and $\triid$ because we believe that the former could represent a key primitive for language processing tasks such as co-referencing, while the latter represents a natural extension of the former that likely is \emph{not} necessary for language modeling.
Rather, it may be possible that language modeling performs triple-wise modeling for tasks such as the identification of subject, verb, and object components by relying on pairwise matching constructions and ``clues'' learned within an embedding, such as those encoded in the toy problems $\triidhelp$ and $\triidlocal$.
That is, transformers serve as a useful foundational model for language modeling because of their abilities to integrate contextual clues and pairwise communication, and while they are not extensible to ``purely triple-wise problems,'' most practical sequential problems have some efficient decomposition to pairwise structures that can be easily exploited by these architectures.
Future work by linguists, theoretical computer scientists, and empirical NLP practitioners could assess how foundational our primitives are and study whether there are any practical triple-wise problems that transformer models fail to solve.

\section{Preliminaries}\label{sec:prelims}

Let $\B^d = \{x \in \R^d: \norm[2]{x} \leq 1\}$ denote the unit ball in $\R^d$,
and let $[n] = \{1, 2, \dots, n\}$ denote the first $n$ positive integers.
The expression $\indicator{P}$ equals $1$ if predicate $P$ is true and $0$ otherwise.
The row-wise softmax operator applied to matrix $A \in \R^{N \times M}$ returns \[\sm(A)_{i, j} = \frac{\exp(A_{i, j})}{\sum_{j'=1}^M \exp(A_{i, j'})}.\]

\subsection{Attention units and transformer architectures}\label{ssec:attn-prelims}

We first introduce the concept of self-attention, which is used as the building block of all transformer architectures included in this paper.

\begin{definition}
For input dimension $d$, output dimension $d'$, embedding dimension $m$, precision $p$, and matrices \smash{$Q, K \in \R^{d \times m}$} and \smash{$V \in \R^{d \times d'}$} (encoded using $p$-bit fixed-point numbers), a \emph{self-attention unit} is a function \smash{$f_{Q,K,V}: \R^{N \times d}\to \R^{N \times d'}$} with \[f_{Q, K, V}(X) = \sm(XQ K^\T X^\T) X V.\] 
Let $\attn_{d,m,d', p} = \{f_{Q,K,V}: Q, K, V\}$ denote all such self-attention units.
\end{definition}

Self-attention units can be computed in parallel to create multi-headed attention.

\begin{definition}
For head-count $H$ and self-attention units \smash{$f_1, \dots, f_H \in \attn_{d,m,d',p}$}, a \emph{multi-headed attention layer} is a function \smash{$L_{f_1, \dots, f_H}:  \R^{N \times d}\to \R^{N \times m}$} with \smash{$L_{f_1, \dots, f_H}(X) = \sum_{h=1}^H f_h(X)$}.
Let $\attn_{d, m,d', p}^H$ contain all such $L_{f_1, \dots, f_H}$.
\end{definition}

Transformer models are composed of two components: multi-headed attention layers (as above) and element-wise multi-layer perceptrons.
Due to universal approximation results, we model multi-layer perceptrons as arbitrary functions mapping fixed-precision vectors to themselves.

\begin{definition}
  A \emph{multi-layer perceptron (MLP) layer} is represented by some $\phi: \R^d \to \R^{d'}$, whose real-valued inputs and outputs can be represented using $p$-bit fixed-precision numbers. 
We apply $\phi$ to each element (i.e., row) of an input $X\in \R^{N \times d}$, abusing notation to let \smash{$\phi(X) = (\phi(x_1), \dots, \phi(x_N)) \in \R^{N \times d'}$}. 
Let $\Phi_{d, d', p}$ denote all such MLPs.
\end{definition}

We concatenate the notation of each class of functions to denote function composition. 
For example, for output dimension $d'$, we use $\attn'_{d, m, d', p} :=  \attn_{m, m, d', p} \Phi_{d, m, p}$ and $\attn^{H \prime}_{d, m, d', p} :=  \attn^{H}_{m, m, d', p} \Phi_{d, m, p}$ to represent single-headed and multi-headed attention units with an input MLP respectively. 
(The capabilities and limitations of these models are studied in Section~\ref{sec:averaging}.)
For depth $D$, we let 
\[\tr_{d, m, d', p}^{D, H} = \Phi_{m, d', p} (\attn^{H \prime}_{m, m,m, p} )^{D-1} \attn^{H \prime}_{d, m,m, p} \] 
represent a full transformer model comprising $D$ layers of $H$-headed self-attention with interspersed MLPs.

While two key features of transformer architectures---the residual connection and the positional embedding---are conspicuously missing from this formalism, the two can be implemented easily under the framework. 
We can include a positional embedding by encoding the index as a coordinate of the input, i.e. $x_{i, 1} = i$. 
Then, the subsequent MLP transformation $\phi(X)$ can incorporate $i$ suitably into the embedding.
A residual connection can be included additively as input to a multi-layer perceptron layer (as is standard) by implementing an ``approximate identity'' attention head $f$ with $Q,K$ and $V = I_m$ set to ensure that $f(X) \approx X$.\footnote{A simple construction involves letting $XQ = XK$ with iid Gaussian columns fixed for every index $i$. Then, the diagonals of $XQ K^\T X^\T$ are far larger than all other entries and its softmax is approximately $I_N$.}

We periodically consider transformers implemented with real-valued arithmetic with infinite bit complexity; in those cases, we omit the bit complexity $p$ from the notation.

Finally, we assume for the proof of \Cref{thm:qsa-ub-inf-prec} that the model is permitted to append a single \texttt{<END>} token at the end of a sequence.
That is, we say that a model \smash{$f \in \tr_{d, m, d', p}^{D, H}$} represents a target \smash{$h: \R^{N \times d} \to \R^{N \times d'}$} if \smash{$f(X')_{1:N} = g(X)$} when \smash{$X' = (x_1, \dots, x_N, x')$} for constant-valued $x' \in  \R^d$.

\section{Sparse averaging with attention units}\label{sec:averaging}

We present the sparse averaging task to highlight the ability of transformer architectures to simulate a wide range of meaningful interactions between input elements.
This task demonstrates how the embedding dimension of a self-attention unit modulates the expressive capabilities of the architecture, while showcasing the inabilities of fully-connected and recurrent neural networks to capture similar interactions (see \Cref{asec:other-nns}).

\begin{definition}\label{def:qsa}
For sparsity $q$, problem dimension $d'$, and input dimension $d = d' + q + 1$, consider an input $X = (x_1, \dots, x_N) \in \R^{N \times d}$ with $x_i = (z_i; y_i; i)$ for $z_i \in \B^{d'}$ and $y_i \in \binom{[N]}{q}$.\footnote{We may encode a $q$ element subset of $[N]$ as a vector in $[N]^q$ constrained to have distinct components.}
Let the \emph{$q$-sparse average} be \[\qsa(X) = \paren{\frac1q \sum_{j=1}^q z_{y_{i, j}}}_{i \in [N]}.\]
For accuracy $\epsilon > 0$, a function $f: \R^{N \times d} \to \R^{N \times d'}$ \emph{$\epsilon$-approximates $\qsa$} if for all $X$, \[\max_{i \in [N]} \norm[2]{f(X)_i - \qsa(X)_i} \leq \epsilon.\]
\end{definition} 

\Cref{fig:qsa-ub-bound-prec-left} visualizes the sparse averaging task as a bipartite graph between subsets $y_i$ and elements $z_i$ with corresponding averages.
Theorems~\ref{thm:qsa-ub-bound-prec} and \ref{thm:qsa-lb-bound-prec} jointly show that the minimum embedding dimension $m$ of single self-attention units \smash{$\attn_{d, m, d', p}'$} that \smash{$O(\frac1q)$}-approximate $\qsa$ scales linearly with $q$.
We believe that the sparse averaging problem is thus a canonical problem establishing the representational capabilities and inductive biases of self-attention units.

\subsection{Self-attention can approximate $\qsa$ when $m \gtrsim q$}

Our principle positive result shows that the sparse averaging task $\qsa$ can be approximately solved using fixed-precision arithmetic self-attention units with embedding dimension $m$ growing with $q \log N$.

\begin{restatable}[Fixed-precision]{theorem}{thmqsaubboundprec}\label{thm:qsa-ub-bound-prec}
For any $N$, any $m \geq \Omega( d' + q \log N)$, any $\epsilon \in (0, 1)$, and 
$p = \Omega(\log(\frac{q}{\epsilon} \log N))$, 
there exists some $f \in \attn_{d, m, d', p}'$ that $\epsilon$-approximates $\qsa$.
\end{restatable}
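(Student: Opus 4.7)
The plan is to construct a single self-attention unit $f \in \attn'_{d,m,d',p}$ that, for each query index $i$, places approximately uniform softmax weight $1/q$ on the $q$ positions $j \in y_i$ and negligible weight elsewhere; the output $\sum_j w_{ij} z_j$ will then approximate $\frac{1}{q}\sum_{j \in y_i} z_j = \qsa(X)_i$. I will use the MLP $\phi \in \Phi_{d, m, p}$ to re-embed each input $x_i = (z_i; y_i; i)$ into a row of $X'$ that packs three pieces of information: the data $z_i$, an \emph{index identifier} $u_i \in \R^{m_0}$ depending only on $i$, and a \emph{subset identifier} $v_i = \sum_{j \in y_i} u_j$ computable locally from $y_i$. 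The matrices $Q, K$ will be chosen so that the query row carries $\tau v_i$ and the key row carries $u_j$ (for a temperature $\tau$ to be set), giving $(X'Q K^\T {X'}^\T)_{ij} = \tau \langle v_i, u_j \rangle$, while $V$ will simply extract $z_j$.

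The first key step is to exhibit identifiers $u_1,\dots,u_N \in \R^{m_0}$ with $m_0 = \Theta(q \log N)$ for which $\langle v_i, u_j \rangle = \mathbbm{1}[j \in y_i] + \xi_{ij}$ with $|\xi_{ij}|$ uniformly below some small absolute constant $c$. I would draw the $u_j$ i.i.d.\ as uniform $\pm 1/\sqrt{m_0}$ vectors; for each fixed pair $(j, y) \in [N] \times \binom{[N]}{q}$, conditioning on $u_j$ makes the deviation $\sum_{k \in y \setminus \{j\}} \langle u_k, u_j \rangle$ a sum of $O(q)$ independent subgaussian terms, which a Hoeffding bound plus a union bound over all such pairs controls with positive probability. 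The probabilistic method then pins down a deterministic collection of identifiers that the MLP reproduces via a lookup on $i$ and $y_i$.

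Next, I will analyze the softmax with temperature $\tau = \Theta(\log(N/(q\epsilon)))$: the exponent gap of at least $\tau(1-2c)$ between $j \in y_i$ and $j \notin y_i$ drives the total softmax mass on $[N]\setminus y_i$ below $\epsilon/2$. To show that the weights \emph{within} $y_i$ are uniform up to relative error $O(\epsilon)$, I will argue that differences $\xi_{ij}-\xi_{ij'}$ for $j, j' \in y_i$ share most of their summands and so concentrate at a finer scale than the individual $\xi_{ij}$'s, keeping $\tau \cdot (\text{within-}y_i\text{ spread})$ at $O(\epsilon)$. Combining these two bounds and using $\|z_j\|_2 \le 1$ will then control the per-row $\ell_2$ output error by $\epsilon$. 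A short wrap-up will verify that every arithmetic step---the MLP lookup of identifiers, the projections $Q,K,V$, the softmax, and the aggregation $X'V$---can be carried out in $p = \Omega(\log(q\log N/\epsilon))$-bit fixed-point arithmetic with cumulative rounding error at most $O(\epsilon)$.

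The main obstacle will be achieving the stated linear-in-$q$ dimension bound $m_0 = O(q \log N)$: a blanket union bound over all $N^{\Theta(q)}$ pairs $(j,y)$ naively forces $m_0 = \Theta(q^2 \log N)$, so the argument must either exploit finer structure in the cross terms (for instance, cancellation in the within-$y_i$ sums) or replace the i.i.d.\ random vectors with a structured, code-based construction. Sharpening this step, while simultaneously ensuring that within-$y_i$ softmax uniformity survives multiplication by $\tau$, is the crux of the proof.
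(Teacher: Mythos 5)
Your high-level plan matches the paper's construction almost exactly: the MLP packs each row with $(z_i, \text{a subset identifier}, \text{an index identifier})$, $Q$ and $K$ extract the subset and index identifiers respectively, $V$ extracts $z_i$, a large temperature $\tau$ sharpens the softmax, and the identifiers come from random Rademacher vectors in $\{\pm 1/\sqrt{m'}\}^{m'}$. You also correctly identify the crux: with the naive subset identifier $v_i = \sum_{j \in y_i} u_j$, a union bound over the $\sim N^{q+1}$ pairs $(j, y)$ forces $m' = \Theta(q^2 \log N)$ rather than the stated $\Theta(q \log N)$, and separately the within-$y_i$ spread of $\langle v_i, u_j\rangle$ is of constant order and would not survive multiplication by $\tau = \Theta(\log(N/\epsilon))$.

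The missing idea is the compressed-sensing \emph{dual certificate}. The paper does not use the raw sum $\sum_{j\in y} u_j$ as the query vector; it invokes the $(q,\delta)$-restricted isometry and orthogonality property (RIP), which a random Rademacher matrix satisfies already at $m' = \Theta(q\log N / \delta^2)$ (\Cref{lem:radrip}), and then applies \Cref{lem:dualreconstruct} (Cand\`es--Tao, Lemma 2.2) to produce, for each support $y$, a vector $w_y$ (essentially $U_y(U_y^\T U_y)^{-1}\vec 1$) with $\|w_y\|_2 \le 2\sqrt{q}$, $\langle u_{i'}, w_y\rangle = 1$ \emph{exactly} for $i'\in y$, and $|\langle u_{i'}, w_y\rangle| \le 1/2$ for $i'\notin y$. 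This resolves both of your obstacles at once: the RIP is established by concentration of $q\times q$ operator norms (one support at a time, then a union bound over supports), which is why it costs only $q\log N$ rather than $q^2\log N$; and the Gram-inverse normalization makes the on-support inner products equal to $1$ exactly, so the softmax weights inside $y_i$ are uniform up to quantization error alone, with no ``within-$y_i$ spread'' term to control. Your query vector is then $\alpha\,\widetilde{w}_{y_i}$ (a $p$-bit truncation of $w_y$, with $\alpha = \lceil 2\log(4N/\epsilon)\rceil$), and the softmax analysis goes through as you sketched. Without the dual-certificate step, your argument as written proves a strictly weaker bound, which you yourself flag; the RIP-plus-dual-certificate machinery is precisely what closes that gap.
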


\begin{figure}
\centering
\begin{subfigure}[t]{0.22\textwidth}
    \centering
\includegraphics[width=\textwidth]{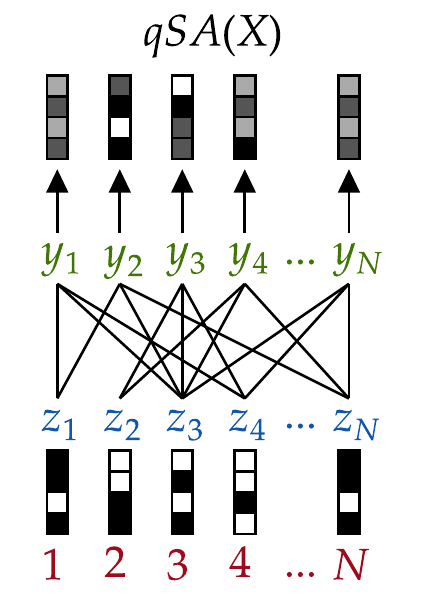}
    \caption{Bipartite graph relating $y_i$ and $z_i$ in $\qsa(X)$.}
    \label{fig:qsa-ub-bound-prec-left}
  \end{subfigure}
  \hfill
  \begin{subfigure}[t]{0.45\textwidth}
    \centering
\includegraphics[width=\textwidth]{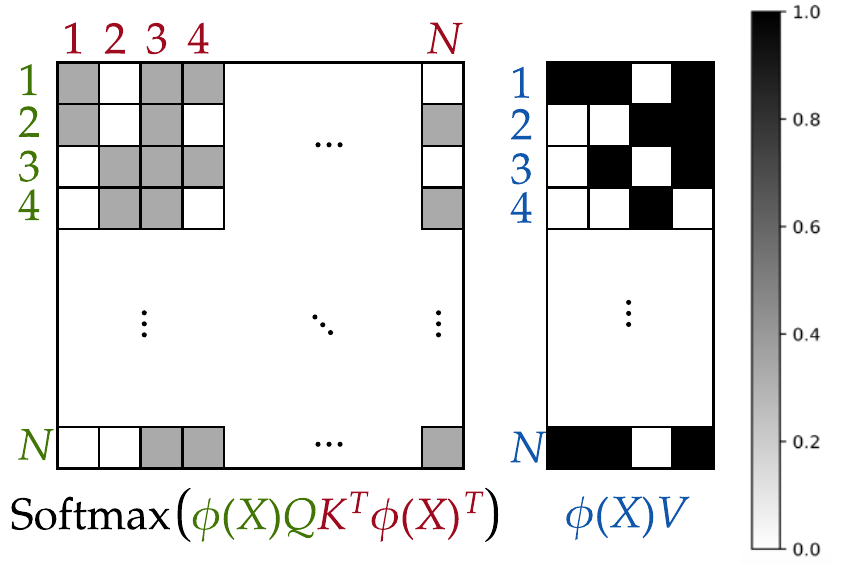}
    \caption{Attention and value matrices used for the self-attention construction of $\qsa(X)$ in \Cref{thm:qsa-ub-bound-prec}.}
    \label{fig:qsa-ub-bound-prec-center}
  \end{subfigure}\hfill
  \begin{subfigure}[t]{0.28\textwidth}
    \centering
\includegraphics[width=\textwidth]{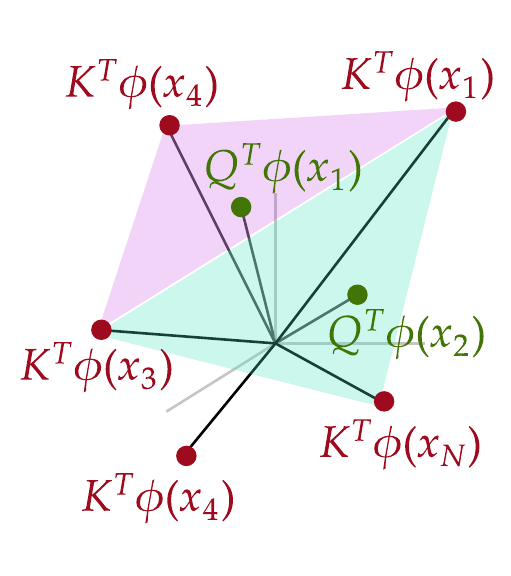}
    \caption{Key and query embeddings that produce the self-attention matrix in (b).}
    \label{fig:qsa-ub-bound-prec-right}
  \end{subfigure}\caption{A visualization of the $\qsa$ function outputs given a sequence of inputs $(z_i; y_i; i)_{i \in [N]}$ as a bipartite graph between subsets $y_i$ and vectors $z_i$ (a), and of the attention matrix (b) and underlying embeddings (c) that produce the self-attention construction in \Cref{thm:qsa-ub-bound-prec}.}
\label{fig:qsa-ub-bound-prec}
\end{figure}

\begin{figure}[t!]
  \centering
  \begin{subfigure}[t]{0.33\textwidth}
    \centering
\includegraphics[width=\textwidth]{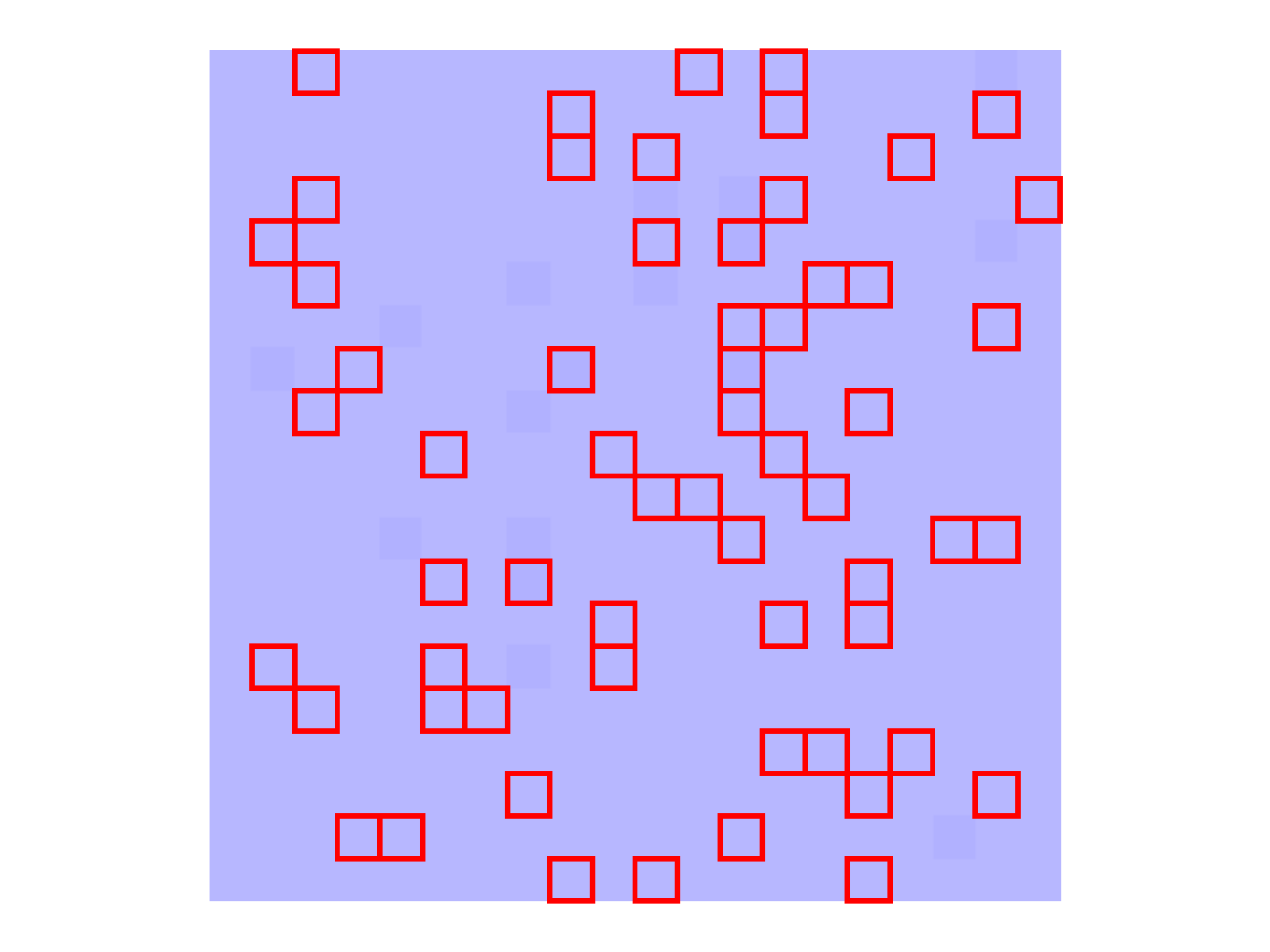}
    \caption{$T=0$.}
    \label{fig:align:2:1}
  \end{subfigure}
  \hfill
  \begin{subfigure}[t]{0.33\textwidth}
    \centering
\includegraphics[width=\textwidth]{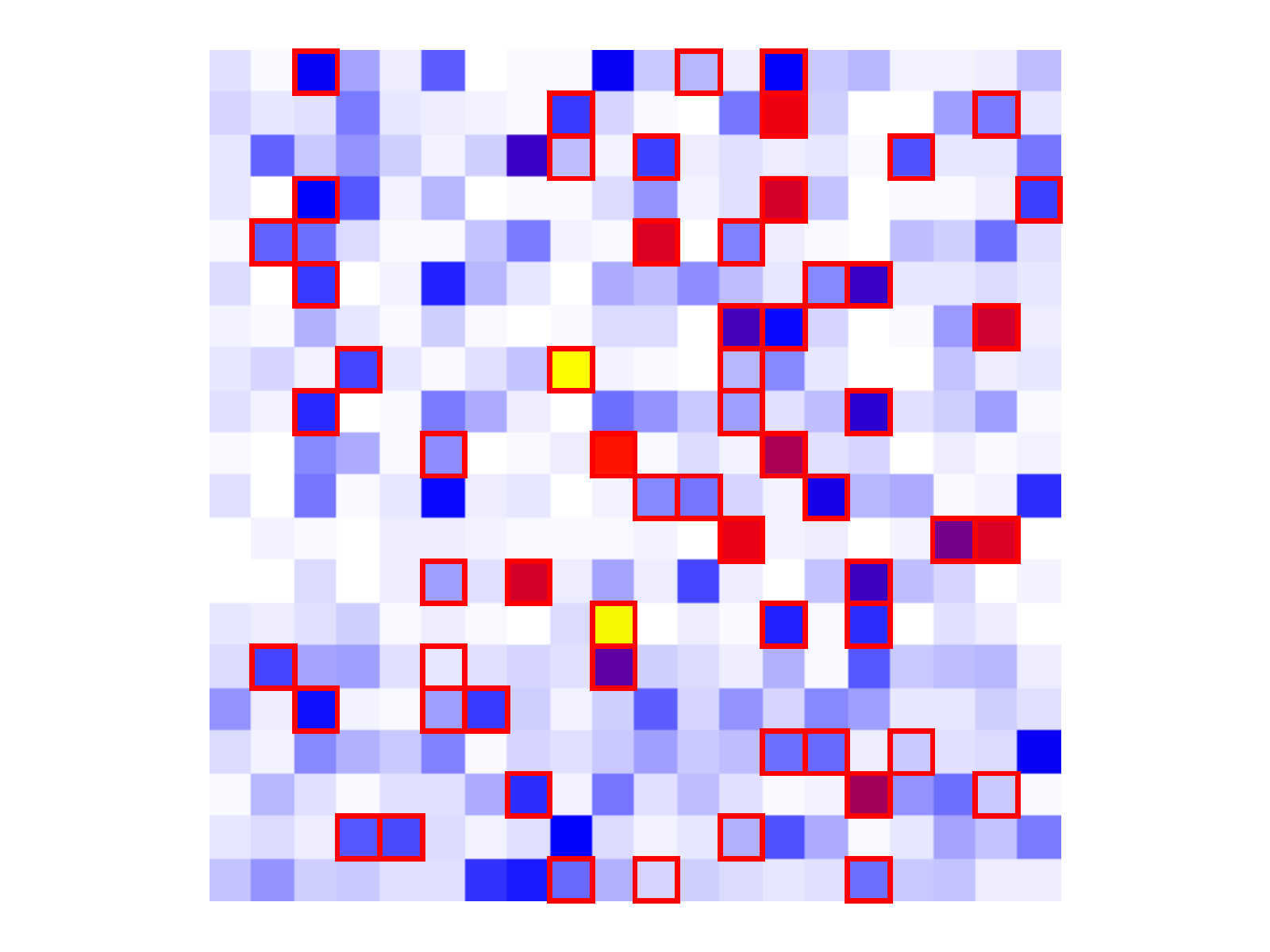}
    \caption{$T=1000$.}
    \label{fig:align:2:2}
  \end{subfigure}\hfill
  \begin{subfigure}[t]{0.33\textwidth}
    \centering
\includegraphics[width=\textwidth]{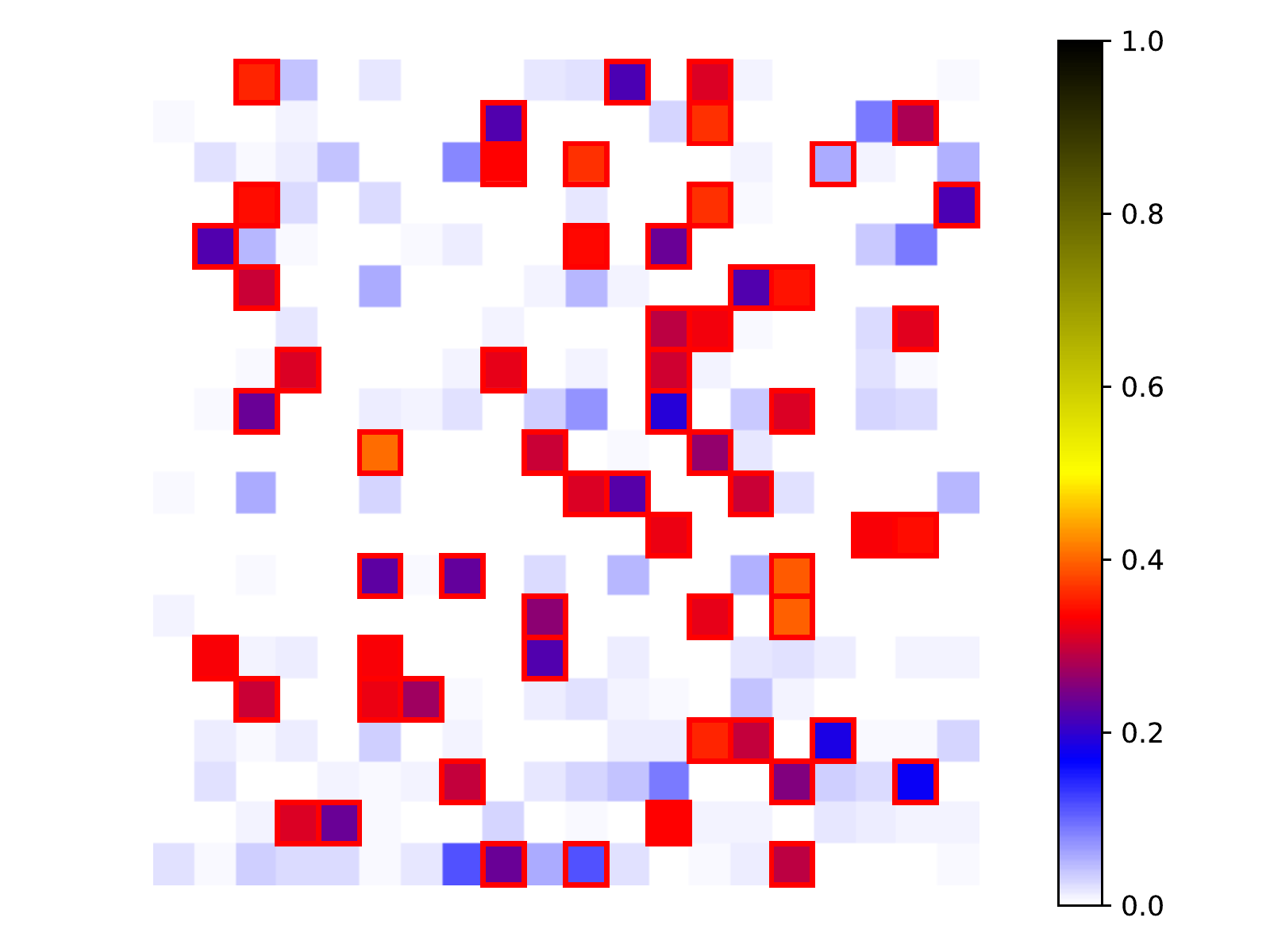}
    \caption{$T=40000$.}
    \label{fig:align:2:6}
  \end{subfigure}\caption{Attention matrix $\sm(\phi(X) Q K^\T \phi(X)^\T) \in\R^{20\times20}$ for a fixed example
    after $T$ epochs of training a self-attention unit to solve $\qsa$ for $q = 3$.
    Each row $i$ corresponds to subset $y_i$, and each cell $j \in y_i$ is outlined in red.
See Appendix~\ref{asec:experiments} for experimental details.
  }
  \label{fig:attn-mat}
\end{figure}

While the full proof appears in \Cref{assec:bound-prec}, we briefly sketch the argument here.
Because the output of a self-attention unit is a convex combination of rows of the value matrix $\phi(X) V \in \R^{N \times d'}$, a natural way to approximate $\qsa$ with a unit of self-attention is to let each value be the corresponding vector in the average (i.e.~ $V^\T \phi(x_i) = z_i$) and choose the key and query functions in order to ensure that the attention matrix satisfies \[\sm(\phi(X) QK^\T \phi(X)^\T)_{i, j} \approx \begin{cases} \frac1q & \text{if $j \in y_i$,} \\ 0 & \text{otherwise.}\end{cases}\] 
To do so, let each key $K^\T \phi(x_i)$ represent a fixed vertex on a convex polytope, which depends only on index $i$ and is constructed from random binary vectors.
We select each query $Q^\T \phi(x_i)$ to ensure that $\phi(x_i)^\T Q K^\T \phi(x_j)$ is a fixed large value if $j \in y_i$ and a slightly smaller value otherwise.
We obtain the precise query, key, and value embeddings by employing tools from dual certificate analysis from the theory of compressed sensing.

We visualize this construction in Figure~\ref{fig:qsa-ub-bound-prec-center} and ~\ref{fig:qsa-ub-bound-prec-right} for $q = 3$ and $d' = 4$, which presents the associated attention and value matrices necessary for the construction, and plots a polytope of keys (red dots) with each face corresponding to each subset $y_i$ (green dots).
The construction is empirically relevant; \Cref{fig:attn-mat} shows that a unit of self-attention trained on data generated by the $\qsa$ task recovers a similar attention matrix to the one stipulated in our construction and visualized in \Cref{fig:qsa-ub-bound-prec-center}.

The logarithmic dependence of the embedding dimension $m$ on the sequence length $N$ can be eliminated by considering self-attention units with real-valued arithmetic with infinite bit complexity.
\begin{restatable}[Infinite-precision]{theorem}{thmqsaubinfprec}\label{thm:qsa-ub-inf-prec}
For fixed $N$, $m \geq \Omega(d' + q)$ and $\epsilon > 0$, there exists some $f \in \attn_{d, m, d'}'$ that $\epsilon$-approximates $\qsa$.
\end{restatable}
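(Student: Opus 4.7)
The plan is to adapt the polytope/dual-certificate construction of \Cref{thm:qsa-ub-bound-prec} by replacing the $O(q\log N)$-dimensional random binary keys (needed to tolerate bounded-precision arithmetic) with a moment-curve encoding of dimension only $O(q)$, and then exploiting unbounded softmax temperature to drive the attention weights to the target distribution. Both moves are illegal in the fixed-precision model, and together they are precisely what produced the extra $\log N$ factor there.

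Concretely, I would take $\phi \in \Phi_{d,m}$ so that $\phi(x_i)$ is the concatenation of (i)~the data vector $z_i \in \B^{d'}$, (ii)~the moment-curve key $k_i := (1, i, i^2, \dots, i^{2q}) \in \R^{2q+1}$, and (iii)~the monomial-basis coefficient vector $a^{(i)} \in \R^{2q+1}$ of the polynomial $P_{y_i}(x) := 1 - c \prod_{j \in y_i}(x - j)^2$, for a constant $c \in (0, 1)$. This fixes $m = d' + 2(2q+1) = O(d' + q)$. Take $Q$, $K$, $V$ to be the coordinate projections extracting $T \cdot a^{(i)}$, $k_i$, and $z_i$ respectively from $\phi(x_i)$, where $T > 0$ is a scalar temperature, so that the logit factors as $\phi(x_i)^\T Q K^\T \phi(x_j) = T \cdot P_{y_i}(j)$. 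By construction $P_{y_i}(j) = 1$ when $j \in y_i$; and when $j \in [N] \setminus y_i$, the product $\prod_{k \in y_i}(j-k)^2$ is a product of squared nonzero integers and hence at least $1$, so $P_{y_i}(j) \le 1 - c$. The logits therefore enjoy a uniform gap of at least $Tc$ between positions inside and outside $y_i$, independently of $N$ and of the particular subset.

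A direct softmax calculation then yields $\sm(\cdot)_{i,j} \ge 1 / (q + (N - q) e^{-Tc})$ for $j \in y_i$ and $\sm(\cdot)_{i,j} \le e^{-Tc} / (q + (N - q) e^{-Tc})$ otherwise. Choosing $T = \Theta(\log(N/\epsilon)/c)$ brings both bounds within $\epsilon/q$ of the ideal value $\indicator{j \in y_i}/q$, and the $\epsilon$-approximation of $\qsa$ follows via the triangle inequality together with $\|V^\T \phi(x_j)\|_2 = \|z_j\|_2 \le 1$.

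The main subtlety is simply checking that the construction lies in $\attn'_{d, m, d'}$: the element-wise MLP must produce $a^{(i)}$ as a nonlinear function of $y_i$, but since $\Phi_{d, m}$ permits arbitrary real-valued element-wise functions in the infinite-precision setting, this is immediate, and the unbounded magnitudes of $T \cdot a^{(i)}$ are unrestricted. The \texttt{<END>} token allowed by \Cref{ssec:attn-prelims} is not strictly required for this approach; one could alternatively forgo temperature scaling and instead use the appended null position as a sink to absorb the non-$y_i$ softmax mass, but the polynomial-gap argument above already matches the $\Omega(d' + q)$ lower bound without it.
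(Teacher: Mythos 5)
Your construction is essentially the paper's own argument written out in coordinates: the paper proves this bound via the neighborliness of the cyclic polytope (Fact~\ref{fact:cycl} instantiated in Lemma~\ref{lem:neighbor-qsa}), using the moment curve as the key embedding and the supporting hyperplane of each $q$-face as the query, and your dual polynomial $P_{y_i}(x) = 1 - c\prod_{j \in y_i}(x-j)^2$ is precisely the explicit certificate that witnesses that neighborliness. The two proofs are the same idea at different levels of abstraction; your quantified softmax calculation with uniform gap $c$ and temperature $T = \Theta(\log(N/\epsilon)/c)$ simply makes concrete the paper's ``sufficiently large choice of $\alpha$.''
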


The proof of \Cref{thm:qsa-ub-inf-prec} employs a similar polytope-based construction in \Cref{assec:inf-prec}, relying on a cyclic polytope rather than one drawn from discrete boolean vectors.
\Cref{thm:qsa-lb-inf-prec} proves the near-optimality of \emph{that} bound by employing a geometric argument to show that a variant of $\qsa$ can only be approximated by a restricted family of self-attention units with a sufficiently high-dimensional embedding.

\subsection{Self-attention cannot approximate $\qsa$ when $m \lesssim q$}

We show that the construction used to prove \Cref{thm:qsa-ub-bound-prec} is nearly optimal.

\begin{restatable}{theorem}{thmqsalbboundprec}\label{thm:qsa-lb-bound-prec}
For any sufficiently large $q$, any $N \geq 2q + 1$, and any $d' \geq 1$, there exists a universal constant $c$ such that if $mp \leq cq$, then no $f \in \tr_{d, m, d', p}^{1, 1}$ exists that $\frac1{2q}$-approximates $\qsa$.
\end{restatable}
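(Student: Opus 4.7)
The plan is to prove the lower bound by a reduction from two-party set disjointness, exploiting the fact that a single attention head has a low-bandwidth decomposition in terms of partial sums.

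First, I would encode a disjointness instance into $\qsa$. Suppose Alice holds $a \in \{0,1\}^q$ and Bob holds $b \in \{0,1\}^q$ and they wish to decide whether $\langle a,b\rangle = 0$. Build an input $X \in \R^{N \times d}$ with $N = 2q+1$ and $d' = 1$ (the case $d' \geq 1$ reduces to this by embedding into the first coordinate and using that an $\ell_2$ approximation guarantee implies a coordinate-wise one) as follows: for $j \in [q]$ set $z_j = a_j$, for $j \in \{q+1,\ldots,2q\}$ set $z_j = 0$, and for $j = 2q+1$ set $z_{2q+1} = 0$ and let $y_{2q+1}$ be the size-$q$ subset containing $j$ if $b_j = 1$ and $q+j$ otherwise (with arbitrary $y_j$ for the other positions). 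Then
\[
\qsa(X)_{2q+1} \;=\; \frac{1}{q}\sum_{k \in y_{2q+1}} z_k \;=\; \frac{\langle a, b\rangle}{q},
\]
so any $\tfrac{1}{2q}$-approximation distinguishes $\langle a,b\rangle = 0$ from $\langle a,b\rangle \geq 1$, thereby solving disjointness.

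Next, I would suppose for contradiction that some $f \in \tr^{1,1}_{d,m,d',p}$ is a $\tfrac{1}{2q}$-approximation and build a two-party protocol that computes $f(X)_{2q+1}$ under the split where Alice owns positions $[2q]$ and Bob owns $\{2q+1\}$. Both the input MLP $\phi_1$ and the output MLP $\phi_2$ in the definition of $\tr^{1,1}_{d,m,d',p}$ act row-wise, so each party locally computes all her own key, query, and value embeddings. The single attention head's output at Bob's position is
\[
\frac{\sum_{j=1}^{N} \exp(q_{2q+1}^{\T} k_j)\, v_j}{\sum_{j=1}^{N} \exp(q_{2q+1}^{\T} k_j)},
\]
whose numerator and denominator decompose additively over $j$. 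Bob therefore transmits his $m$-dimensional query $q_{2q+1}$ to Alice ($mp$ bits); Alice replies with her partial scalar sum and partial $m$-dimensional vector sum over $j \in [2q]$ ($O(mp)$ bits under $p$-bit fixed-point arithmetic); Bob combines these with his own contribution, normalizes, and applies $\phi_2$ locally. Total communication: $O(mp)$ bits.

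Combined with the standard $\Omega(q)$ communication complexity lower bound for disjointness on $\{0,1\}^q$, this forces $mp \geq c q$ for a universal constant $c > 0$, which is the contrapositive of the stated claim. The main technical obstacle is precision accounting: the unnormalized softmax partial sums Alice must send can be exponentially large in magnitude, so one has to argue that in the $p$-bit fixed-point model they still fit in $O(p)$ bits per scalar, e.g.\ by applying the standard ``subtract the maximum logit'' normalization (which leaves the softmax output unchanged) before sending. A secondary care point is checking that the $z_j$'s used in the encoding lie in $\B^{d'}$ and that $y_{2q+1}$ is a valid size-$q$ subset, both of which hold by construction.
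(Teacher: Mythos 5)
Your proposal is essentially the same strategy as the paper's: encode set disjointness on $\{0,1\}^q$ into a $\qsa$ instance so that output coordinate $2q+1$ reveals $\disj(a,b)$, then argue that a single attention head yields an $O(mp)$-bit two-party protocol, contradicting the $\Omega(q)$ lower bound for $\disj$. Your encoding is a valid alternative to the paper's (you put $a$ in the data parts $z_1,\dots,z_q$ and $b$ in the index set $y_{2q+1}$; the paper puts $a$ in $y_{2q+1}$ and $b$ in the $z_j$'s), and your arithmetic $\qsa(X)_{2q+1} = \langle a,b\rangle/q$ is correct and gives the required $\tfrac1q$ separation.

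The one place you depart from the paper, and where your write-up is not airtight, is the protocol itself. You use a two-round partial-sums protocol: Bob sends $q_{2q+1}$, Alice replies with the unnormalized partial numerator and denominator over $j\in[2q]$, and Bob finishes. You correctly flag the precision issue, but ``subtract the maximum logit'' does not by itself resolve it: Alice only knows \emph{her} maximum logit, and Bob's single logit may dominate or be dominated by hers by an arbitrary margin, so Alice's renormalized partials may need to be combined with Bob's term across an exponentially large dynamic range; making this rigorous requires arguing that the rounding error is either absorbed or irrelevant case-by-case. This accounting is avoidable. Since in your encoding Bob owns only the single position $2q+1$, have Bob instead send Alice his query, key, \emph{and} value embeddings $Q^\T\phi(x_{2q+1})$, $K^\T\phi(x_{2q+1})$, $V^\T\phi(x_{2q+1}) \in \R^m$ (still $O(mp)$ bits). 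Alice then holds all $N$ embeddings and computes $f(X)_{2q+1}$ exactly with no transmitted intermediate partials, then sends Bob a single bit. This is the same one-shot structure as the paper's protocol (which, in the opposite direction, has Alice send the embeddings of position $2q+1$ to Bob), and it eliminates the precision concern entirely. Two small additional points worth noting: for $N>2q+1$ the remaining positions must be fixed to values known to both parties (and either party folds them into the computation), and one should check, as you do, that $z_j\in\B^{d'}$ and that $y_{2q+1}$ is a legitimate size-$q$ subset of $[N]$.
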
 
(By choosing  $p = O(\log (q\log N))$, \Cref{thm:qsa-ub-bound-prec} is shown to be optimal up to logarithmic factors of $q$ and doubly-logarithmic factors of $N$.)

The proof of \Cref{thm:qsa-lb-bound-prec} employs a standard communication complexity
argument based on a reduction from
the following \emph{set disjointness} problem in the two-party communication model, in which each party possesses a subset of an $n$ element domain (encoded as $n$-bit strings), and they wish to jointly determine whether their subsets are disjoint.
We note that communication complexity is commonly-used technique for proving lower bounds on the representational power of circuits and feedforward neural networks~\citep[see, e.g.,][]{karchmer1988monotone,ben2002limitations,martens2013representational,vardi2021size}.

\begin{fact}[Set disjointness communication lower bound \citep{yao1979some}]\label{fact:disj}
Suppose Alice and Bob are given inputs $a, b \in \bit^n$, respectively, with the goal of jointly computing $\disj(a, b) = \max_i a_i b_i$ by alternately sending a single bit message to the other party over a sequence of communication rounds.
Any deterministic protocol for computing $\disj(a, b)$ requires at least $n$ rounds of communication.
\end{fact}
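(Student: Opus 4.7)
The plan is to prove the set-disjointness lower bound via the standard fooling-set method, adapted to the specific communication model of the statement (one bit per round, deterministic). First I would record the key structural property of any deterministic protocol: if two input pairs $(a_1,b_1)$ and $(a_2,b_2)$ produce the same transcript, then the ``cross'' pairs $(a_1,b_2)$ and $(a_2,b_1)$ also produce that transcript. This follows by induction on rounds, since the bit Alice sends in round $t$ depends only on her input $a$ and the bits received so far (and symmetrically for Bob); hence the set of inputs producing any fixed transcript is a combinatorial rectangle $A \times B \subseteq \bit^n \times \bit^n$, and the protocol's output must therefore be constant on that rectangle.

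Next I would exhibit a fooling set. Let
\[ S = \set{(a, \bar a) : a \in \bit^n},\]
where $\bar a$ denotes the bitwise complement. Clearly $|S| = 2^n$, and $\disj(a, \bar a) = \max_i a_i (1 - a_i) = 0$ for every $a$. Suppose, toward a contradiction, that two distinct pairs $(a_1, \bar a_1), (a_2, \bar a_2) \in S$ yield identical transcripts. By the rectangle property from the previous paragraph, the cross pair $(a_1, \bar a_2)$ also yields this transcript, so the protocol's answer on $(a_1, \bar a_2)$ equals the common value $0$. But $a_1 \ne a_2$ implies some coordinate $i$ with, say, $a_{1,i} = 1$ and $a_{2,i} = 0$; then $\bar a_{2,i} = 1$, so $\disj(a_1, \bar a_2) = 1$, contradicting correctness.

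Finally I would translate this into the round bound. After $T$ rounds of single-bit messages there are at most $2^T$ possible transcripts, and the $2^n$ inputs in $S$ must all produce distinct transcripts (by the argument above). Therefore $2^T \geq 2^n$, i.e., $T \geq n$, which is the desired lower bound.

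The main conceptual step is the rectangle/fooling-set property; the rest is bookkeeping. The only subtlety worth flagging is ensuring the rectangle argument is carried out with respect to the exact communication model used here (strictly alternating single-bit messages), rather than the more common ``arbitrary-length message per round'' setup; the induction on rounds goes through unchanged, since fixing a transcript prefix still partitions each party's inputs into an equivalence class depending only on that party's input.
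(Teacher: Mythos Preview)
Your argument is correct and is the standard fooling-set proof of the deterministic set-disjointness lower bound. One minor point: when you write ``with, say, $a_{1,i}=1$ and $a_{2,i}=0$,'' you are implicitly appealing to symmetry---if instead $a_{1,i}=0$ and $a_{2,i}=1$, it is the other cross pair $(a_2,\bar a_1)$ that witnesses $\disj=1$. Either way the rectangle contains a $1$-input, so the contradiction stands.

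There is nothing to compare against in the paper: the statement is recorded as a cited \emph{fact} (with reference to Yao~1979) and is used as a black box in the subsequent reductions; the paper gives no proof of it. Your write-up is a faithful rendering of the classical argument and would serve perfectly well as a self-contained justification.
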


\begin{figure}
\centering
\includegraphics[width=0.9\textwidth]{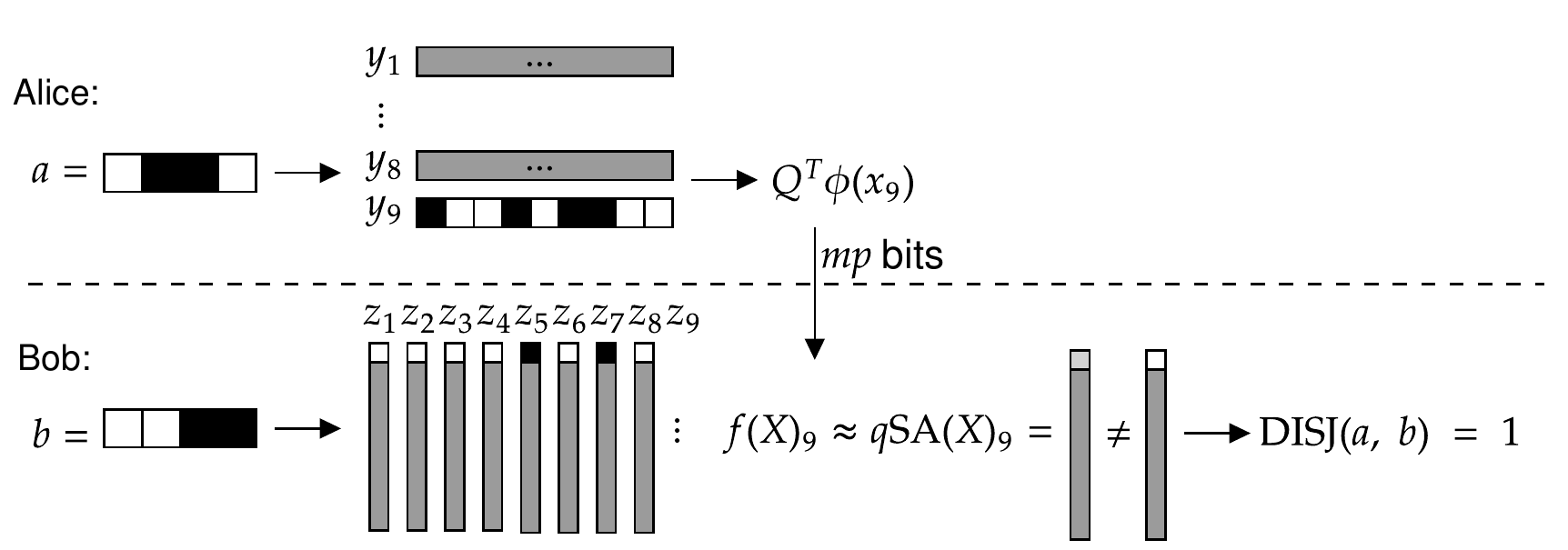}
\caption{The $mp$-bit communication protocol used to reduce the hardness of computing $\qsa$ with a single unit of self-attention to the hardness of solving the $\disj$ communication problem for the proof of \Cref{thm:qsa-lb-bound-prec} for $q = 4$.}
\label{fig:qsa-lb-bound-prec}
\end{figure}

Our proof designs a communication protocol that Alice and Bob use to jointly compute $\disj(a, b)$ when $n = q$ in $O(mp)$ rounds of communication, under the assumption that such an $f$ exists that closely approximates $\qsa$.  
\begin{itemize}
\item Alice encodes her input $a$ in a single subset by letting $y_{2q+1} = \{2i + a_i - 1: i \in [q]\}$.
\item Bob uses his input $b$ to assign $z_{2i - 1}$ to $2b_i - 1$ and $z_{2i} = -1$ for all $i \in [q]$.
\item All other input components are set to constant values known by both parties. 
\end{itemize}
Alice sends her $mp$-bit query embedding $Q^\T \phi(x_{2q+1})$ bit-by-bit to Bob, who approximately computes $\qsa$ by determining the outcome of $f$.
The crux of the reduction shows that $\qsa(X)_{2q + 1} = -1$ if and only if $a_i b_i = 0$ for all $i \in [q]$, which allows Bob to determine $\disj(a, b)$.

We visualize the protocol in \Cref{fig:qsa-lb-bound-prec} and give the proof in \Cref{assec:lb-bound-prec}. 
The proofs of Theorems~\ref{thm:2heads-triid}, \ref{thm:rnn}, \ref{thm:2trans-fivecycle}, and \ref{thm:2trans-tricycle} employ similar communication complexity reductions to $\disj$.

\section{Standard transformer models can only efficiently represent intrinsically pairwise functions}\label{sec:threewise}

In this section, we argue that the standard transformer architecture is unable to efficiently represent functions that do not decompose into a small number of pairwise-symmetric functions.
We do this by contrasting the (in)approximability of intrinsically pairwise and triple-wise functions, respectively $\pairid$ and $\triid$ (defined in~\eqref{eq:pairid_defn} and~\eqref{eq:triid_defn}), and their variants.

\subsection{Efficient computation of $\pairid$ with standard self-attention}\label{ssec:2attn-pairid}

We first show that $\pairid$ can be efficiently approximated by a single standard (pairwise) self-attention unit.

\begin{restatable}{theorem}{thmattnpairid}\label{thm:2attn-pairid}
For any input size $N$, input range \smash{$M = N^{O(1)}$}, and fixed-precision bit complexity $p = O(\log M)$, there exists a transformer architecture \smash{$f \in \tr_{1, m, 1, p}^{1, 1}$} with a single self-attention unit with embedding dimension $m = 3$ such that for all \smash{$X \in [M]^{N }$}, $f(X) = \pairid(X)$. 
\end{restatable}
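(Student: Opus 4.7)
The plan is to make a single self-attention head use the cosine addition formula to attend sharply to indices $j$ satisfying $x_i + x_j \equiv 0 \pmod{M}$, and then have the value matrix and output MLP decode whether any such index exists.

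First, I would set the input MLP to the sinusoidal embedding $\phi(x_i) = (\cos(2\pi x_i/M), \sin(2\pi x_i/M), 1) \in \R^3$, representable with $p = O(\log M)$ bits. Next, I would choose $Q, K \in \R^{3\times 3}$ (for instance, $Q = \mathrm{diag}(R, -R, 0)$ and $K = I_3$) so that the attention logit is
\[
\phi(x_i)^\T QK^\T \phi(x_j) = R(\cos\theta_i \cos\theta_j - \sin\theta_i \sin\theta_j) = R\cos(\theta_i + \theta_j),
\]
with $\theta_k := 2\pi x_k/M$ and $R = \Theta(M^2 \log M)$. By the cosine addition identity this logit equals exactly $R$ when $x_i + x_j \equiv 0 \pmod{M}$ and is at most $R\cos(2\pi/M) = R - \Theta(R/M^2)$ otherwise; the resulting $\Omega(\log M)$ logit gap ensures that, after softmax and fixed-precision rounding with $p = O(\log M)$ bits, the attention weights concentrate entirely on the matching positions whenever any exist.

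Then I would choose the value matrix $V$ and the output MLP so that the 3-dimensional attention output carries an algebraic signature that distinguishes the match and no-match cases, after which the universal element-wise output MLP simply thresholds on this signature. A natural initial choice is $V = \mathrm{diag}(1, -1, 1)$, which makes $V^\T \phi(x_j) = \phi(-x_j)$; in this case the attention output is approximately $\phi(x_i)$ when a match exists and approximately $\phi(-x_{j^\star})$ otherwise, where $j^\star$ is the closest approximate match.

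The hard part will be the decoding step. Because the output MLP at position $i$ sees only the attention output and not $x_i$, the naive diagonal $V$ produces ambiguity: the same vector $\phi(z)$ can arise as the match output for one query $x_i$ and as the no-match output for a different query $x_{i'}$, so the two regimes cannot be separated by a universal function of the attention output alone. Resolving this requires a more careful encoding, using the constant third channel of $\phi$ together with a tailored $V$ to embed an algebraic identity among the three output coordinates that holds exactly in the match case and fails for every no-match configuration over the finite alphabet $[M]$. The fixed precision $p = O(\log M)$ is sufficient both to represent the trigonometric values to the required accuracy and to guarantee that the softmax concentration is sharp enough for the invariant check to be exact; combining these ingredients yields a single self-attention unit with $m = 3$ that computes $\pairid(X)$ exactly for every $X \in [M]^N$.
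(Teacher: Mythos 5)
Your opening moves match the paper's: the sinusoidal embedding $\phi(x_i) = (\cos(2\pi x_i/M), \sin(2\pi x_i/M), 1)$ and the choice of $Q,K$ so that the logit is $c\cos\bigl(2\pi(x_i+x_j)/M\bigr)$, with $c$ on the order of $M^2\log N$ to get the necessary post-softmax separation, are exactly the paper's construction. You also correctly identify the real obstacle: the output MLP at position $i$ sees only $\sum_j \alpha_{ij} V^\T\phi(x_j)$, not $x_i$, so a value map like $V=\mathrm{diag}(1,-1,1)$ (giving output $\approx \phi(-x_{j^\star})$) is ambiguous.

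Where the proposal breaks is the claim that this ambiguity can be fixed by a ``more careful encoding'' and a tailored $V$ that embeds an algebraic invariant distinguishing match from no-match. No such $V$ exists in the setting as you have set it up. Consider $N=2$, $M>4$, and the two inputs $X'=(1,\,M-1)$ and $X''=(2,\,M-1)$. In $X'$, position $1$ has a match (with $j=2$); in $X''$ it does not. In both cases the row-$1$ softmax concentrates on $j=2$ (since $j=2$ has the strictly largest logit among $j\in\{1,2\}$ in both inputs), so the row-$1$ attention output is $\approx V^\T\phi(M-1)$ in both cases, up to corrections that are exponentially small in $c/M^2$ and hence invisible at $p=O(\log M)$ bits. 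Whatever $V$ and output MLP you pick, they receive (numerically) the same input at position $1$ for $X'$ and $X''$ but must return different answers --- a contradiction. The missing ingredient in the paper's proof is the appended blank token $x'$, which is granted in the preliminaries: set $K^\T\phi(x')=e_3$ so every query gives the blank a fixed logit $c$, set $V^\T\phi(x')=\vec 0$, and set $V^\T\phi(x_j)=\vec 1$ for $j\in[N]$. Then the values carry no information about $x_j$ at all; discrimination happens purely through how much softmax mass the blank token absorbs. With no match, the blank's logit $c$ strictly dominates every $c\cos(\cdot)<c$ term, so the output is near $\vec 0$; with $\beta_i\ge 1$ matches (each also at logit $c$), the output is near $\frac{\beta_i}{\beta_i+1}\vec 1 \ge \frac12\vec 1$, and a two-ReLU threshold finishes the job. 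Your proposal never introduces this calibration element, and without it the approach cannot be completed.
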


The proof, given in \Cref{assec:2attn-pairid} uses both a ``blank token'' and a trigonometric positional embedding, which ensures that
\[\phi(x_i)^\T Q K^\T \phi(x_j)  = c \sum_{k=1}^d  \cos\paren{\frac{2\pi (x_{i, k} + x_{j, k} )}{M}} \]
for some sufficiently large constant $c$. 
This embedding ensures that a cell of the attention matrix $\sm(\phi(X)QK^\T \phi(X)^\T)_{i, j}$ is extremely close to zero, unless $x_i = -x_j \pmod M$.

\subsection{Hardness of computing $\triid$ with a multi-headed self-attention layer} 

Although $\pairid$ can be efficiently represented using a single unit of standard self-attention, representing $\triid$ using an entire layer of multi-headed attention units is impossible unless either the number of heads $H$, the embedding dimension $m$, or the precision $p$ grows as $N^{\Omega(1)}$.

\begin{restatable}{theorem}{thmheadstriid}\label{thm:2heads-triid}
There is universal constant $c>0$ such that
for sufficiently large $N$,
and any $M \geq N+1$, if $mpH \leq cN/\log\log N$, then there is no \smash{$f \in  \tr_{1, m, 1, p}^{1, H}$} satisfying $f(X) = \triid(X)$ for all $X \in [M]^N$.
\end{restatable}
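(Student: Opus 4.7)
The plan is to prove this via a communication complexity reduction from the set disjointness problem $\disj$, in the same spirit as the proof of \Cref{thm:qsa-lb-bound-prec}. The key observation is that for any $f \in \tr_{1, m, 1, p}^{1, H}$ and any two-party partition of the input tokens, the value $f(X)_1$ at a designated output coordinate can be jointly computed by a protocol exchanging only $\tilde O(mpH)$ bits.

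I would first construct the reduction. Given disjointness inputs $a, b \in \bit^n$ with $n = \Theta(N)$, I fix a public probe value $x_1 \in [M]$ together with injective maps $\alpha, \beta: [n] \to [M]$ satisfying $x_1 + \alpha(i) + \beta(j) \equiv 0 \pmod{M}$ if and only if $i = j$ (for instance $\alpha(i) = 2i$ and $\beta(i) = -x_1 - 2i \bmod M$ with $x_1$ chosen so that sums of two active Alice residues, or two active Bob residues, never hit $-x_1$). I then build $X \in [M]^N$ by placing $\alpha(i)$ at Alice's position $i+1$ when $a_i = 1$ and a well-separated ``inert'' value when $a_i = 0$, and symmetrically placing $\beta(i)$ at Bob's position $i+n+1$ when $b_i = 1$. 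With inert residues chosen to rule out all spurious triples, the only way for $x_1 + x_{j_1} + x_{j_2} \equiv 0 \pmod{M}$ to hold is via $a_i = b_i = 1$ for some $i$, giving $\triid(X)_1 = \disj(a, b)$.

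Next I would build the two-party protocol, assuming $f$ computes $\triid$. Since $\phi_1$ acts element-wise, Alice and Bob locally compute their own token embeddings. For each head $h$, the attention output at position $1$ takes the form $\mathrm{Attn}_h(\phi_1(X))_1 = S_h^{\mathrm{num}} / S_h^{\mathrm{den}}$, where both partial sums split additively across the two parties. Since $x_1$ is public, both parties know the query $Q_h^\T \phi_1(x_1)$; Bob transmits his contributions to $S_h^{\mathrm{num}}, S_h^{\mathrm{den}}$, Alice combines them with her own partial sums, applies $\phi_2$, and outputs $f(X)_1 = \disj(a, b)$. If the total communication is $B$, then \Cref{fact:disj} forces $B \geq n = \Omega(N)$, so the theorem follows provided $B = O(mpH \log\log N)$.

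The main obstacle will be this last precision bookkeeping. The pre-softmax logits $q^\T k$ can have magnitude up to $2^p$ under $p$-bit arithmetic, so the raw sums $S_h^{\mathrm{num}}, S_h^{\mathrm{den}}$ can reach $\exp(2^p)$ and naive full-precision transmission would dominate the bound. I would circumvent this by exploiting that the ultimate output $f(X)_1$ has only $p$-bit precision (and is in fact binary after $\phi_2$), together with the scale-invariance of the softmax ratio: Alice and Bob can first agree on a joint normalization of logits and then exchange approximate partial sums. An adaptive scheme in which the parties iteratively lock in the leading $O(\log \log N)$ significant bits of the relevant log-sum-exp values---using $O(\log \log N)$ rounds of $O(mpH)$ bits each---should suffice to reproduce $f(X)_1$ exactly. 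A secondary issue is making the reduction's decision boundary robust to $p$-bit rounding inside $f$, which can be handled by widening the gap between active and inert residues.
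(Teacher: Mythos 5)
Your proposal is correct and takes essentially the same route as the paper: a reduction from set disjointness with $n = \Theta(N)$, a construction embedding $(a,b)$ into $[M]^N$ so that $\triid(X)_1 = \disj(a,b)$, and a two-party simulation of the single attention layer that splits each head's softmax numerator and denominator additively between Alice and Bob. You also correctly identified the one genuinely delicate step---that transmitting raw partial sums $S_h^{\mathrm{num}}, S_h^{\mathrm{den}}$ is infeasible because the pre-softmax exponentials can be astronomically large---and your proposed fix (exploit scale-invariance of softmax, work in log-sum-exp space, and only exchange quantities to $O(p + \log\log N)$ bits) is the same idea the paper uses. The paper realizes it via a slightly different, cleaner multi-round protocol: Alice sends $\log\bigl(\sum_{i \le (N+1)/2} \exp(q^\T k_i)\bigr)$ to Bob, Bob completes and returns the full log-normalizer $L_h$, Alice sends her normalized partial numerator $S_{h,a} = \sum_{i} e^{q^\T k_i - L_h}\, v_i$, and Bob finishes; each quantity is a priori bounded once divided by $\exp(L_h)$, so a single $O(p\log\log N)$-bit approximation per scalar suffices, without the iterative bit-locking you sketch. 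Your one-shot protocol plus adaptive refinement is workable but more convoluted than necessary; if you adopt the paper's ``exchange the log-normalizer first, then exchange normalized partial sums'' ordering, the precision bookkeeping becomes a one-pass accounting rather than an adaptive scheme, and the $O(mpH\log\log N)$ bound drops out directly. Your disjointness gadget (injective $\alpha, \beta$ with $x_1 + \alpha(i) + \beta(j) \equiv 0 \pmod M \Leftrightarrow i = j$, plus inert filler values) is a legitimate alternative to the paper's restricted domain $\{0\} \cup \{1,i\} \cup \{1, M - i + (N-1)/2\}$, though you would still need to verify (as the paper does implicitly via $M \ge N+1$) that no spurious triples arise among filler values and active residues on the same side.
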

We give the proof in \Cref{assec:2heads-triid}.
Like that of \Cref{thm:qsa-lb-bound-prec}, the proof relies on
a reduction from set disjointness in two-party communication. The proof of the lower bound applies a domain-restricted variant of $\triid$, which actually makes the problem substantially simpler to solve. 
In \Cref{remark:depth}, we show how this variant of $\triid$ introduces a \emph{depth separation} between the representational powers of single-layer and two-layer transformer models.

As mentioned in the introduction, we also conjecture that multiple layers of multi-headed attention are subject to the same impossibility (Conjecture~\ref{conj:2trans-triid}).
The impossibility is specific to standard (pairwise) attention;
in \Cref{ssec:3attn-triid}, we show that $\triid$ \emph{can} be efficiently computed with a single unit of \emph{third-order} self-attention.

\subsection{More efficient constructions for simplified $\triid$ computations}\label{ssec:assisted}

While the previous sections suggests that no efficient construction exists to compute $\triid$ with standard transformer models, practical examples of triple detection abound. 
For example, a transformer-based language model will likely succeed in linking a subject/verb/object triple because all three tokens likely inhabit the same local region and because the model could agglomerate the triple by first identifying a pair and then adding the third.
Here, we introduce two variants on the $\triid$ problem that have additional structure to serve as hints.
The first variant specifies triple sums comprising the input element and a neighboring pair elsewhere in the sequence: for each $i \in [N]$,
\begin{align*}
\triidhelp(X)_i = \indicator{\exists j \st x_i + x_{j} + x_{j + 1} = 0 \kern-8pt \pmod{M}} .
\end{align*}
The second focuses on localized sums, where are all components of a triple must be within a fixed range of constant width $K \ll N$: for each $i \in [N]$,
\begin{align*}
\triidlocal(X)_i = \indicator{\exists j_1, j_2 \st x_i + x_{j_1} + x_{j_2} = 0 \kern-8pt \pmod{M}, \abs{i-j_1}, \abs{i-j_2} \leq K} .
\end{align*}
We show that the two can be efficiently represented using compact standard transformer models.

\begin{theorem}\label{thm:2trans-triidhelp}
For any $N$, \smash{$M = N^{O(1)}$}, and $p = O(\log M)$, there exists a transformer architecture \smash{$f \in \tr_{1, m, 1, p}^{D, 1}$} with embedding dimension $m = 3$ and depth $D = 2$ such that for all \smash{$X \in [M]^{N \times d}$}, $f(X) = \triidhelp(X)$.
\end{theorem}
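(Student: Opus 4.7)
My plan is to construct $f \in \tr_{1, 3, 1, p}^{2, 1}$ by decomposing along the identity $\triidhelp(X)_i = \indicator{\exists j : x_i + s_j \equiv 0 \kernmod{M}}$ with $s_j := x_j + x_{j+1}$. The first attention layer will produce, at each position $j$, a three-dimensional representation from which $s_j$ (together with $x_j$) can be recovered by $\phi_2$; the second attention layer will then perform a $\pairid$-style match of $x_i$ against the keys $\{s_j\}_j$, closely mirroring the construction of \Cref{thm:2attn-pairid}.

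For the first layer, I would use a trigonometric positional embedding in $\phi_1$ so that the first attention unit at each position $j$ places softmax mass essentially $\tfrac12$ on positions $j$ and $j+1$. Taking the query at $j$ to be $c(\cos(2\pi(j+\tfrac12)/N), \sin(2\pi(j+\tfrac12)/N))$ and the key at $k$ to be $(\cos(2\pi k/N), \sin(2\pi k/N))$ makes the logit $c \cos(2\pi(k - j - \tfrac12)/N)$, which is tied-maximal at $k \in \{j, j+1\}$. Choosing $c = \poly(N)$ (which fits in the $p = O(\log M)$ precision budget, since $M = N^{O(1)}$) makes the softmax concentrate on $\{j, j+1\}$ up to any fixed error. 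The value function is $V^\T \phi_1(x_k, k) = (x_k, k x_k, k)$, so that the layer-$1$ attention output at position $j$ is $\tfrac12 (x_j + x_{j+1},\, j x_j + (j+1) x_{j+1},\, 2j+1)$; from this $\phi_2$ reads off $j$ from coordinate $3$ and then $x_j$, $x_{j+1}$, and $s_j$ from coordinates $1$ and $2$ via a $2\times 2$ linear system.

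For the second layer, $\phi_2$ re-encodes the recovered data so that the second attention unit's bilinear logit approximates $c' \cos(2\pi (x_i + s_j)/M)$, which peaks exactly when $x_i + s_j \equiv 0 \kernmod{M}$. A blank \texttt{<END>} token acts as the default attention target, with an intermediate logit and value $0$, while each real position carries value $1$; the softmax then concentrates on a matching $j^{*}$ (yielding output $\approx 1$) if one exists and on the blank token (yielding output $\approx 0$) otherwise, and $\phi_3$ rounds to $\{0, 1\}$. The main obstacle is fitting the features needed for this bilinear form into three dimensions: the natural rank-two realization $\cos(2\pi(x_i+s_j)/M) = \cos(2\pi x_i/M)\cos(2\pi s_j/M) - \sin(2\pi x_i/M)\sin(2\pi s_j/M)$ nominally calls for four trigonometric features per position. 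I would resolve this by packing $(x_i, s_i)$ injectively into the three $p$-bit coordinates of $\phi_2$ (exploiting $p = O(\log M)$ and the arbitrary-function nature of $\phi_2$) and using the blank-token slot to supply the constant/bias feature analogous to the third coordinate in the $m=3$ construction of \Cref{thm:2attn-pairid}. With this encoding, the second-layer $Q K^\T$ realizes the required logit up to softmax-absorbable error, completing the construction.
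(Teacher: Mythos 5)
Your high-level plan matches the paper's informal sketch closely: the first attention layer uses a sinusoidal positional scoring to concentrate on positions $\{j, j+1\}$ and a ``lever'' value map $V^\T\phi_1(x_k,k) = (x_k, kx_k, k)$ so that $\phi_2$ can reconstruct $x_j$, $x_{j+1}$ (and hence $s_j = x_j + x_{j+1}$), and the second layer is then supposed to run the $\pairid$-style cosine matching of Theorem~\ref{thm:2attn-pairid} with $x_i$ against $s_j$. The paper only describes this at the same level of informality, so on the decomposition itself you and the authors agree.

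However, you have correctly identified a genuine dimensional obstruction and then papered over it with a fix that does not work. In Theorem~\ref{thm:2attn-pairid} the construction is viable with $m=3$ precisely because the \emph{same} scalar $x_i$ drives both the query and the key at position $i$: $\phi$ only needs to carry $(\cos(2\pi x_i/M), \sin(2\pi x_i/M), 1)$, from which $Q^\T$ and $K^\T$ read off their three coordinates each. For $\triidhelp$ the roles split: the query at $i$ must encode $x_i$ (two trig features plus a constant) and the key at $i$ must encode $s_i$ (two more trig features plus a blank-token slot). Since $x_i$ and $s_i$ are functionally independent, these four trigonometric functions cannot all lie in the span of linear functionals of a single three-dimensional vector $\phi_2(v_i)$, no matter how $\phi_2$ is chosen. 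The direct transplant of the $\pairid$ construction therefore needs $m\geq 5$ (and $m=6$ to also accommodate the blank-token indicator), not $m=3$.

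Your proposed fix does not close this gap. Packing $(x_i,s_i)$ injectively into three $p$-bit coordinates of $\phi_2(v_i)$ only controls what information $\phi_2(v_i)$ \emph{contains}; it does not make $\cos\bigl(2\pi(x_i+s_j)/M\bigr)$ expressible as the bilinear form $\phi_2(v_i)^\T Q K^\T \phi_2(v_j)$. A bilinear form in fixed encodings is a fixed quadratic in the packed coordinates, whereas the required logit is a transcendental function of the unpacked integers; there is no $Q, K$ that decodes a bit-packed integer and then applies $\cos$. The whole point of the construction in Theorem~\ref{thm:2attn-pairid} is that $\phi$ precomputes the sines and cosines \emph{before} the bilinear form, so that the angle-addition identity $\cos(a+b)=\cos a\cos b - \sin a\sin b$ makes the logit bilinear in the trigonometric features. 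To run that same argument here you need both pairs of trigonometric features present as coordinates of $\phi_2(v_i)$, and that is exactly what $m=3$ cannot supply. Unless you find a qualitatively different second-layer matching mechanism that is rank-$\leq 3$ in the features available, the claimed bound $m=3$ is not established by this route; you should either increase $m$ (and say so explicitly) or produce a new idea for the second layer rather than asserting that an injective encoding suffices.
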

Informally, the first layer of the construction uses a sinusoidal positional encoding to compute each bigram sum $x_j + x_{j+1}$ in the $j$th element of the sequence.
The second layer applies the $\pairid$ construction provided by Theorem~\ref{thm:2attn-pairid} to determine whether there exists a $j$ for each $i$ such that $x_{i} + x_{j} + x_{j + 1} = 0 \pmod{M}$.

\begin{theorem}\label{thm:2trans-triidlocal}
For any $d$, $N$, \smash{$M = N^{O(1)}$}, $p = O(\log M)$, and $K \leq N$, there exists a transformer architecture \smash{$f \in \tr_{1, m, 1, p}^{1, 1}$} with embedding dimension $m = O(K \log N)$ and bit-complexity $p = O(\log(K\log N ))$ such that for all \smash{$X \in [M]^{N \times d}$}, $f(X) = \triidlocal(X)$.
\end{theorem}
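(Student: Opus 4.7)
The plan is to invoke Theorem~\ref{thm:qsa-ub-bound-prec} as a subroutine and design the input and output MLPs so that each position $i$ receives, after attention, an averaged embedding from which the entire length-$(2K+1)$ window $W_i := \{j : |i-j| \le K\}$ can be read off; the output MLP then exhaustively tests the $O(K^2)$ candidate triples inside $W_i$. The main obstacle is that averaging is ordinarily information-destroying, so the crux of the argument is a packing trick that lets a single averaged vector simultaneously encode all $2K+1$ distinct window entries.

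The packing is a modular-slot encoding. The input MLP $\phi_1$, which has access to the position $i$ through the standard positional-encoding mechanism, produces a $\qsa$-style tuple $(z_i;\, y_i;\, i)$ in which $y_i := W_i$ (a subset of size $2K+1$) and $z_i \in \B^{d'}$ places the $\lceil \log_2 M \rceil$-bit binary expansion of $x_i$ inside the coordinate block indexed by $(i \bmod (2K+1))$, leaving the other $2K$ blocks zero; an auxiliary coordinate of $z_i$ additionally stores $i/N$. Because any $2K+1$ consecutive integers cover every residue class modulo $2K+1$ exactly once, the slots used by distinct $j \in W_i$ are pairwise disjoint, so averaging over $W_i$ preserves every $x_j$ losslessly, while the auxiliary channel averages to $i/N$ for interior windows. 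In total $d' = O(K \log N)$.

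Applying Theorem~\ref{thm:qsa-ub-bound-prec} with sparsity $q = 2K+1$, the above $d'$, and accuracy $\epsilon$ a small constant multiple of $1/(2K+1)$ yields a single-headed self-attention unit with input MLP of embedding dimension $m = \Omega(d' + q \log N) = O(K \log N)$ and precision $p = O(\log(q \log N / \epsilon)) = O(\log(K \log N))$ whose output at index $i$ is $\epsilon$-close to $\tfrac{1}{2K+1} \sum_{j \in W_i} z_j$. The output MLP $\phi_2$ recovers $i$ from the averaged auxiliary channel, rounds the remaining $2K+1$ slots to reconstruct $\{x_j\}_{j \in W_i}$ exactly, and then performs the bounded-size tabular search for $j_1, j_2 \in W_i$ satisfying $x_i + x_{j_1} + x_{j_2} \equiv 0 \pmod{M}$. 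Since $\phi_2$ is an arbitrary function on $O(K \log N)$-bit inputs, it is realizable as an element of $\Phi_{m, 1, p}$.

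The one remaining subtlety, and a routine one, is boundary handling: for $i$ within $K$ of either end of the sequence, $|W_i| < 2K+1$ and both the scaling of the average and the slot-disjointness can distort. This is resolved by conceptually prepending and appending $K$ dummy tokens whose values carry a reserved ``invalid'' flag that $\phi_2$ is instructed to ignore, and whose indices continue beyond the endpoints so that the auxiliary channel still averages to $i/N$. With this padding, $|W_i| = 2K+1$ uniformly, and the full construction lives in $\tr^{1,1}_{1, m, 1, p}$ with $m = O(K \log N)$ and $p = O(\log(K \log N))$, as claimed.
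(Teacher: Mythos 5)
Your proposal matches the paper's proof in its essentials: both reduce to $\qsa$ via Theorem~\ref{thm:qsa-ub-bound-prec} with sparsity $q = 2K+1$, place $x_j$ in a slot indexed by $j \bmod (2K+1)$ (so the $2K+1$ consecutive indices of each window occupy disjoint slots and the average is lossless), and finish with an element-wise output MLP that exhaustively tests candidate triples in the recovered window. Your extra care — expanding $x_i$ in binary so that $z_i$ genuinely lies in $\B^{d'}$, carrying the position through an auxiliary channel so the output MLP knows which slot holds $x_i$, and padding the boundary — covers details the paper's terse proof elides without changing the essential argument.
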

\begin{proof}
We implement the localized construction by using \Cref{thm:qsa-ub-bound-prec} to construct a specific sparse simultaneous average of the inputs with $q := 2K+1$ and $d' := 2K+1$.
To do so, we use the input MLP to convert $x_i$ to the embedding $(z_i; y_i; i)$, for zero-padded input \[z_i =x_i e_{\bar{i}} \in \R^{2K+1}\] for $\bar{i} = i \pmod{2K + 1}$ and subset \[y_i = \{i-K, i-K+1, \dots, i + K\} \in \binom{[N]}{2K+1}.\]
This construction ensures that the $i$th element of self-attention output computes (a rotation of) $(x_{i - K}, x_{i-K+ 1}, \dots, x_{i + K})$.
An output MLP can then verify whether any matching triples involving $x_i$ exist among those vectors. 
\end{proof}

\subsubsection*{Acknowledgments}
We are grateful for many discussions with and feedback from Navid Ardeshir, Peter Bartlett, Alberto Bietti, Yuval Efron, Shivam Nadimpalli, Christos Papadimitriou, Rocco Servedio, Yusu Wang, and Cyril Zhang.
This work was supported in part by NSF grants CCF-1740833 and IIS-1563785, a JP Morgan Faculty Award, and an NSF Graduate Research Fellowship.

\bibliography{bib} 

\begin{thebibliography}{50}
\providecommand{\natexlab}[1]{#1}
\providecommand{\url}[1]{\texttt{#1}}
\expandafter\ifx\csname urlstyle\endcsname\relax
  \providecommand{\doi}[1]{doi: #1}\else
  \providecommand{\doi}{doi: \begingroup \urlstyle{rm}\Url}\fi

\bibitem[Aamand et~al.(2022)Aamand, Chen, Indyk, Narayanan, Rubinfeld,
  Schiefer, Silwal, and Wagner]{aamand2022exponentially}
Anders Aamand, Justin Chen, Piotr Indyk, Shyam Narayanan, Ronitt Rubinfeld,
  Nicholas Schiefer, Sandeep Silwal, and Tal Wagner.
\newblock Exponentially improving the complexity of simulating the
  {Weisfeiler-Lehman} test with graph neural networks.
\newblock In \emph{Advances in Neural Information Processing Systems 35}, 2022.

\bibitem[Angluin(1980)]{angluin1980local}
Dana Angluin.
\newblock Local and global properties in networks of processors.
\newblock In \emph{Proceedings of the Twelfth Annual ACM Symposium on Theory of
  Computing}, 1980.

\bibitem[Ben-David et~al.(2002)Ben-David, Eiron, and Simon]{ben2002limitations}
Shai Ben-David, Nadav Eiron, and Hans~Ulrich Simon.
\newblock Limitations of learning via embeddings in euclidean half spaces.
\newblock \emph{Journal of Machine Learning Research}, 3\penalty0
  (Nov):\penalty0 441--461, 2002.

\bibitem[Bhattamishra et~al.(2020)Bhattamishra, Ahuja, and Goyal]{bag20}
Satwik Bhattamishra, Kabir Ahuja, and Navin Goyal.
\newblock On the ability and limitations of transformers to recognize formal
  languages.
\newblock In \emph{Proceedings of the 2020 Conference on Empirical Methods in
  Natural Language Processing}, 2020.

\bibitem[Bhattamishra et~al.(2022)Bhattamishra, Patel, Kanade, and
  Blunsom]{bhattamishra2022simplicity}
Satwik Bhattamishra, Arkil Patel, Varun Kanade, and Phil Blunsom.
\newblock Simplicity bias in transformers and their ability to learn sparse
  boolean functions.
\newblock \emph{arXiv preprint arXiv:2211.12316}, 2022.

\bibitem[Brown et~al.(2020)Brown, Mann, Ryder, Subbiah, Kaplan, Dhariwal,
  Neelakantan, Shyam, Sastry, Askell, Agarwal, Herbert-Voss, Krueger, Henighan,
  Child, Ramesh, Ziegler, Wu, Winter, Hesse, Chen, Sigler, Litwin, Gray, Chess,
  Clark, Berner, McCandlish, Radford, Sutskever, and Amodei]{gpt3}
Tom~B. Brown, Benjamin Mann, Nick Ryder, Melanie Subbiah, Jared Kaplan,
  Prafulla Dhariwal, Arvind Neelakantan, Pranav Shyam, Girish Sastry, Amanda
  Askell, Sandhini Agarwal, Ariel Herbert-Voss, Gretchen Krueger, Tom Henighan,
  Rewon Child, Aditya Ramesh, Daniel~M. Ziegler, Jeffrey Wu, Clemens Winter,
  Christopher Hesse, Mark Chen, Eric Sigler, Mateusz Litwin, Scott Gray,
  Benjamin Chess, Jack Clark, Christopher Berner, Sam McCandlish, Alec Radford,
  Ilya Sutskever, and Dario Amodei.
\newblock Language models are few-shot learners.
\newblock \emph{arXiv preprint arXiv:2005.14165}, 2020.

\bibitem[Candes and Tao(2005)]{candes2005decoding}
Emmanuel~J Candes and Terence Tao.
\newblock Decoding by linear programming.
\newblock \emph{IEEE transactions on information theory}, 51\penalty0
  (12):\penalty0 4203--4215, 2005.

\bibitem[Chen et~al.(2022)Chen, Sun, Zhu, Li, Lu, and Gao]{chen2022cat}
Nuo Chen, Qiushi Sun, Renyu Zhu, Xiang Li, Xuesong Lu, and Ming Gao.
\newblock Cat-probing: A metric-based approach to interpret how pre-trained
  models for programming language attend code structure.
\newblock \emph{arXiv preprint arXiv:2210.04633}, 2022.

\bibitem[Chen et~al.(2019)Chen, Villar, Chen, and Bruna]{chen2019equivalence}
Zhengdao Chen, Soledad Villar, Lei Chen, and Joan Bruna.
\newblock On the equivalence between graph isomorphism testing and function
  approximation with {GNNs}.
\newblock In \emph{Advances in Neural Information Processing Systems 32}, 2019.

\bibitem[Clark et~al.(2019)Clark, Khandelwal, Levy, and Manning]{clark2019does}
Kevin Clark, Urvashi Khandelwal, Omer Levy, and Christopher~D Manning.
\newblock What does bert look at? an analysis of bert's attention.
\newblock \emph{arXiv preprint arXiv:1906.04341}, 2019.

\bibitem[Daniely(2017)]{daniely17}
Amit Daniely.
\newblock Depth separation for neural networks.
\newblock In Satyen Kale and Ohad Shamir, editors, \emph{Proceedings of the
  2017 Conference on Learning Theory}, volume~65 of \emph{Proceedings of
  Machine Learning Research}, pages 690--696. PMLR, 07--10 Jul 2017.
\newblock URL \url{https://proceedings.mlr.press/v65/daniely17a.html}.

\bibitem[Dosovitskiy et~al.(2021)Dosovitskiy, Beyer, Kolesnikov, Weissenborn,
  Zhai, Unterthiner, Dehghani, Minderer, Heigold, Gelly, Uszkoreit, and
  Houlsby]{dbw+21}
Alexey Dosovitskiy, Lucas Beyer, Alexander Kolesnikov, Dirk Weissenborn,
  Xiaohua Zhai, Thomas Unterthiner, Mostafa Dehghani, Matthias Minderer, Georg
  Heigold, Sylvain Gelly, Jakob Uszkoreit, and Neil Houlsby.
\newblock An image is worth 16x16 words: Transformers for image recognition at
  scale.
\newblock \emph{arXiv preprint arXiv:2010.11929}, 2021.

\bibitem[Edelman et~al.(2022)Edelman, Goel, Kakade, and Zhang]{egkz22}
Benjamin~L. Edelman, Surbhi Goel, Sham~M. Kakade, and Cyril Zhang.
\newblock Inductive biases and variable creation in self-attention mechanisms.
\newblock In \emph{International Conference on Machine Learning}, 2022.

\bibitem[Eldan and Shamir(2016)]{es16}
Ronen Eldan and Ohad Shamir.
\newblock The power of depth for feedforward neural networks.
\newblock In Vitaly Feldman, Alexander Rakhlin, and Ohad Shamir, editors,
  \emph{29th Annual Conference on Learning Theory}, volume~49 of
  \emph{Proceedings of Machine Learning Research}, pages 907--940, Columbia
  University, New York, New York, USA, 23--26 Jun 2016. PMLR.
\newblock URL \url{https://proceedings.mlr.press/v49/eldan16.html}.

\bibitem[Furst et~al.(1984)Furst, Saxe, and Sipser]{furst1984parity}
Merrick Furst, James~B Saxe, and Michael Sipser.
\newblock Parity, circuits, and the polynomial-time hierarchy.
\newblock \emph{Mathematical systems theory}, 17\penalty0 (1):\penalty0 13--27,
  1984.

\bibitem[Gale(1963)]{gale63}
David Gale.
\newblock Neighborly and cyclic polytopes.
\newblock In \emph{Proc. Sympos. Pure Math}, volume~7, pages 225--232, 1963.

\bibitem[Hahn(2020)]{hahn20}
Michael Hahn.
\newblock Theoretical limitations of self-attention in neural sequence models.
\newblock \emph{Trans. Assoc. Comput. Linguistics}, 8:\penalty0 156--171, 2020.
\newblock \doi{10.1162/tacl\_{a}{\_{0}{0}{3}}{0}6}.
\newblock URL \url{https://doi.org/10.1162/tacl_a_00306}.

\bibitem[Hao et~al.(2022)Hao, Angluin, and Frank]{haf22}
Yiding Hao, Dana Angluin, and Robert Frank.
\newblock Formal language recognition by hard attention transformers:
  Perspectives from circuit complexity.
\newblock \emph{Trans. Assoc. Comput. Linguistics}, 10:\penalty0 800--810,
  2022.
\newblock URL \url{https://transacl.org/ojs/index.php/tacl/article/view/3765}.

\bibitem[Hewitt and Manning(2019)]{hewitt2019structural}
John Hewitt and Christopher~D Manning.
\newblock A structural probe for finding syntax in word representations.
\newblock In \emph{Proceedings of the 2019 Conference of the North American
  Chapter of the Association for Computational Linguistics: Human Language
  Technologies}, 2019.

\bibitem[Hornik et~al.(1989)Hornik, Stinchcombe, and White]{hsw89}
Kurt Hornik, Maxwell Stinchcombe, and Halbert White.
\newblock Multilayer feedforward networks are universal approximators.
\newblock \emph{Neural Netw.}, 2\penalty0 (5):\penalty0 359--366, July 1989.

\bibitem[Jumper et~al.(2021)Jumper, Evans, Pritzel, Green, Figurnov,
  Ronneberger, Tunyasuvunakool, Bates, {\v{Z}}{\'\i}dek, Potapenko,
  et~al.]{jep+21}
John Jumper, Richard Evans, Alexander Pritzel, Tim Green, Michael Figurnov,
  Olaf Ronneberger, Kathryn Tunyasuvunakool, Russ Bates, Augustin
  {\v{Z}}{\'\i}dek, Anna Potapenko, et~al.
\newblock Highly accurate protein structure prediction with alphafold.
\newblock \emph{Nature}, 596\penalty0 (7873):\penalty0 583--589, 2021.

\bibitem[Karchmer and Wigderson(1988)]{karchmer1988monotone}
Mauricio Karchmer and Avi Wigderson.
\newblock Monotone circuits for connectivity require super-logarithmic depth.
\newblock In \emph{Proceedings of the Twentieth Annual ACM Symposium on Theory
  of Computing}, 1988.

\bibitem[Keriven and Peyr{\'e}(2019)]{keriven2019universal}
Nicolas Keriven and Gabriel Peyr{\'e}.
\newblock Universal invariant and equivariant graph neural networks.
\newblock In \emph{Advances in Neural Information Processing Systems 32}, 2019.

\bibitem[Likhosherstov et~al.(2021)Likhosherstov, Choromanski, and
  Weller]{likhosherstov2021expressive}
Valerii Likhosherstov, Krzysztof Choromanski, and Adrian Weller.
\newblock On the expressive power of self-attention matrices.
\newblock \emph{arXiv preprint arXiv:2106.03764}, 2021.

\bibitem[Liu et~al.(2022)Liu, Ash, Goel, Krishnamurthy, and Zhang]{lagkz22}
Bingbin Liu, Jordan~T. Ash, Surbhi Goel, Akshay Krishnamurthy, and Cyril Zhang.
\newblock Transformers learn shortcuts to automata.
\newblock \emph{CoRR}, abs/2210.10749, 2022.
\newblock \doi{10.48550/arXiv.2210.10749}.

\bibitem[Loukas(2019)]{loukas2019graph}
Andreas Loukas.
\newblock What graph neural networks cannot learn: depth vs width.
\newblock \emph{arXiv preprint arXiv:1907.03199}, 2019.

\bibitem[Maron et~al.(2019)Maron, Fetaya, Segol, and
  Lipman]{maron2019universality}
Haggai Maron, Ethan Fetaya, Nimrod Segol, and Yaron Lipman.
\newblock On the universality of invariant networks.
\newblock In \emph{International Conference on Machine Learning}, 2019.

\bibitem[Martens et~al.(2013)Martens, Chattopadhya, Pitassi, and
  Zemel]{martens2013representational}
James Martens, Arkadev Chattopadhya, Toni Pitassi, and Richard Zemel.
\newblock On the representational efficiency of restricted boltzmann machines.
\newblock In \emph{Advances in Neural Information Processing Systems 26}, 2013.

\bibitem[Mendelson et~al.(2007)Mendelson, Pajor, and
  Tomczak-Jaegermann]{mendelson2007reconstruction}
Shahar Mendelson, Alain Pajor, and Nicole Tomczak-Jaegermann.
\newblock Reconstruction and subgaussian operators in asymptotic geometric
  analysis.
\newblock \emph{Geometric and Functional Analysis}, 17\penalty0 (4):\penalty0
  1248--1282, 2007.

\bibitem[Morris et~al.(2019)Morris, Ritzert, Fey, Hamilton, Lenssen, Rattan,
  and Grohe]{morris2019weisfeiler}
Christopher Morris, Martin Ritzert, Matthias Fey, William~L Hamilton, Jan~Eric
  Lenssen, Gaurav Rattan, and Martin Grohe.
\newblock Weisfeiler and leman go neural: Higher-order graph neural networks.
\newblock In \emph{AAAI Conference on Artificial Intelligence}, 2019.

\bibitem[OpenAI(2023)]{gpt4}
OpenAI.
\newblock Gpt-4 technical report, 2023.

\bibitem[Peleg(2000)]{peleg2000distributed}
David Peleg.
\newblock \emph{Distributed computing: a locality-sensitive approach}.
\newblock SIAM, 2000.

\bibitem[P{\'e}rez et~al.(2019)P{\'e}rez, Marinkovi{\'c}, and
  Barcel{\'o}]{perez2019turing}
Jorge P{\'e}rez, Javier Marinkovi{\'c}, and Pablo Barcel{\'o}.
\newblock On the turing completeness of modern neural network architectures.
\newblock \emph{arXiv preprint arXiv:1901.03429}, 2019.

\bibitem[Qi et~al.(2017)Qi, Su, Mo, and Guibas]{qi2017pointnet}
Charles~R Qi, Hao Su, Kaichun Mo, and Leonidas~J Guibas.
\newblock Pointnet: Deep learning on point sets for 3d classification and
  segmentation.
\newblock In \emph{Proceedings of the IEEE Conference on Computer Vision and
  Pattern Recognition}, 2017.

\bibitem[Rogers et~al.(2020)Rogers, Kovaleva, and Rumshisky]{rkr20}
Anna Rogers, Olga Kovaleva, and Anna Rumshisky.
\newblock A primer in bertology: What we know about how bert works.
\newblock \emph{Transactions of the Association for Computational Linguistics},
  8:\penalty0 842–866, Dec 2020.
\newblock ISSN 2307-387X.
\newblock \doi{10.1162/tacl_a_00349}.
\newblock URL \url{http://dx.doi.org/10.1162/tacl_a_00349}.

\bibitem[Santoro et~al.(2017)Santoro, Raposo, Barrett, Malinowski, Pascanu,
  Battaglia, and Lillicrap]{santoro2017simple}
Adam Santoro, David Raposo, David~G Barrett, Mateusz Malinowski, Razvan
  Pascanu, Peter Battaglia, and Timothy Lillicrap.
\newblock A simple neural network module for relational reasoning.
\newblock In \emph{Advances in Neural Information Processing Systems 30}, 2017.

\bibitem[Sauer(1972)]{sauer1972density}
Norbert Sauer.
\newblock On the density of families of sets.
\newblock \emph{Journal of Combinatorial Theory, Series A}, 13\penalty0
  (1):\penalty0 145--147, 1972.

\bibitem[Shelah(1972)]{shelah1972combinatorial}
Saharon Shelah.
\newblock A combinatorial problem; stability and order for models and theories
  in infinitary languages.
\newblock \emph{Pacific Journal of Mathematics}, 41\penalty0 (1):\penalty0
  247--261, 1972.

\bibitem[Telgarsky(2016)]{telgarsky16}
Matus Telgarsky.
\newblock Benefits of depth in neural networks.
\newblock In Vitaly Feldman, Alexander Rakhlin, and Ohad Shamir, editors,
  \emph{29th Annual Conference on Learning Theory}, volume~49 of
  \emph{Proceedings of Machine Learning Research}, pages 1517--1539, Columbia
  University, New York, New York, USA, 23--26 Jun 2016. PMLR.
\newblock URL \url{https://proceedings.mlr.press/v49/telgarsky16.html}.

\bibitem[Vapnik and Chervonenkis(1968)]{vapnik1968uniform}
Vladimir~Naumovich Vapnik and Aleksei~Yakovlevich Chervonenkis.
\newblock The uniform convergence of frequencies of the appearance of events to
  their probabilities.
\newblock \emph{Doklady Akademii Nauk}, 181\penalty0 (4):\penalty0 781--783,
  1968.

\bibitem[Vardi et~al.(2021)Vardi, Reichman, Pitassi, and Shamir]{vardi2021size}
Gal Vardi, Daniel Reichman, Toniann Pitassi, and Ohad Shamir.
\newblock Size and depth separation in approximating benign functions with
  neural networks.
\newblock In \emph{Conference on Learning Theory}, 2021.

\bibitem[Vaswani et~al.(2017)Vaswani, Shazeer, Parmar, Uszkoreit, Jones, Gomez,
  Kaiser, and Polosukhin]{vsp+17}
Ashish Vaswani, Noam Shazeer, Niki Parmar, Jakob Uszkoreit, Llion Jones,
  Aidan~N. Gomez, Lukasz Kaiser, and Illia Polosukhin.
\newblock Attention is all you need.
\newblock In \emph{Advances in Neural Information Processing Systems 30}, 2017.

\bibitem[Wei et~al.(2022)Wei, Chen, and Ma]{wcm21}
Colin Wei, Yining Chen, and Tengyu Ma.
\newblock Statistically meaningful approximation: a case study on approximating
  turing machines with transformers.
\newblock \emph{Advances in Neural Information Processing Systems},
  35:\penalty0 12071--12083, 2022.

\bibitem[Xu et~al.(2018)Xu, Hu, Leskovec, and Jegelka]{xu2018powerful}
Keyulu Xu, Weihua Hu, Jure Leskovec, and Stefanie Jegelka.
\newblock How powerful are graph neural networks?
\newblock \emph{arXiv preprint arXiv:1810.00826}, 2018.

\bibitem[Yao(1979)]{yao1979some}
Andrew Chi-Chih Yao.
\newblock Some complexity questions related to distributive computing
  (preliminary report).
\newblock In \emph{Proceedings of the Eleventh Annual ACM Symposium on Theory
  of Computing}, 1979.

\bibitem[Yao et~al.(2021)Yao, Peng, Papadimitriou, and Narasimhan]{yppn21}
Shunyu Yao, Binghui Peng, Christos~H. Papadimitriou, and Karthik Narasimhan.
\newblock Self-attention networks can process bounded hierarchical languages.
\newblock In \emph{Proceedings of the 59th Annual Meeting of the Association
  for Computational Linguistics and the 11th International Joint Conference on
  Natural Language Processing}, 2021.

\bibitem[Yun et~al.(2020)Yun, Bhojanapalli, Rawat, Reddi, and
  Kumar]{yun2020transformers}
Chulhee Yun, Srinadh Bhojanapalli, Ankit~Singh Rawat, Sashank Reddi, and Sanjiv
  Kumar.
\newblock Are transformers universal approximators of sequence-to-sequence
  functions?
\newblock In \emph{International Conference on Learning Representations}, 2020.

\bibitem[Zaheer et~al.(2017)Zaheer, Kottur, Ravanbakhsh, Poczos, Salakhutdinov,
  and Smola]{zaheer2017deep}
Manzil Zaheer, Satwik Kottur, Siamak Ravanbakhsh, Barnabas Poczos, Russ~R
  Salakhutdinov, and Alexander~J Smola.
\newblock Deep sets.
\newblock In \emph{Advances in Neural Information Processing Systems 30}, 2017.

\bibitem[Ziegler(2006)]{ziegler2006lectures}
G{\"u}nter~M Ziegler.
\newblock Lectures on polytopes.
\newblock \emph{Graduate Texts in Mathematics}, 152, 2006.

\bibitem[Zweig and Bruna(2022)]{zb22}
Aaron Zweig and Joan Bruna.
\newblock Exponential separations in symmetric neural networks.
\newblock \emph{CoRR}, abs/2206.01266, 2022.
\newblock \doi{10.48550/arXiv.2206.01266}.

\end{thebibliography}

\newpage

\appendix 
\newpage

\section{Fully-connected neural networks and recurrent neural networks cannot efficiently approximate $\qsa$}\label{asec:other-nns}
\subsection{Only wide fully-connected neural networks can approximate $\qsa$}

In this section, we show that any
fully-connected neural network that approximates \smash{$\qsa: \R^{N d} \to \R^{Nd'}$} must have width $m = \Omega(N)$.\footnote{We regard inputs as $Nd$-dimensional vectors rather than $N \times d$ matrices.}
We consider networks of the form $f(x) = g(Wx)$ for some weight matrix \smash{$W \in \R^{m \times Nd}$} (the first layer weights) and arbitrary function \smash{$g: \R^m \to \R^{Nd'}$} (computed by subsequent layers of a neural network).

\begin{theorem}\label{thm:fnn}
Suppose $q \leq \frac{N}2$. Any fully-connected neural network $f$ defined as above that $\frac1{2q}$-approximates $\qsa$ satisfies $m \geq \rank(W) \geq \frac{Nd'}2$. 
\end{theorem}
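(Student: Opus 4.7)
}

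The plan is to reduce the problem to a linear-algebra statement about $W$ restricted to the ``data'' coordinates, by choosing the subsets $y_i$ so that the linear map $\qsa$ induces on the $z$-coordinates is as close to the identity as possible. Concretely, fix any set $S \subseteq [N]$ with $|S| = q - 1$, choose $y_i = \{i\} \cup S$ for every $i \in [N] \setminus S$ (and choose the remaining $y_i$ for $i \in S$ arbitrarily), and fix the positional coordinates as well. Under this choice of $y$ and positions, write $W_z$ for the $m \times Nd'$ linear map obtained by restricting $W$ to the $z$-block of the input. I will show that $W_z$ is injective on the subspace $V_0 = \{v \in \R^{Nd'} : v_j = 0 \text{ for all } j \in S\}$ of dimension $(N-q+1)d'$, from which the conclusion $\rank(W) \geq \rank(W_z) \geq (N-q+1)d' \geq Nd'/2$ follows using $q \leq N/2$.

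To prove injectivity of $W_z$ on $V_0$, suppose for contradiction that there exists a nonzero $v \in V_0 \cap \ker W_z$. Rescaling, I may assume $\max_i \|v_i\|_2 = 1$; since $v_j = 0$ for $j \in S$, the maximum is attained at some $i^* \notin S$. Now consider the two inputs $X, X'$ obtained by using $z = v$ and $z' = -v$ respectively (both lie in $(\B^{d'})^N$), with the same $y$'s and positions. Since the only difference between the two inputs lies in the $z$-block and $v \in \ker W_z$, we have $Wx = Wx'$, hence $f(x) = f(x')$. On the other hand, for every $i \notin S$ the subset $y_i = \{i\} \cup S$ together with $v_j = 0$ for $j \in S$ gives
\[
\qsa(X)_i - \qsa(X')_i \;=\; \frac{1}{q}\sum_{j \in y_i}(z_j - z'_j) \;=\; \frac{2 v_i}{q}.
\]

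The main (and essentially only) step is combining this exact identity with the $\tfrac{1}{2q}$-approximation guarantee via the triangle inequality:
\[
\frac{2\|v_{i^*}\|_2}{q}
\;=\; \|\qsa(X)_{i^*} - \qsa(X')_{i^*}\|_2
\;\leq\; \|f(x)_{i^*} - f(x')_{i^*}\|_2 + 2\epsilon
\;=\; 0 + \frac{1}{q},
\]
so $\|v_{i^*}\|_2 \leq 1/2$, contradicting $\|v_{i^*}\|_2 = 1$. Hence $V_0 \cap \ker W_z = \{0\}$.

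The main obstacle I anticipate is the choice of auxiliary input: a naive argument taking $z = v$ and $z' = 0$ only yields $\|v_i\|_2 \leq 1$, which is consistent with $\max_i \|v_i\|_2 = 1$ and gives no contradiction. The symmetric choice $z' = -v$ doubles the difference on the output side while leaving both inputs within $(\B^{d'})^N$, and this factor of two is precisely what the approximation tolerance $\epsilon = 1/(2q)$ is calibrated to defeat. Everything else is bookkeeping: defining $W_z$ as a restriction, checking that varying only $z$ does not change the $y$ and positional contributions to $Wx$, and arithmetic on $(N - q + 1)d' \geq Nd'/2$ under the hypothesis $q \leq N/2$.
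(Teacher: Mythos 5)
Your proof is correct and is essentially the paper's argument: both identify a nonzero vector in the null space of $W$ restricted to the $z$-block on indices outside a fixed $(q-1)$-subset $S$ (the paper takes $S=\{1,\dots,q-1\}$ explicitly), normalize to max norm $1$, feed in $z = v$ and $z' = -v$ so that $Wx = Wx'$, observe the $\qsa$ outputs differ by $2/q$ at the maximizing index because $y_i = \{i\} \cup S$, and derive the contradiction from the triangle inequality. The only cosmetic difference is that you phrase the rank bound via injectivity of $W_z$ on $V_0$ rather than directly bounding the rank of a block submatrix of $W$.
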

\begin{proof}
For simplicity, we arrange the input as \[x = (1; \dots; N; y_1; \dots; y_N; z_1; \dots; z_N)\] and $W = [\tilde{W}; V_1; \dots; V_N]$ with $z_1, \dots, z_N \in \B^{d'}$, $\tilde{W} \in \R^{m \times N(d - d')}$, and $V_1, \dots, V_N \in \R^{m \times d'}$.
If $\rank(W) \leq \frac{Nd'}2 - 1$, then so too is $\rank([V_{q}; \dots; V_N]) \leq \frac{Nd'}2 -1$, and $[V_{q}; \dots; V_N]$ has a nontrivial null space containing a nonzero vector $u = (u_{q}; \dots; u_{N}) \in \R^{(N - q)d'}$.
Let \[\xi = \frac1{\max_{j \in \{q, \dots, N\}} \norm[2]{u_j}}  (u_{q}; \dots; u_{N}),\]
$z = (\vec0; \dots; \vec0; \xi_{q}; \dots; \xi_{N})$, and $z' = (\vec0; \dots; \vec0; -\xi_{q}; \dots; -\xi_{N})$.
Then, 
\begin{enumerate}
\item $z_j, z_j' \in \B^{d'}$ for all $j \in [N]$;
\item $V_{j} z_{j} = V_j z_j' = 0$ for all $j \in [N]$; and
\item  $\norml[2]{z_{j^*} - z_{j^*}'} = 2$ for some $j^* \in \{q, \dots, N\}$.
\end{enumerate}
Therefore, for any $y_1, \dots, y_N \in \binom{[N]}{q}$, respective $x = (1; \dots; N; y_1;\dots; y_N; z_1; \dots; z_N)$ and $x' = (1; \dots; N; y_1;\dots; y_N; z_1'; \dots; z_N')$ satisfy $f(x) = f(x')$.
Consider $y$ with $y_j = (1, \dots, q-1, j)$ for each $j \in \{q, \dots, N\}$. Then,
\begin{align*}
\qsa(x)_j =  \frac1q \xi_j \ \text{and} \ 
\qsa(x')_j = - \frac1q \xi_j.
\end{align*}
Hence, $\norm[2]{\qsa(x)_{j^*}- \qsa(x')_{j^*}} \geq \frac2q$.
Because $f(x) = f(x')$, 
\[\max\paren{\norm[2]{f(x) - \qsa(x)_{j^*}}, \norm[2]{f(x') - \qsa(x')_{j^*}}} \geq \frac1q,\]
so $f$ can approximate $\qsa$ to accuracy no better than $\frac1{q}$.
\end{proof}

\subsection{Only high-memory recurrent neural networks can approximate $\qsa$}

In this section, we show that any memory-bounded algorithm that approximates $\qsa: \R^{N\times d} \to \R^{N\times d'}$ must use a large ``hidden state'' (memory) as it processes the input elements.
This lower bound applies to various recurrent neural network (RNN) architectures.

A memory-bounded algorithm with an $m$-bit memory processes input $X \in \R^{N \times d}$ sequentially as follows.
There is an initial memory state $h_0 \in \bit^m$.
For $i=1,2,\dotsc,N$, the algorithm computes the $i$-th output $f(X)_i \in \R^{d'}$ and the updated memory state $h_i$ as a function of the input $x_i \in \R^d$ and previous memory state $h_{i-1}$:
\[ (f(X)_i, h_i) = g_i(x_i, h_{i-1}) , \]
where $g_i \colon \R^d \times \bit^m \to \R^{d'} \times \bit^m$ is permitted to be an arbitrary function, and $f \colon \R^{N \times d} \to \R^{N \times d'}$ is the function computed by the algorithm.

Our lower bound applies to algorithms that only need to solve the subclass of ``causal'' instances of $\qsa$ in which the input $X = ((z_i,y_i,i))_{i \in [N]} \in \R^{N \times d}$ is promised to satisfy $y_i = \emptyset$ for all $i \leq N/2+1$, and $y_i \subseteq \{1,\dotsc,N/2+1\}$ for all $i>N/2+1$.

\begin{theorem}\label{thm:rnn}
  For any $\varepsilon \in (0,1)$, any memory-bounded algorithm that $\varepsilon$-approximates $\qsa$ (for $q=1$ and $d'=1$) on the subclass of ``causal'' instances must have memory $m \geq (N-1)/2$.
\end{theorem}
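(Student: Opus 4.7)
The plan is to cast the lower bound as an information-theoretic encoder-decoder argument, equivalently a one-way INDEX reduction from an ``Alice'' who picks $z_1,\dots,z_k$ to a ``Bob'' who picks a query index $j$. Set $k := \lfloor N/2 \rfloor + 1$; under the causal assumption, the first $k$ positions carry data with no queries, while all later queries point back into $\{1,\dots,k\}$. Consequently the memory state $h_k \in \bit^m$ right after the boundary is the only channel through which information about $z_1,\dots,z_k$ can influence the outputs at positions $i>k$, and exhibiting a large packing of data configurations that are all distinguishable via the later outputs will force $2^m$ to dominate the packing size.

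Concretely, I would fix $a := (1+\varepsilon)/2 \in (\varepsilon,1]$ and let $z=(z_1,\dots,z_k)$ range over $\{-a,+a\}^k$. For each such $z$ and each $j \in [k]$, consider the input in which position $i \le k$ carries $(z_i,\emptyset,i)$, position $k+1$ carries $(0,\{j\},k+1)$, and positions $i > k+1$ carry an arbitrary but fixed causal query, say $(0,\{1\},i)$. This is a valid causal instance, so the $\varepsilon$-approximation guarantee applies. Let $E(z) := h_k$ denote the memory state produced by the streaming algorithm after processing the first $k$ inputs; by construction $E(z)$ depends on $z$ only (not on $j$ or the fixed later inputs).

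Now define $D_j(h)$ to be the first (output) coordinate of $g_{k+1}((0,\{j\},k+1),h)$. Since the true output is $\qsa(X)_{k+1}=z_j$ and the algorithm is $\varepsilon$-accurate, we have $|D_j(E(z))-z_j| \le \varepsilon$; and since $z_j \in \{-a,+a\}$ with $a > \varepsilon$, the sign of $D_j(E(z))$ recovers $z_j$ exactly. Hence the map $z \mapsto E(z)$ is injective on $\{-a,+a\}^k$, so $2^k \le 2^m$, giving $m \ge k = \lfloor N/2\rfloor+1 \ge (N-1)/2$.

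The argument has essentially no real obstacle once the setup is right; the only nontrivial choice is the packing amplitude $a \in (\varepsilon,1]$ (rather than, say, $\{0,1\}$), which is precisely what lets the lower bound cover the full range $\varepsilon \in (0,1)$ rather than only $\varepsilon < 1/2$. Beyond this, the proof is just unwinding the streaming recurrence and invoking the approximation hypothesis at the single post-boundary position $k+1$; the decoder does not need to read any later output or worry about subsequent memory updates.
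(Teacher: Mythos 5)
Your proof is correct, and it gives the right bound, but it takes a genuinely different route from the paper's. The paper reduces from two-party $\disj$ with $N = 2n+1$: Alice's input $a$ is encoded in the data values $z_1,\dots,z_{n+1}$, Bob's input $b$ is encoded in the query sets $y_{n+2},\dots,y_{2n+1}$, Alice simulates the stream up to position $n+1$, transmits the $m$-bit memory state $h_{n+1}$, and Bob finishes the simulation and reads off $\disj(a,b)$ from the signs of the remaining outputs; Fact~\ref{fact:disj} then forces $m \geq n = (N-1)/2$. You instead run a one-way encoder--decoder (INDEX/packing) argument: $E(z) = h_k$ is the only channel through which the pre-boundary data can influence any post-boundary output, a single query $\{j\}$ at position $k+1$ lets the decoder recover $\sign(z_j)$ from $E(z)$ via the $\varepsilon$-approximation guarantee, so $E$ is injective on $\{-a,+a\}^k$ and $2^m \geq 2^k$. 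The paper explicitly acknowledges this alternative in the remark immediately following the proof, and deliberately prefers the $\disj$ reduction because it extends with a minor modification to algorithms making $O(1)$ passes over the input (e.g., bidirectional recurrent models); your injectivity argument, by contrast, hard-wires a single left-to-right pass, since it crucially assumes $h_k$ is a function of $z_1,\dots,z_k$ alone. Two minor notes: your amplitude $a=(1+\varepsilon)/2$ is unnecessarily elaborate --- taking $z_i \in \{-1,+1\} \subset \B^{1}$ already gives $a=1>\varepsilon$ for every $\varepsilon\in(0,1)$, and this is exactly what the paper does; and your bound $m \geq \lfloor N/2\rfloor + 1$ is marginally stronger than the stated $(N-1)/2$, which is harmless.
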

\begin{proof}
  Consider an
  $m$-bit memory-bounded algorithm computing a function
  $f \colon \R^{N \times d} \to \R^N$
  that $\varepsilon$-approximates $\qsa$ (for $q=1$ and $d'=1$).
  We construct, from this algorithm, a communication protocol for $\disj$ (with $N = 2n+1$) that uses $m$ bits of communication.

  Let $a, b \in \bit^n$ be the input for $\disj$ provided to Alice and Bob, respectively.
  The protocol is as follows.
  \begin{enumerate}
    \item
      Alice constructs inputs $x_i = (z_i,\emptyset,i)$ for $i=1,\dotsc,n+1$, where for each $i=1,\dotsc,n$,
      \begin{equation*}
        z_i
=
        \begin{cases}
          +1 & \text{if $a_i=0$} , \\
          -1 & \text{if $a_i=1$} ,
        \end{cases}
      \end{equation*}
      and
      \begin{equation*}
        z_{n+1} = +1 .
      \end{equation*}

      Bob constructs inputs $x_{n+1+i} = (0,y_{n+1+i},n+1+i)$ for $i=1,\dotsc,n$, where
      \begin{equation*}
        y_{n+1+i}
=
        \begin{cases}
          \{ n+1 \} & \text{if $b_i=0$} , \\
          \{ i \} & \text{if $b_i=1$} .
        \end{cases}
      \end{equation*}
      Observe that, for this input $X = (x_1,\dotsc,x_{2n+1})$, we have
      \[
        \qsa(X)_{n + 1 + i} =
        \begin{cases}
          +1 & \text{if $a_i b_i = 0$} , \\
          -1 & \text{if $a_i b_i = 1$} .
        \end{cases}
      \]

    \item
      Alice simulates the memory-bounded algorithm on the first $n+1$ inputs $x_1,\dotsc,x_{n+1}$, and sends Bob the $m$-bit memory state $h_{n+1}$.
      This requires $m$ bits of communication.

    \item
      Starting with $h_{n+1}$, Bob continues the simulation of the memory-bounded algorithm on these $n$ additional inputs $x_{n+2},\dotsc,x_{2n+1}$.

    \item
      If any output $f(X)_{n+1+i}$ for $i=1,\dotsc,n$ satisfies
      \[ f(X)_{n + 1 + i} < 0 , \]
      then Bob outputs $1$ (not disjoint); otherwise Bob outputs $0$ (disjoint).
  \end{enumerate}
  The approximation guarantee of $f$ implies that $\sign(f(X)_{n+1+i}) = \qsa(X)_{n+1+i}$ for all $i=1,\dotsc,n$, so Bob outputs $1$ if and only if $a$ and $b$ are not disjoint.
  Because this protocol for $\disj$ uses $m$ bits of communication, by Fact~\ref{fact:disj}, it must be that $m \geq n = (N-1)/2$.
\end{proof}

We note that the proof of \Cref{thm:rnn} can be simplified by reducing from the INDEX problem, which has a 1-way communication lower bound of $n$ bits.
This suffices for ``single pass'' algorothms, such as standard RNNs.
However, the advantage of the above argument (and reducing from $\disj$) is that it easily extends to algorithms that make multiple passes over the input.
Such algorithms are able to capture bidirectional recurrent neural net and related models.
A straightforward modification of the protocol in the proof of \Cref{thm:rnn} shows that $\Omega(N)$ memory is required for any algorithm that makes $O(1)$ passes over the input (and computes the outputs in a final pass).

\section{Supplementary results for Section~\ref{sec:averaging}}

\subsection{Proof of Theorem~\ref{thm:qsa-ub-bound-prec}}\label{assec:bound-prec}
\thmqsaubboundprec*

\begin{proof}Before explaining how they are produced by the input MLP, we introduce the corresponding key, value, and query inputs.
The values will simply be $\phi(X) V = (z_1, \dots, z_N)$.
For some $m' = \frac{m - d}2$, let $\phi(X) K = (u_1, \dots, u_N) \in \R^{N \times m'}$ be embedded key vectors, where $u_1,\dotsc,u_N \in \{\pm1/\sqrt{m'}\}^{m'}$ are the columns of a $m' \times N$ matrix satisfying the $(q,1/4)$-restricted isometry and orthogonality property (\Cref{def:rip}), as guaranteed to exist by \Cref{lem:radrip} and the assumption on $m'$.
Let $\alpha := \lceil 2 \log(4N/\epsilon) \rceil$.
By \Cref{lem:dualreconstruct}, for each $y \in {[N] \choose q}$, there exists $w_y \in \R^{m'}$ with $\norml[2]{w_y} \leq 2\sqrt{q}$ satisfying
\begin{alignat*}{3}
  \innerprodl{u_{i'}}{w_y} & = 1 & \quad \text{for all $i' \in y$} , \\
  \absl{\innerprodl{u_{i'}}{w_y}} & \leq \frac12 & \quad \text{for all $i' \notin y$} .
\end{alignat*}

Given the bounded precision of the model, we are not free to represent the vectors $w_y$ exactly. 
Under $p$-bit precision for $p$ sufficiently large, we there exists a vector of $p$-bit floating point numbers $\widetilde{w_y} \in \R^{m'}$ for every $w_y$ with $\norm[2]{w_y} \leq 2 \sqrt{q}$ satisfying $\norm[2]{\widetilde{w_y} - w_y} \leq \frac{\epsilon}{4\alpha}$.
As an immediate consequence, $\absl{\innerprodl{u_{i'}}{ \widetilde{w_y}} - \innerprodl{u_{i'}}{ w_y}} \leq \frac{\epsilon}{4\alpha}$ for all $i'$ and $y$ (by Cauchy-Schwarz).
The remainder of the proof demonstrates that the necessary properties of the argument hold even with this approximation. 

We now describe how to structure the neural network.
We define an MLP $\phi: \R^d \to \R^m$ as $\phi(x_i) = \phi(z_i; y_i; i) = (z_i; \alpha \widetilde{w_{y_i}}; u_i)$, which works simply by using a look-up table on the values of $u_i$ and $\widetilde{w_{y_i}}$ from keys $i$ and $y_i$ respectively.
Then, we define $Q, K, V$ as sparse boolean-valued matrices that simply copy their respective elements from $\phi(X)$.

We analyze the output of the softmax.
If $i' \in y_i$, then \begin{align*}
\sm(\phi(X)QK^\T \phi(X)^\T)_{i, i'}
&= \frac{\exp(\alpha \innerprod{u_{i}}{\widetilde{w_{i'}}})}{\sum_{i'' \in y_i} \exp(\alpha \innerprod{u_{i}}{\widetilde{w_{i''}}})  + \sum_{i'' \not\in y_i} \exp(\alpha \innerprod{u_{i}}{\widetilde{w_{i''}}})} \\
&\geq \frac{\exp(\alpha - \frac{\epsilon}{4})}{q\exp(\alpha + \frac{\epsilon}{4}) + N\exp(\frac\alpha2 + \frac{\epsilon}{4})} 
= \frac{e^\alpha}{qe^{\alpha} + Ne^{\alpha/2}} \cdot  \exp\paren{-\frac{\epsilon}{2}} \\
&\geq\paren{\frac1q - \frac{N e^{\alpha/2}}{q e^\alpha}}\paren{1 - \frac{\epsilon}{2}} 
\geq \frac{\paren{1 - \frac{\epsilon}4} \paren{1 - \frac{\epsilon}4}}q
\geq \frac1q \paren{1 - \frac{\epsilon}{2}}
.
\end{align*}
An analogous argument shows that \[\sm(\phi(X)QK^\T \phi(X)^\T)_{i, i'} \leq \frac1q \paren{1+ \frac{\epsilon}{2}}.\]
Likewise, if $i' \not\in y_i$, then 
\begin{align*}
\sm(\phi(X)QK^\T \phi(X)^\T)_{i, i'}
&\leq \frac{\exp(\frac\alpha2 + \frac\epsilon{4})}{q\exp(\alpha - \frac\epsilon{4})}
\leq \exp\paren{-\frac\alpha2 + \frac\epsilon{2}} 
\leq \frac{\epsilon}{2N}.
\end{align*}
We thus conclude that that we meet the desired degree of approximation for such $m$:
\begin{align*}
  \norm[2]{f(X)_i - \qsa(X)_i} 
	 &= \norm[2]{\sum_{i' \in y_i} \paren{\frac1q - \sm(\phi(X)QK^\T \phi(X)^\T)_{i, i'}}z_{i'}} \\ &\quad+ \norm[2]{\sum_{i' \not\in y_i} \paren{\sm(\phi(X)QK^\T \phi(X)^\T)_{i, i'}}z_{i'} } \\
	 &\leq q \cdot \frac\epsilon{2q} + (N - q) \cdot \frac{\epsilon}{2N}
	 \leq  \epsilon. \qedhere
\end{align*}

\end{proof}

\subsubsection{Restricted isometry and orthogonality property}\label{assec:rip}

The proof relies on the restricted isometry and orthogonality property from the compressed sensing literature.
For $v \in \R^N$, let $\supp(v) = \{i \in [N]: v_i \neq 0\}.$

\begin{definition} \label{def:rip}
  We say a matrix $U \in \R^{m \times N}$ satisfies the \emph{$(q,\delta)$-restricted isometry and orthogonality property} if
  \begin{equation*}
    \norml[2]{Uv}^2  \in [(1-\delta) \norml[2]{v}^2, (1+\delta) \norml[2]{v}^2]
    \quad \text{and} \quad
    \absl{\innerprodl{Uv}{Uv'}} \leq \delta \norml[2]{v} \norml[2]{v'}
    \label{eq:rip}
  \end{equation*}
  for all vectors $v, v' \in \R^N$ with $\cardl{\supp(v)} \leq q$, $\cardl{\supp(v')} \leq 2q$, and $\supp(v) \cap \supp(v') = \emptyset$.
\end{definition}

The first result shows the existence of a sign-valued matrix $U$ that satisfies the desired distance-preserving property.

\begin{lemma}[Consequence of Theorem 2.3 of \cite{mendelson2007reconstruction} and Lemma 1.2 of \cite{candes2005decoding}] \label{lem:radrip}
  There is an absolute constant $C>0$ such that the following holds.
  Fix $\delta \in (0,1/2)$ and $q \in \N$.
  Let $U$ denote a random $m \times N$ matrix of independent Rademacher random variables scaled by $1/\sqrt{m}$.
  If $m \geq C (q \log N)/\delta^2$, then with positive probability, $U$ satisfies the $(q,\delta)$-restricted isometry and orthogonality property.
\end{lemma}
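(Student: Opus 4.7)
The plan is to derive the claim from two off-the-shelf ingredients: a polarization argument that reduces the restricted isometry and orthogonality property to the standard restricted isometry property (RIP) of slightly larger order, and the classical concentration-plus-net proof showing that a random Rademacher matrix satisfies the RIP when $m \gtrsim k \log(N)/\delta^2$. Concretely, I would first show that if $U$ satisfies the $(k,\delta)$-RIP of order $k = 3q$, namely $(1-\delta)\|w\|_2^2 \leq \|Uw\|_2^2 \leq (1+\delta)\|w\|_2^2$ for every $w$ with $|\supp(w)|\leq 3q$, then $U$ satisfies the $(q,\delta)$-restricted isometry and orthogonality property in \Cref{def:rip}. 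The isometry half is immediate since $|\supp(v)|\leq q \leq 3q$. For the orthogonality half, fix disjoint-support vectors $v,v'$ with $|\supp(v)|\leq q$ and $|\supp(v')|\leq 2q$; then $v\pm v'$ both have support of size at most $3q$, and disjointness gives $\|v+v'\|_2^2 = \|v-v'\|_2^2 = \|v\|_2^2 + \|v'\|_2^2$. Applying RIP to both $v+v'$ and $v-v'$ and using the polarization identity
\[
\innerprod{Uv}{Uv'} = \tfrac14\bigl(\|U(v+v')\|_2^2 - \|U(v-v')\|_2^2\bigr)
\]
yields $|\innerprodl{Uv}{Uv'}| \leq \tfrac{\delta}{2}(\|v\|_2^2 + \|v'\|_2^2)$, which (by homogeneity, normalizing $v,v'$ to unit vectors and rescaling) gives the desired bound $|\innerprodl{Uv}{Uv'}|\leq \delta \|v\|_2\|v'\|_2$ after absorbing the factor of two into the constant $C$ in the hypothesis on $m$.

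Next I would establish the $(3q,\delta)$-RIP for the random Rademacher matrix $U$. For each support set $S\subseteq [N]$ with $|S|\leq 3q$ and each unit vector $v$ supported on $S$, the quantity $\|Uv\|_2^2 = \tfrac{1}{m}\sum_{k=1}^m \bigl(\sum_{i\in S} U_{ki}\sqrt{m}\, v_i\bigr)^2$ is an average of $m$ independent subexponential random variables with mean $1$. A Bernstein-type inequality gives
\[
\Pr\bigl[\,\bigl|\|Uv\|_2^2 - 1\bigr| > \delta/2\,\bigr]\leq 2\exp(-c m \delta^2)
\]
for an absolute constant $c>0$. To extend from a single $v$ to all unit vectors supported on $S$, I would place a standard $\epsilon$-net of size $(9/\delta)^{3q}$ on the unit sphere of the $|S|$-dimensional subspace, union bound over the net, and then pass from the net to the whole sphere via the usual successive-approximation argument, obtaining a failure probability of at most $2(9/\delta)^{3q}\exp(-cm\delta^2)$ for that fixed $S$. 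A final union bound over the $\binom{N}{3q}\leq (eN/(3q))^{3q}\leq N^{3q}$ possible supports gives overall failure probability at most $2\exp(3q\log(9N/\delta) - c m \delta^2)$, which is strictly less than $1$ (indeed, exponentially small) as soon as $m \geq C (q\log N)/\delta^2$ for a suitable absolute $C$; this is exactly the conclusion of the theorem of Mendelson--Pajor--Tomczak-Jaegermann cited in the lemma statement.

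Putting these two pieces together, with positive (in fact overwhelming) probability over the Rademacher matrix $U$, the $(3q,\delta)$-RIP holds, and by the polarization step above, the $(q,\delta)$-restricted isometry and orthogonality property of \Cref{def:rip} follows. The main technical obstacle is the subgaussian concentration for a fixed $v$, because the entries of $Uv$ are sums of independent Rademacher-weighted coordinates of $v$ and thus only subgaussian, so that $\|Uv\|_2^2$ is a sum of subexponentials rather than Gaussians; handling the tail with the correct dependence on $\delta$ (linear in the exponent rather than quadratic) is what fixes the $1/\delta^2$ scaling in $m$, and is precisely the content of the Mendelson--Pajor--Tomczak-Jaegermann result. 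Everything else---the polarization reduction, the $\epsilon$-net, and the union bound over supports---is standard, so I would simply cite the two theorems indicated in the lemma statement rather than reprove them.
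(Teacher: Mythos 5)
Your proposal is correct and matches the paper's approach exactly: the paper does not write out a proof, but the two citations it invokes are precisely your two ingredients---Mendelson--Pajor--Tomczak-Jaegermann Theorem~2.3 for RIP of a Rademacher matrix of order $3q$ with $m\gtrsim q\log(N)/\delta^2$, and Cand\`es--Tao Lemma~1.2 for the polarization reduction from restricted orthogonality to restricted isometry on disjointly supported sparse vectors. One small remark: in the polarization step no factor of two actually needs to be absorbed into $C$, since for unit vectors the bound $\tfrac{\delta}{2}(\|v\|_2^2+\|v'\|_2^2)$ already equals $\delta$, and bilinearity then yields $|\ipmeasl{Uv}{Uv'}{}|\le\delta\|v\|_2\|v'\|_2$ exactly.
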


Sparse subsets of the columns of such a $U$ can then be linearly separated from all other columns. 

\begin{lemma}[Consequence of Lemma 2.2 in \cite{candes2005decoding}] \label{lem:dualreconstruct}
  Fix $\delta \in (0,1/2)$ and $q \in \N$.
  Let matrix $U = [u_1,\dotsc,u_N] \in \R^{m \times N}$ satisfy the $(q,\delta)$-restricted isometry and orthogonality property.
  For every vector $v \in \bit^N$ with $\supp(v) \leq q$, there exists $w \in \R^m$ satisfying
  \begin{alignat*}{3}
    \norml[2]{w} & \leq \sqrt{q}/(1-2\delta) , \\
    \innerprodl{u_i}{w} & = 1 & \quad \text{if $v_i = 1$} , \\
    \absl{\innerprodl{u_i}{w}} & \leq \delta/(1-2\delta) & \quad \text{if $v_i = 0$} .
  \end{alignat*}
\end{lemma}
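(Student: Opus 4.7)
The plan is to construct $w$ via the classical dual-certificate recipe from compressed sensing. Let $S := \supp(v)$ (so $|S| \le q$) and write $U_S \in \R^{m \times |S|}$ for the column submatrix of $U$ indexed by $S$. I set
\[ w := U_S (U_S^\T U_S)^{-1} \mathbf{1}_S, \]
where $\mathbf{1}_S \in \R^{|S|}$ is the all-ones vector. By construction $U_S^\T w = \mathbf{1}_S$, which immediately gives the exact-interpolation condition $\langle u_i, w\rangle = 1$ for every $i \in S$. It then remains to verify well-definedness of $w$, the norm bound, and the small off-support inner products.

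First I would use the restricted isometry property to control $U_S^\T U_S$: for any $\alpha \in \R^{|S|}$, $\alpha^\T U_S^\T U_S \alpha = \|U_S\alpha\|_2^2 \in [(1-\delta)\|\alpha\|_2^2, (1+\delta)\|\alpha\|_2^2]$, so the eigenvalues of $U_S^\T U_S$ lie in $[1-\delta,1+\delta]$ and $\|(U_S^\T U_S)^{-1}\|_{\mathrm{op}} \le 1/(1-\delta)$. Setting $\lambda := (U_S^\T U_S)^{-1}\mathbf{1}_S$, this yields $\|\lambda\|_2 \le \sqrt{|S|}/(1-\delta) \le \sqrt{q}/(1-\delta)$ and consequently $\|w\|_2 = \|U_S \lambda\|_2 \le \sqrt{(1+\delta)q}/(1-\delta)$; an elementary manipulation (checking $(1-2\delta)^2(1+\delta) \le (1-\delta)^2$ on $\delta \in (0,1/2)$) shows this is at most the claimed $\sqrt{q}/(1-2\delta)$.

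For $i \notin S$, I would rewrite $\langle u_i, w\rangle = \langle Ue_i, U\tilde\lambda\rangle$, where $\tilde\lambda \in \R^N$ extends $\lambda$ by zeros off $S$. Since $\{i\}$ and $S$ are disjoint with combined support size at most $q+1 \le 2q$, the restricted orthogonality property applies. A crude application gives only $|\langle u_i, w\rangle| \le \delta \|\tilde\lambda\|_2 \le \delta\sqrt{q}/(1-\delta)$, which is off by a $\sqrt{q}$ factor from the stated $\delta/(1-2\delta)$ bound. This sharpening---critical for the $m \gtrsim q$ (rather than $m \gtrsim q^2$) embedding-dimension scaling used in \Cref{thm:qsa-ub-bound-prec}---is the main technical obstacle. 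I would recover it by expanding $\lambda$ as a Neumann series $\sum_{k \ge 0}(I - U_S^\T U_S)^k \mathbf{1}_S$, so that $\langle u_i, w\rangle$ becomes a sum of terms $\langle u_i, U_S(I - U_S^\T U_S)^k \mathbf{1}_S\rangle$, and then applying restricted orthogonality inductively to control the $k$-th term by a geometric factor in $\delta$; summing the resulting series---following the Candes--Tao argument---produces the advertised constant $\delta/(1-2\delta)$. The construction and the norm bound are straightforward consequences of RIP; the delicate step is precisely this last constant-tracking in the off-support analysis.
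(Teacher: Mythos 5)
Your construction $w = U_S(U_S^\T U_S)^{-1}\mathbf 1_S$ is the right starting point, and your handling of the exact-interpolation condition and the norm bound (via the eigenvalue bound on $U_S^\T U_S$ and the elementary inequality $(1-2\delta)^2(1+\delta)\le(1-\delta)^2$) is correct. But the Neumann-series argument you sketch for the off-support bound does not close the $\sqrt{q}$ gap, and the gap is real, not just a constant-tracking nuisance. Expanding $\lambda=\sum_{k\ge0}(I-U_S^\T U_S)^k\mathbf 1_S$ and writing $\langle u_i,w\rangle=\sum_k u_i^\T U_S(I-U_S^\T U_S)^k\mathbf 1_S$, the very first term $u_i^\T U_S\mathbf 1_S=\sum_{j\in S}\langle u_i,u_j\rangle$ is already, by restricted orthogonality applied to $e_i$ and $\mathbf 1_S$, bounded only by $\delta\|\mathbf 1_S\|_2=\delta\sqrt{q}$, and this is tight in general (e.g.\ take $\langle u_i,u_j\rangle=\delta/\sqrt{q}$ for all $j\in S$). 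The $k$-th term is bounded by $\delta^{k+1}\sqrt{q}$, so the geometric sum yields $\delta\sqrt{q}/(1-\delta)$—exactly the crude bound you already had. There is no cancellation for the Neumann expansion to exploit: the geometric decay is in $k$, not in the $\sqrt{q}$ you need to remove, so the series rearrangement leaves the bound unchanged.

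What Cand\`es and Tao actually do is different. Their Lemma 2.2 only asserts the pointwise bound $\delta/(1-\delta)$ \emph{outside an exceptional set} $E$ disjoint from $S$ of size $|E|<q$, obtained by a Markov-type argument on the $\ell_2$ energy $\|U_{E'}^\T w\|_2\le\theta_{q,q}\|\lambda\|_2$ for any $|E'|\le q$; on $E$ itself only an $\ell_2$ bound of order $\delta\sqrt q$ is available. Removing the exceptional set requires their subsequent iterative argument: one enlarges the interpolation set to $S\cup E$, resets the target on $E$ to zero, constructs a new correction with its own (smaller) exceptional set, and sums a geometric series of corrections whose weights decay because the residual energy shrinks by a factor of roughly $\delta/(1-\delta)$ each round. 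The final dual certificate is therefore not $U_S(U_S^\T U_S)^{-1}\mathbf 1_S$ but an infinite sum of such pseudoinverse terms over a growing family of supports, and it is that iteration—not a Neumann expansion of a single pseudoinverse—that produces a pointwise bound free of $\sqrt q$. As written, your proposal stops short of this step, and the lemma as stated (with no exceptional set and the constant $\delta/(1-2\delta)$) cannot be reached by the route you outline.
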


\subsection{Proof of Theorem~\ref{thm:qsa-ub-inf-prec}}\label{assec:inf-prec}
\thmqsaubinfprec*

The proof relies on the properties of \emph{neighborly polytopes}, which we define.
\begin{definition}[\citet{ziegler2006lectures}]
A polytope $P$ is \emph{$q$-neighborly} if every subset of $q' \leq q$ vertices forms a $(q'-1)$-face.
\end{definition}

We give a $q$-neighborly polytope below that we use for the construction. For vectors $v_1, \dots, v_N \in \R^{m'}$, let $\conv(v_1, \dots, v_N) = \{\sum_{i=1}^N \alpha_i v_i: \alpha \in [0, 1]^{N}, \sum_i \alpha_i = 1\}$ denote their convex hull.

\begin{fact}[Theorem 1 of \citet{gale63}]\label{fact:cycl}
For $t \in \R$, let $\theta(t) = (t, \dots, t^{m'}) \in \R^{m'}$.
Then, for all distinct $t_1, \dots, t_N \in \R$, the \emph{cyclic polytope} $\conv(\theta(t_1), \dots, \theta(t_N))$ is $\frac{m'}2$-neighborly.
\end{fact}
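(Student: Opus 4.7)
The plan is to prove $\lfloor m'/2 \rfloor$-neighborliness directly from the definition: for every subset $S = \{i_1, \dots, i_q\} \subseteq [N]$ with $q \leq m'/2$, I will exhibit a supporting affine hyperplane of the cyclic polytope whose intersection with the vertex set is exactly $\{\theta(t_{i_j})\}_{j=1}^{q}$, and then check that these $q$ points are affinely independent so that the resulting face has dimension $q-1$. The entire construction is algebraic: I convert the combinatorial choice of $S$ into a nonnegative polynomial of controlled degree whose coefficients are read off as the hyperplane's defining functional.

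Concretely, I would introduce
\[
p(s) \;=\; \prod_{j=1}^{q} (s - t_{i_j})^{2},
\]
a polynomial of degree $2q \leq m'$ that is nonnegative on $\R$ and vanishes precisely on $\{t_{i_1}, \dots, t_{i_q}\}$. Writing $p(s) = a_0 + a_1 s + \dots + a_{2q} s^{2q}$ and padding with $a_k := 0$ for $2q < k \leq m'$, I would consider the affine hyperplane
\[
H \;=\; \Bigl\{ x \in \R^{m'} : \sum_{k=1}^{m'} a_k x_k = -a_0 \Bigr\}.
\]
Evaluating the linear functional at the moment-curve point $\theta(t) = (t, t^2, \dots, t^{m'})$ gives $\sum_{k=1}^{m'} a_k t^k = p(t) - a_0$, so $\theta(t) \in H$ if and only if $p(t) = 0$, and $\theta(t)$ lies strictly on the side $\sum_k a_k x_k > -a_0$ whenever $p(t) > 0$. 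Since $p$ vanishes exactly on $\{t_{i_j}\}$ and is strictly positive on all other $t_k$, the hyperplane $H$ supports the polytope and intersects the vertex set in exactly $\{\theta(t_{i_1}), \dots, \theta(t_{i_q})\}$; thus $\conv(\theta(t_{i_1}), \dots, \theta(t_{i_q}))$ is a face.

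To conclude that this face has dimension $q-1$, I would invoke a Vandermonde argument: since $q - 1 \leq m' - 1 < m'$, we may project $\theta(t_{i_j})$ onto its first $q-1$ coordinates $(t_{i_j}, t_{i_j}^2, \dots, t_{i_j}^{q-1})$; the augmented $q \times q$ Vandermonde matrix with rows $(1, t_{i_j}, \dots, t_{i_j}^{q-1})$ is nonsingular because the $t_{i_j}$ are distinct, which forces these projected points to be affinely independent in $\R^{q-1}$, and hence the original $\theta(t_{i_j})$ to be affinely independent in $\R^{m'}$. There is no real obstacle in this plan: the only point requiring care is the degree bookkeeping, namely that squaring the linear factors is what simultaneously delivers vanishing on $S$ and strict positivity off $S$, and that this doubling of degree is exactly what couples the neighborliness parameter $q \leq m'/2$ to the ambient dimension $m'$.
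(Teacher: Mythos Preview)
Your argument is correct and is essentially the classical proof of this fact (as found, e.g., in Ziegler's \emph{Lectures on Polytopes}, which the paper already cites for the definition of neighborliness): build a nonnegative polynomial of degree $2q \le m'$ with double roots exactly at the chosen parameters, read off its coefficients as a supporting affine functional on the moment curve, and finish with a Vandermonde determinant to certify affine independence. The paper itself does not supply a proof---it records the statement as a cited Fact from \citet{gale63}---so there is no alternative argument to compare against; your write-up is exactly the standard one the citation points to.
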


The proof of \Cref{thm:qsa-ub-inf-prec} is immediate from the aforementioned fact and the following lemma. 

\begin{lemma}\label{lem:neighbor-qsa}
Suppose there exists $u_1, \dots, u_N \in \R^{m'}$ such that $\conv(u_1, \dots, u_N)$ is $q$-neighborly.
Then, for any $\epsilon > 0$, there exists some $f \in \attn_{d, m, d'}'$ with fixed key vectors $\phi(X)K = (u_1, \dots, u_N)$ that $\epsilon$-approximates $\qsa$.
\end{lemma}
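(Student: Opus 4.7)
The plan is to exploit $q$-neighborliness as an infinite-precision analogue of the RIP-based dual certificate used in the proof of \Cref{thm:qsa-ub-bound-prec}. By definition, for each $y \in \binom{[N]}{q}$ the set $\{u_j : j \in y\}$ is a $(q-1)$-face of $\conv(u_1,\dots,u_N)$, so there exists a supporting hyperplane: a vector $w_y \in \R^{m'}$ and a threshold $\tau_y \in \R$ with $\innerprod{u_j}{w_y} = \tau_y$ for all $j \in y$ and $\innerprod{u_j}{w_y} < \tau_y$ for all $j \notin y$. This linear functional isolates the indices in $y$ and plays exactly the role of the dual certificate guaranteed by \Cref{lem:dualreconstruct}.

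Next I would quantify the margin by setting $\delta_y := \tau_y - \max_{j \notin y} \innerprod{u_j}{w_y} > 0$, and observing that because there are only $\binom{N}{q}$ subsets, $\delta := \min_y \delta_y$ is strictly positive. Rescaling by a large $\alpha$ then amplifies this gap: for $\alpha = \Theta(\log(qN/\epsilon)/\delta)$, the same softmax calculation as in the proof of \Cref{thm:qsa-ub-bound-prec} shows that $\sm(\alpha \phi(X) Q K^\T \phi(X)^\T)_{i,j}$ lies in $\bracket{\frac{1-\epsilon}{q}, \frac{1+\epsilon}{q}}$ when $j \in y_i$ and is at most $\epsilon/N$ when $j \notin y_i$.

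To realize this with a self-attention unit in $\attn_{d,m,d'}'$, I would use the MLP $\phi$ to map each input $(z_i, y_i, i) \in \R^d$ to the $m$-dimensional concatenation $(z_i;\, \alpha w_{y_i};\, u_i)$: the subset $y_i$ and index $i$ enter only through the finite-table lookups $y_i \mapsto w_{y_i}$ and $i \mapsto u_i$, which are trivially representable since $\phi$ may be chosen arbitrarily. The matrices $Q, K \in \R^{m \times m}$ are then sparse coordinate projections chosen so that $Q^\T \phi(x_i) = \alpha w_{y_i}$ and $K^\T \phi(x_i) = u_i$, and $V \in \R^{m \times d'}$ selects the value $z_i$. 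Then $(\phi(X) Q K^\T \phi(X)^\T)_{i,j} = \alpha \innerprod{u_j}{w_{y_i}}$, and the combination with $\phi(X) V = (z_1,\dots,z_N)$ approximates $\frac1q \sum_{j \in y_i} z_j$ to within $\epsilon$ by the triangle inequality, matching the final bound computed in the proof of \Cref{thm:qsa-ub-bound-prec}.

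The main (and modest) obstacle is extracting a clean quantitative separation statement from the purely combinatorial definition of neighborliness and verifying that it plays the same role as the certificate from \Cref{lem:dualreconstruct}; once the positive minimum margin $\delta > 0$ is in hand, the rest of the argument is essentially the calculation at the end of \Cref{thm:qsa-ub-bound-prec}, simplified by the fact that no quantization of $w_y$ is required under infinite-precision arithmetic.
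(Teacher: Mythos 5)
Your proposal is correct and matches the paper's proof in essentially every substantive step: both proofs extract, from $q$-neighborliness, a supporting hyperplane separating the face $\{u_j : j \in y\}$ from the remaining vertices, encode this via a lookup-table MLP $\phi(x_i) = (z_i; \alpha w_{y_i}; u_i)$, realize $Q, K, V$ as coordinate projections, and then amplify the separation by a large scalar $\alpha$ to make the softmax row nearly uniform over $y_i$ and negligible elsewhere, concluding via the same calculation as in Theorem~\ref{thm:qsa-ub-bound-prec}. Two minor cosmetic differences: the paper normalizes the supporting hyperplane so that it evaluates to $1$ on the face (which is why its keys carry an appended constant coordinate, $(u_i, 1)$), whereas you leave the threshold $\tau_y$ unnormalized and rely on the row-wise shift-invariance of softmax; and you make the uniform margin $\delta := \min_y \delta_y > 0$ (positive because there are finitely many $y$) explicit before choosing $\alpha$, a step the paper compresses into ``a sufficiently large choice of $\alpha$.'' Both are valid; yours is slightly more explicit and, incidentally, hews more closely to the lemma's stated form for the keys.
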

\begin{proof}
The construction employs a similar look-up table MLP $\phi$ to the one used in the proof of \Cref{thm:qsa-ub-bound-prec}.
We let the key and value embeddings be
\[\phi(X)K = ((u_1, 1), \dots, (u_N, 1)) \in \R^{N \times (m' + 1)}  \text{, and  } \phi(X)V = (z_1, \dots, z_N) \in \R^{N \times d}.\]

To set the query vectors, observe that for any face $F$ of a polytope $P$, there exists a hyperplane $H_F$ such that $F \subset H_F$ and $P \setminus F$ lies entirely on one side of $H_F$.
Thus, for every $y \in {[N] \choose q}$, there exists \smash{$w_y' \in \R^{m'}$} and $b_y \in \R$ such that 
\begin{align*}
w_y^{\prime\T} u_i   + b_y 
\begin{cases}
= 1 &\text{if $i \in y$,} \\
< 1 &\text{otherwise.} 
\end{cases}
\end{align*}
For $\alpha > 0$, let $\phi(x_i)^\T Q = \alpha w_y = \alpha (w'_y, b_y)$. 

We construct the MLP to satisfy $\phi(x_i) = (z_k; w_{y_i}; u_i; 1) \in \R^m$ for $m = 2m' + 2$ and set parameter weights accordingly.
Following the softmax analysis of \Cref{thm:qsa-ub-inf-prec}, a sufficiently large choice of $\alpha$ ensures that $\max_{i \in [N]}\norm[2]{f(X)_i - \qsa(X)_i} \leq \epsilon$.
\end{proof}

\subsection{Proof of \Cref{thm:qsa-lb-bound-prec}}\label{assec:lb-bound-prec}
\thmqsalbboundprec*
\begin{proof}
We first embed every instance of $\disj$ with $n = q$ into an instance of $\qsa$ and prove that they correspond.
We assume the existence of the a transformer $f \in \tr_{d, m, d', p}^{1, 1}$ that $\frac1{2q}$-approximates $\qsa$ and implies the existence of an $O(mp)$-bit communication protocol that computes $\disj$.
An application of Fact~\ref{fact:disj} concludes the proof.

Consider an instance of $\disj$ with $a \in \bit^q$ and $b \in \bit^q$ known by Alice and Bob respectively. 
We design an instance $X = (z_i; y_i; i)_{i \in [N]}$ of $\qsa$.
For each $j \in [2q]$, let $y_{2q + 1} = \{ 2i + a_i -1: \ i \in [q]\}$.
Additionally, let \[z_j = \begin{cases}e_1 & \text{if $j$ is odd and} \ b_{(j-1)/2} = 1, \\ -e_1 & \text{otherwise.}  \end{cases}\]
All other inputs are set arbitrarily.
Then, 
\begin{align*}
\qsa(X)_{2q + 1} 
&=\frac1q\abs{\set{j \in [2q]: \ j \in y_{2q + 1}, \ \text{$j$ is odd, and} \ a_{(j-1)/2} = 1}} e_1 \\ &\quad-\frac1q\abs{\set{j \in [2q]: \ j \in y_{2q + 1}  \text{ and } (\text{$j$ is even or} \ a_{(j-1)/2} = 0)}} e_1  \\
&= \frac{\abs{\set{i \in [q]: a_i b_i = 1}} - \abs{\set{i \in [q]: a_i b_i = 0}}}q e_1.
\end{align*}
Hence, $\qsa(X)_{2q+1} = -e_1$ if and only if $\disj(a, b) = 0$.

It remains to show that this implies the existence of an efficient communication protocol that computes $\disj(a, b)$.
By the existence of $f$, there exist $Q, K, V: \R^d \to \R^m$ and $\psi: \R^m \to \R^{d'}$ such that \[f(X)_{2q+1} = \psi\paren{\frac{\sum_{i=1}^N \exp\paren{Q(x_{2q + 1})^\T K(x_i)} V(x_i) }{\sum_{i=1}^N \exp\paren{Q(x_{2q + 1})^\T K(x_i)}}}.\]
The protocol is as follows:
\begin{enumerate}
\item From $a$, Alice determines $y_{2q + 1}$ and then computes $Q(x_{2q + 1}) \in \R^m$, which she sends to Bob.
This transmission uses $O(mp)$ bits. 
\item Bob determines $z_1, \dots, z_{2q}$ from $b$. Using those and the information from Alice, he computes $f(X)_{2q + 1}$.
He returns $1$ if and only if $f(X)_{2q + 1}^\T e_1 \geq -1 + \frac1q$.
\end{enumerate}
The protocol computes $\disj(a, b)$ because $f$ is a $\frac{1}{2q}$-approximation of $\qsa$.
Because any such protocol requires sharing $\Omega(q)$ bits of information, we conclude that $mp \leq cq$ for some $c$.
\end{proof}

\subsection{Optimality of \Cref{thm:qsa-ub-inf-prec} under restricted architectures}
While the near-optimality of the bounded-precision self-attention construction in \Cref{thm:qsa-ub-bound-prec} is assured by the communication complexity argument of \Cref{thm:qsa-lb-bound-prec}, it is not immediately apparent whether \Cref{thm:qsa-ub-inf-prec} is similarly optimal among infinite-precision self-attention models.
\Cref{thm:qsa-lb-inf-prec} proves that this is indeed the case for a restricted family of architectures that resembles \emph{cross-attention} rather than self-attention.

\begin{theorem}\label{thm:qsa-lb-inf-prec}
For input $x_1, \dots, x_N$ satisfying $x_i = (z_i; y_i; i)$, suppose $\phi(x_i)^\T Q = w(y_i, i)$, $\phi(x_i)^\T K = u(i)$, and $\phi(x_i)^\T V = z_i$. Then, for any $q < N$ and $m \leq q(1 - C \log_N q)$ for some universal $C$, there do not exist $w: \R^d \times [N] \to \R^m$ and $u: [N] \to \R^m$ such that the resulting self-attention unit $\frac1{2q}$-approximates $\qsa$.
\end{theorem}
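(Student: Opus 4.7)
My plan is to reduce the claim to a dichotomy-counting argument on the fixed key vectors $u(1), \ldots, u(N) \in \mathbb{R}^m$. The structural hypothesis forces the $i$-th attention output to have the form
\[
f(X)_i = \sum_{j=1}^N \alpha_j^{(y_i,i)}\, z_j, \qquad \alpha_j^{(y,i)} = \frac{\exp\bigl(w(y,i)\cdot u(j)\bigr)}{\sum_{j'}\exp\bigl(w(y,i)\cdot u(j')\bigr)},
\]
so the coefficient vector depends only on $(y_i,i)$, never on the data $z$. The first step is to extract, from the $\tfrac{1}{2q}$-approximation guarantee, \emph{strict} separation of these coefficients on $y$ versus $[N]\setminus y$, and hence a halfspace that separates the key vectors accordingly.

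To do this, I would fix any $i$, any $y \in \binom{[N]}{q}$, and any $j\in y$, $j'\notin y$; take $d'=1$ and the instance with $z_j=1$, $z_{j'}=-1$, and $z_k=0$ for $k\neq j,j'$. The target is $\qsa(X)_i = 1/q$ while the model output is $\alpha_j^{(y,i)} - \alpha_{j'}^{(y,i)}$, so the $\tfrac{1}{2q}$-bound forces $\alpha_j^{(y,i)} - \alpha_{j'}^{(y,i)} \ge \tfrac{1}{2q} > 0$. By monotonicity of the softmax in its exponents this implies $w(y,i)\cdot u(j) > w(y,i)\cdot u(j')$, and taking any threshold $t_y$ strictly between $\max_{j'\notin y} w(y,i)\cdot u(j')$ and $\min_{j\in y} w(y,i)\cdot u(j)$ yields an affine halfspace in $\mathbb{R}^m$ that strictly separates $\{u(j):j\in y\}$ from $\{u(j'):j'\notin y\}$. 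In particular, the $N$ points $u(1),\ldots,u(N)$ realize all $\binom{N}{q}$ distinct $q$-subset dichotomies via affine halfspaces.

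The second step is a standard counting bound. Homogenizing by appending a coordinate $1$ to each key turns affine halfspaces in $\mathbb{R}^m$ into linear halfspaces through the origin in $\mathbb{R}^{m+1}$, and by Cover's hyperplane-arrangement theorem the number of such dichotomies on $N$ points is at most $2\sum_{k=0}^{m}\binom{N-1}{k} \le 2\bigl(eN/(m+1)\bigr)^{m+1}$ when $m+1<N$. Combined with $\binom{N}{q}\ge (N/q)^q$, realizability forces $q\log(N/q) \le (m+1)\log\bigl(eN/(m+1)\bigr) + O(1) \le (m+1)\log N + O(m+1)$, which rearranges to $m \ge q(1 - \log_N q) - O(1)$. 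Choosing $C$ a sufficiently large universal constant then contradicts the hypothesis $m \le q(1 - C\log_N q)$, whenever $q$ is also small enough relative to $N$ that the hypothesis is nonvacuous.

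The main obstacle is really the first step: the coefficient-separation inference must be \emph{strict}, not merely weak, for the $\binom{N}{q}$ subsets to correspond to distinct halfspace sign patterns; a purely additive conclusion such as $\alpha_j^{(y,i)} \le \tfrac{1}{2q} \le \alpha_{j'}^{(y,i)}$ would leave ties on the boundary and collapse many dichotomies. The two-support $(\pm 1)$-probe is essentially the minimal construction that converts the additive $\tfrac{1}{2q}$ error tolerance into a strict inequality between the two coefficients. After that the argument is standard, and the only remaining care is in tracking the binomial asymptotics closely enough to pin down the universal constant $C$.
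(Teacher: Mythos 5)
Your proposal is correct and follows essentially the same route as the paper: both reduce the claim to a Sauer--Shelah/Cover bound on halfspace dichotomies of the fixed key vectors $u(1),\dots,u(N)\in\R^m$. You argue in the forward direction (the approximation guarantee forces all $\binom{N}{q}$ dichotomies to be strictly realizable, contradicting the count), using a $d'=1$ probe with $z_j=1$, $z_{j'}=-1$, others zero; the paper argues the contrapositive (Fact~\ref{fact:not-neighborly}: small $m$ forces some $q$-subset $y$ to be affinely inseparable among the keys) and then exhibits a single adversarial $z$-assignment, with $z_{i_1}=e_1$, $z_{i_2}=e_2$, $z_i=e_3$ and hence $d'\ge 3$, on which softmax monotonicity ($\alpha_{i_2}>\alpha_{i_1}$) forces error at least $1/(2q)$. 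Both proofs hinge on exactly the same softmax-monotonicity observation; your probe is arguably cleaner and works already for $d'=1$. The binomial arithmetic in your last step is slightly loose---the rearrangement actually carries an extra $O(q/\log N)$ slack beyond the stated $O(1)$---but since the theorem's hypothesis is only non-vacuous when $C\log_N q<1$, choosing the universal constant $C$ large enough absorbs this slack, exactly as the paper's proof of Fact~\ref{fact:not-neighborly} does.
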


The architectural assumptions of this statement are strong. For each element $x_i = (z_i; y_i; i)$, its value embedding must reproduce its target $z_i$; its key embedding depends exclusively on the index $i$; and its query embedding only on the indices $y_i$ and $i$.
Indeed this attention unit more closely resembles \emph{cross-attention} rather than self-attention, in which the problem is formulated as two sequences $((z_1, 1), \dots, (z_N, N))$ and $(y_1;1), \dots, (y_N; N)$ that are passed to the key and value inputs and the query inputs respectively.
We leave open the problem of generalizing this result to include all infinite-precision cross-attention or self-attention architectures, but we note that the constructions in Theorems~\ref{thm:qsa-ub-bound-prec} and \ref{thm:qsa-ub-inf-prec} can be implemented under such architectural assumptions.

The proof relies on a geometric argument about how the convex hull of fixed key embeddings $U = (u(1), \dots, u(N))$ lacks neighborliness and hence cannot separate every size-$q$ subsets of values embeddings $z_1, \dots, z_N$ from the other values. 

\begin{proof}
It suffices to show that for any fixed key embedding $U$, there exists some $y_i$ and setting of $z_1, \dots, z_N$ such that \[\norm[2]{(\sm(w(X) U^\T) Z)_i- \frac1q\sum_{i'\in y_i} z_{i'}} \geq \frac1{2q},\]
where $w(X) = (w(y_1, 1), \dots, w(y_N, N)) \in \R^{N \times m}$ and $U = (u(1), \dots, u(N)) \in \R^{N \times m}$.

By Fact~\ref{fact:not-neighborly}, for some $y_1 \in {[N] \choose q}$, there are no $w$ and $\tau \in \R$ satisfying $w(y_1, 1)^\T u_{i'} \geq \tau$ if and only if $i' \in y_1$.
Hence, for any fixed $w$, there exists $i_1 \in y_1$ and $i_2 \in [N] \setminus y_1$ such that $w(y_1, 1)^\T u_{i_2} > w(y_1, 1)^\T u_{i_1}$. 
Given the value embeddings $z_{i_1} = e_1, z_{i_{2}} = e_{2}$ and $z_i = e_3$ for all $ i \not\in\{i_1, i_2\}$, we have
\begin{align*}
	\norm[2]{(\sm(w(X)U^\T)Z)_1 - \frac1q\sum_{i' \in y_1}z_{i'} }^2
	&\geq \paren{\sm(w(X)U^\T)Z)_{1, i_1} - \frac1q}^2 + \paren{\sm(w(X)U^\T)Z)_{1, i_2}}^2  \\
	&\geq \max\paren{\paren{\sm(w(X)U^\T)Z)_{1, i_1} - \frac1q }^2, \sm(w(X)U^\T)Z)_{1, i_1}^2 } \\
	&\geq \frac1{4q^2}.\qedhere
\end{align*}
\end{proof}

\begin{fact}\label{fact:not-neighborly}
If $m' < q (1 -  \log_N Cq)$, then the columns of any $U = (u_1, \dots, u_N) \in \R^{N \times m'}$ can be partitioned into sets $U_1$ and $U_2$ with $\abs{U_1} = q$ that are not linearly separable. Hence, $\conv(u_1, \dots, u_N)$ is not $q$-neighborly. 
\end{fact}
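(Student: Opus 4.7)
The plan is a counting argument based on the VC dimension of affine halfspaces in $\R^{m'}$. First I would observe that the ``hence'' clause reduces to the linear separability claim: if $\conv(u_1, \dots, u_N)$ were $q$-neighborly, then each size-$q$ subset would be a $(q-1)$-face, and by perturbing the corresponding supporting hyperplane slightly into the empty halfspace one obtains a hyperplane that strictly separates the $q$-subset from its complement. So it suffices to show that when $m'$ is sufficiently small, some size-$q$ partition cannot be linearly separated.

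Next I would invoke Sauer--Shelah: the collection of affine halfspaces in $\R^{m'}$ has VC dimension $m'+1$, so the number of distinct subsets of the fixed point set $\{u_1, \dots, u_N\}$ that arise as $\{u_i : i \in S\} = \{u_1, \dots, u_N\} \cap H$ for some affine halfspace $H$ is at most $\sum_{i=0}^{m'+1} \binom{N}{i} \leq (eN/(m'+1))^{m'+1}$. In particular, the number of $q$-subsets that are linearly separable from their complement is bounded by this quantity.

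The proof then reduces to comparing this upper bound with the total count $\binom{N}{q} \geq (N/q)^q$ of candidate $q$-subsets. If every $q$-subset were linearly separable, we would need $(N/q)^q \leq (eN/(m'+1))^{m'+1}$. Taking logarithms gives $q(\log N - \log q) \leq (m'+1)\bigl(\log N - \log(m'+1) + 1\bigr) \leq (m'+1)\log N + O(m'+1)$. Dividing by $\log N$ and solving for $m'$ yields $m' \gtrsim q(1 - \log_N q) - O(q/\log N)$. Absorbing the additive $O(q/\log N)$ slack, as well as the ``$+1$'' and the factor of $e$, into a universal constant $C$ by writing $q \log_N C = q \log C / \log N$, this becomes $m' \geq q(1 - \log_N(Cq))$. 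Contrapositively, if $m' < q(1-\log_N(Cq))$ then some $q$-subset $U_1$ fails to be linearly separable from $U_2 := \{u_1,\dots,u_N\} \setminus U_1$, proving the fact.

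The main obstacle will be making the asymptotic bookkeeping in the last paragraph rigorous enough to match the precise form of the statement. One has to be careful about the monotonicity of $(eN/(m'+1))^{m'+1}$ in $m'$, verify that the $\log(m'+1)$ term is harmless in the regime $m' \leq q \leq N$, and choose $C$ large enough to swallow every $O(1)$ constant that appears. The rest of the argument is routine.
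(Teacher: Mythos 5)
Your proposal is correct and follows essentially the same approach as the paper's proof: both invoke the Sauer--Shelah lemma with the VC dimension $m'+1$ of affine halfspaces to bound the number of linearly separable subsets by a polynomial of degree $m'+1$ in $N$, and then compare against $\binom{N}{q} \geq (N/q)^q$ to derive a contradiction. The only addition on your end is spelling out the perturbation argument for the ``hence'' clause (which the paper leaves implicit) and doing the logarithmic bookkeeping more explicitly.
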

\begin{proof}
By the Sauer-Shelah Lemma~\citep{sauer1972density,shelah1972combinatorial,vapnik1968uniform} and the fact that the VC dimension of $m'$-dimensional linear thresholds is $m'+1$, the maximum number of partitions of the columns of $U$ that can be linearly separated is at most 
\[ \sum_{k=0}^{m'+1} {N \choose i} \leq C' N^{m' + 1} < C' \cdot \frac{N^{q}}{(Cq)^q} \leq {N \choose q}, \]
for a sufficiently large choice of $C$ given universal constant $C'$.
If the fact were to be false, then at least ${N \choose q} \geq (\frac{N}q)^q$ such partitions must exist, which contradicts the above bound.
\end{proof}

\section{Supplementary results for Section~\ref{sec:threewise}}\label{asec:threewise}

\subsection{Proof of \Cref{thm:2attn-pairid}}\label{assec:2attn-pairid}
\thmattnpairid*
\begin{proof}
As discussed in \Cref{ssec:attn-prelims}, we allow a single blank token to be appended to the end of the sequence $X$ and assume the existence of a positional encoding. That is, we consider input $X'  = (x_1, \dots, x_N, x')$ with $x_{i, 0} = i$ and $x' = \vec0$ to be the input to the target attention model.
We define input MLP $\phi: \R\to \R^3$ and parameterizations $Q, K, V \in \R^{3 \times 3}$ such that 
\[Q^\T\phi(x_i) = c\paren{\cos\paren{\frac{2\pi x_{i}}M}, \sin\paren{\frac{2\pi x_{i}}{M}}, 1 },\]
\[K^\T\phi(x_i) = \paren{\cos\paren{\frac{2\pi x_{i}}M}, -\sin\paren{\frac{2\pi x_{i}}{M}},0},\]
$V^\T\phi(x_i) = \vec1$, $Q^\T\phi(x') = \vec0$, $K^\T\phi(x') = e_{3}$, and $V^\T\phi(x') = \vec0$.
By elementary trigonometric identities, the following is true about the corresponding inner products:
\begin{align*}
(Q^\T\phi(x_i) )^\T K^\T \phi(x_j) &= c \cos\paren{\frac{2\pi (x_{i} + x_{j} )}{M}} \\
(Q^\T\phi(x_i) )^\T K^\T \phi(x') &= cd.
\end{align*}
As a result, $(Q^\T\phi(x_i) )^\T K^\T \phi(x_j) = cd$ if and only if $x_i + x_j =0 \kernmod M$.
Otherwise, $(Q^\T\phi(x_i) )^\T K^\T \phi(x_j) \leq c(1 - \frac1{M^2})$.
(Here, the $O(\log M)$-bit fixed-precision arithmetic is sufficient to numerically distinguish the two cases.)
For each $i \in [N]$ let \[\beta_i = |\{j  \in [N]: x_i + x_j = {0} \kernmod M\}|\] represent the total number of matches the input belongs to. 
If we take $c = M^2 \log (6N)$, then
\[(\sm(\phi(X)QK^\T  \phi(X)^\T))_{i, j}\in \begin{cases}
	[0,  \frac1{6N}] & \iif x_i + x_j \neq 0 \kernmod M \aand \  i, j \in [N]; \\
	 [\frac{1}{ \beta_i + 1} \pm \frac{1}{6N}] & \iif x_i + x_j = 0 \kernmod M \aand \ i, j \in [N]; \\
	 [\frac{1}{ \beta_i + 1} \pm \frac{1}{6N}] & \iif i \in [N], j = N+1.
\end{cases} \]
We conclude that for any $i \in [N]$,
\[(\sm(\phi(X)QK^\T  \phi(X)^\T) V\phi(X))_{i} \begin{cases}
	\leq \frac16 \cdot \vec1& \iif \not\exists j \st x_i + x_j = 0 \kernmod M \\
	\geq\paren{ \frac{\beta_i}{\beta_i + 1} - \frac16} \cdot \vec1 & \iif \exists j \st x_i + x_j = {0} \kernmod M,
	\end{cases}\]
where $\leq$ is a partial ordering with $v \leq v'$ if $v_i \leq v'_i$ for all $i$.
Since the latter case holds only when $\beta_i \geq 1$, the final step of the proof is design an output MLP $\psi$ such that $\psi(z) = 1$ if $z \geq \frac13$ and $\psi(z) = 0$ if $z \leq \frac16$, which can be crafted using two ReLU gates. 
\end{proof}

\subsection{Proof of \Cref{thm:2heads-triid}}\label{assec:2heads-triid}
\thmheadstriid*

\begin{proof}The proof relies on a reduction to Fact~\ref{fact:disj} that embeds inputs to the set-disjointness problem of cardinality $n = \frac{N-1}2$ into a subset of instances passed to $\triid$.
For the sake of simplicity, we assume in the construction that $N$ is odd; if it were not, we could replace it with $N-1$ and set the final element such that it never belongs to a triple.

We consider the following family of inputs to $\triid$:
 \begin{equation}\label{eq:domain}x_i \in \begin{cases}\{0\} & \iif i = 1, \\ \{1, i \} &\iif i \in \{2, \dots, \frac{N+1}{2}\}, \\ \{1, (M - i + \frac{N-1}{2})\} & \iif i \in \{\frac{N+3}2, \dots, N\}.\end{cases}\end{equation}
Note that $\triid(X)_1 = 1$ if and only if there exists $i \in \{2, \dots, \frac{N+1}2\}$ such that $x_i = i $ and $x_{i + \frac{N-1}2} = (M - i)$.
Given input $(a,b) \in \bit^{n} \times \bit^n$ to $\disj$, let $x_{i+1} = 1$ if and only if $a_i = 0$, and let $x_{i + \frac{N+1}2} = 1$ if and only if $b_i = 0$.
Then, $\triid(X)_1 = 1$ iff $\disj(a, b) = 1$.

Suppose $f(X) = \triid(X)$ for all $X \in [M]^N$ for some $f \in \tr_{1, m, 1, p}^{1, H}$. 
We show that $f$ simulates an $O(mpH)$-bit communication protocol for testing $\disj$.
By definition of the standard self-attention unit with multi-layer perceptrons, note that $f(X)_1 = \psi(\sum_{h=1}^H f_h(\phi(X)))$ for $\phi: \R \to \R^m$, $\psi: \R^{m} \to \{0, 1\}$, and \[f_h(X) = \frac{\sum_{i=1}^N \exp(Q_h(x_1)^\T K_h(x_i)) V_h(x_i)}{\sum_{i=1}^N \exp(Q_h(x_1)^\T K_h(x_i))},\]
for $Q_h, K_h, V_h: \R^{m \times m}$.

If we assume that this construction exists and is known explicitly by both Alice and Bob, we design a communication protocol for Alice and Bob to solve $\disj$ by sharing $O(mpH)$ bits with one another.
Let Alice possess $a \in\bit^n$ and Bob $b \in \bit^n$, with $n = \frac{N-1}{2}$.
\begin{enumerate}
\item Alice and Bob compute $(x_{2}, \dots, x_{\frac{N+1}2})$ and $(x_{\frac{N+3}2}, \dots, x_N)$ from $a$ and $b$ respectively.
\item Alice computes an $O(p\log\log N)$-bit approximation of the logarithm of the first half of the softmax normalization term for each attention head and sends the result to Bob. That is, she sends Bob \[L_{h, a} = \log\paren{\sum_{i=1}^{\frac{N+1}2} \exp(Q_h(\phi(x_1))^\T K_h(\phi(x_i)))}\] for each $h \in [H]$. This requires transmitting $O(pH\log\log N)$ bits.
\item Bob finishes the computation of normalization terms \[L_h = \log\paren{\exp(L_{h,a}) + \sum_{i=\frac{N+3}2}^{N} \exp(Q_h(\phi(x_1))^\T K_h(\phi(x_i)))}\] for each $h$ and sends the result back to Alice (up to $O(p\log\log N)$-bits of precision). This again requires transmitting $O(pH \log\log N)$ bits.
\item Alice computes the partial convex combination of the first $\frac{N+1}2$ value vectors stipulated by the attention matrix \[S_{h, a} = \frac{\sum_{i=1}^{\frac{N+1}2} \exp(Q_h(\phi(x_1))^\top K_h(\phi(x_i))) V_h(\phi(x_i))}{\exp(L_h)} \in \R^m\]
for each $h$ and sends the partial combinations to Bob. This requires transmitting $O(mpH \log\log N)$ bits (using the same precision as above).
\item Bob finishes the computation of the convex combinations \[f_h(X) = S_{h,a} + \frac{\sum_{i=\frac{N+3}2}^{N} \exp(Q_h(\phi(x_1))^\top K_h(\phi(x_i))) V_h(\phi(x_i))}{\exp(L_h)} \in \R^m.\]
Bob concludes the protocol by computing and outputting $f(X)_1$, using his knowledge of each $f_h(X)$ and of $\psi$.
\end{enumerate}

By the equivalences previously established, Bob returns 1 if and only if $\disj(a, b) = 1$. 
Because the protocol requires $O(mpH \log\log N)$ bits of communication, we can only avoid contradicting Fact~\ref{fact:disj} if $mpH \geq \Omega(n /\log\log N) = \Omega(N / \log\log N)$.
\end{proof} 

\begin{remark}\label{remark:depth}
The domain restrictions to $\triid$ stipulated in \Cref{eq:domain} make the $\triid$ problem substantially easier to solve than the full-domain case. 
Indeed, under the domain restrictions, \[\triid(X)_1 = \max_{i \in \{2, \dots, \frac{N+1}2\}}\pairid(X)_i,\] which is computable by a two-layer single-headed transformer network with constant embedding dimension. 
The first layer computes each $\pairid(X)_i$ with the construction in the proof of \Cref{thm:2attn-pairid}, and the second computes the maximum of the previous outputs by using those outputs as key vectors. 

While \Cref{iconj:2trans-triid} suggests that two layers are insufficient to compute the full-domain version of $\triid$, this restricted variant introduces a concise \emph{depth separation} (see \cite{es16, telgarsky16, daniely17}) between one- and two-layer transformer models.
\end{remark}

\subsection{Higher-order tensor attention}\label{assec:tensor-attn}
We introduce a novel category of higher-order tensor-based transformer models in order to show that problems like $\triid$ that are hard to compute with standard transformer models can be made solvable.
An $s$-order transformer is designed to efficiently compute dense $s$-wise interactions among input elements in an analogous manner to how standard transformers compute pairwise interactions. 
(We think of a standard transformer as second-order.) 
Before defining the new type of attention, we introduce notation to express the needed tensor products.

For vectors $v^1 \in \R^{N_1}$ and $v^2 \in \R^{N_2}$, let $v^1 \otimes v^2 \in \R^{N_1N_2}$ denote their \emph{Kronecker product} by $(v^1 \otimes v^2)_{(i_1-1) N_2 + i_2} = v^1_{i_1} v^2_{i_2}$.
The \emph{column-wise Kronecker product} of matrices $A^1 \in \R^{N_1 \times m}$ and $A^2 \in \R^{N_2 \times m}$ is \[A^1 \star A^2 = [A_1^1 \mid \dots \mid A_m^1] \star [A_1^2 \mid \dots \mid A_m^2] = [A_1^1 \otimes A_1^2 \mid \dots \mid A_m^1 \otimes A_m^2] \in \R^{N_1N_2 \times m}. \]

The following generalizes the definition of self-attention.
\begin{definition}
For order $s \geq 2$, input dimension $d$, output dimension $d'$, embedding dimension $m$, bit complexity $p$, and matrices $Q, K^1, \dots, K^{s-1} \in \R^{d \times m}$ and $V^1, \dots, V^{s-1} \in \R^{d \times d'}$ (encoded with $p$-bit fixed-point numbers), an \emph{$s$-order self-attention unit} is a function $f_{Q, K, V}: \R^{N \times d} \to \R^{N \times d'}$ with
\[f_{Q, K, V}(X) = \sm(\underbrace{XQ}_{\in \R^{N \times m}} \underbrace{((XK^1) \star \dots \star (XK^{s-1}))^\T}_{\in\R^{m \times N^{s-1}}}) \underbrace{((XV^1) \star \dots \star (XV^{s-1}))}_{\in \R^{N^{s-1} \times d'}}.\]
The input to the row-wise softmax is an $N \times N^{s-1}$ matrix. 
Let $\attn_{d, m, d', p}^{\otimes s}$ denote the set containing all such attention units. 
\end{definition}

Note that $\attn_{d, m, d', p}^{\otimes 2} = \attn_{d, m, d', p}$.
Because $s$-order self-attention units have the same domain and codomain as standard self-attention, multiple units can be analogous combined to construct multi-headed attention units and full transformer models.
We define $\attn_{d, m, d', p}^{M, \otimes s}$ and $\tr_{d, m, d', p}^{D, H, \otimes s}$ accordingly.

The purpose of the $s$-order transformer model as a theoretical construct is to posit how strictly generalizing the architecture in order to permit higher order outer products transfers the expressive powers of standard transformer architectures to more sophisticated interactions among elements of the input sequence $X$.
The model is not defined to be immediately practical, due to its steep computational cost of evaluation.

However, the trade-offs involved in using such architectures resemble those already made by using transformer models instead of fully-connected networks.
Transformers are already computationally wasteful relative to the number of the parameters, and these models likely succeed only because extremely efficient factorized parameterization exist.
Likewise, third-order transformers could indeed be practical if even more factorization proves useful, since the computational costs may prove mild if the embedding dimension $m$, number of heads $H$, and depth $D$ necessary to succeed on a task exceed the sequence length $N$ for standard second-order transformers.

\subsection{Efficient representation of $\triid$ with third-order self-attention}\label{ssec:3attn-triid}
\begin{theorem}[$\triid$ construction with third-order self-attention]\label{thm:3attn-triid}
For any sequence length $N$, input range $M = N^{O(1)}$, and fixed-precision bit complexity $p = O(\log M)$, there exists a third-order transformer architecture $f \in \tr_{1, m, 1, p}^{1, 1, \otimes 3}$ with a single self-attention unit with embedding dimension $m = 5$ such that for all $X \in [M]^{N }$, $f(X) = \triid(X)$. 
\end{theorem}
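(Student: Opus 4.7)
The plan is to mimic the single-head construction from \Cref{thm:2attn-pairid} but with a three-way trigonometric identity that is compatible with the Kronecker-product structure of third-order attention. The key identity is
\[
\cos(A+B+C) = \cos A\cos B\cos C - \cos A\sin B\sin C - \sin A\sin B\cos C - \sin A\cos B\sin C,
\]
which decomposes into exactly four rank-one terms, each of the form $q_k(A)\, k^1_k(B)\, k^2_k(C)$. Since an entry of the attention logit matrix in the third-order unit is $\sum_{k=1}^{m}(XQ)_{i,k}(XK^1)_{j_1,k}(XK^2)_{j_2,k}$, these four coordinates suffice to produce
\[
\langle Q\phi(x_i),\, K^1\phi(x_{j_1})\odot K^2\phi(x_{j_2})\rangle = c\cos\bigl(2\pi(x_i+x_{j_1}+x_{j_2})/M\bigr),
\]
which is maximized (and equal to $c$) iff $x_i+x_{j_1}+x_{j_2}\equiv 0\pmod M$, and otherwise is at most $c\bigl(1-\Omega(1/M^2)\bigr)$.

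First, I would set $A_i = 2\pi x_i/M$ and, for each real token, define the input MLP so that
\begin{align*}
Q\phi(x_i) &= c\,(\cos A_i,\,-\cos A_i,\,-\sin A_i,\,-\sin A_i,\,1),\\
K^1\phi(x_i) &= (\cos A_i,\,\sin A_i,\,\sin A_i,\,\cos A_i,\,0),\\
K^2\phi(x_i) &= (\cos A_i,\,\sin A_i,\,\cos A_i,\,\sin A_i,\,0),\\
V^1\phi(x_i) &= V^2\phi(x_i) = 1.
\end{align*}
Then I would append a single blank token $x'$ (as is already allowed by the model) with $Q\phi(x')=\vec 0$, $K^1\phi(x')=K^2\phi(x')=e_5$, and $V^1\phi(x')=V^2\phi(x')=0$. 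This fifth coordinate acts as a ``bias slot'': the blank-blank pair receives logit $c$, the blank-real and real-blank pairs receive logit $0$, and the real-real pairs receive exactly $c\cos(2\pi(x_i+x_{j_1}+x_{j_2})/M)$. Because the values are entrywise products of the two $V$-embeddings, only real-real pairs contribute a non-zero value to the numerator.

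Next, I would pick $c = \Theta(M^2\log N)$ so that, after softmax over all $(N+1)^2$ pairs, non-matching real-real logits are suppressed by a factor of at least $N^2$ relative to the matching ones; the $2N$ mixed pairs contribute $O(N)$ to the denominator and are negligible against $\exp(c)$. Letting $\beta_i$ be the number of triples $(j_1,j_2)\in[N]^2$ with $x_i+x_{j_1}+x_{j_2}\equiv 0\pmod M$, this choice of $c$ will force the $i$th output coordinate to lie in $[1/6,1]$ when $\beta_i\ge 1$ and in $[0,1/6]$ when $\beta_i=0$, by exactly the same softmax estimates as in the proof of \Cref{thm:2attn-pairid}. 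A two-ReLU output MLP then thresholds the coordinate to $\{0,1\}$, yielding $\triid(X)_i$ exactly. Throughout, every quantity is representable with $O(\log M)$ bits of fixed precision, as $c$ and all trigonometric values have magnitudes polynomial in $M$.

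The main obstacle is a bookkeeping one: because the denominator sums over $(N+1)^2$ rather than $N+1$ terms, the gap between matching logit $c$ and the runner-up logit $c(1-\Theta(1/M^2))$ has to be blown up by an extra $\log N$ factor, and one must check that the blank-involved pairs (of which there are $2N+1$) do not overwhelm the contribution of a single matching triple. Verifying that $c = \Theta(M^2\log N)$ simultaneously gives the needed suppression, keeps $p = O(\log M)$, and leaves a constant output margin for the thresholding MLP is the only step that requires genuine care; everything else is a direct lift of the $\pairid$ construction to the Kronecker setting enabled by the three-angle cosine identity.
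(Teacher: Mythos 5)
Your proof is correct and takes essentially the same approach as the paper: both use the three-angle cosine expansion $\cos(A+B+C)$ decomposed into four rank-one trigonometric products placed in the Kronecker-product slots, a fifth coordinate with a blank token to create a bias in the softmax denominator, a choice of $c=\Theta(M^2\log N)$, and a two-ReLU thresholding MLP. The only difference is an inconsequential permutation of signs/coordinates in $Q$, $K^1$, $K^2$ (the paper places the minus signs inside the $K$-embeddings rather than the $Q$-embedding), and your construction is equally valid since both recover $c\cos(2\pi(x_i+x_{j_1}+x_{j_2})/M)$ exactly.
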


\begin{proof}[Proof of \Cref{thm:3attn-triid}] 
The proof is almost identical to that of \Cref{thm:2attn-pairid}, except that we instead use a different key and query transforms to express a different trigonometric function:
\begin{align*}
	Q\phi(x_i) &=c \paren{\cos\paren{\frac{2\pi x_{i}}M}, - \cos\paren{\frac{2\pi x_{i}}M}, \sin\paren{\frac{2\pi x_{i}}M}, \sin\paren{\frac{2\pi x_{i}}M}, 1}, \\
	K^1\phi(x_i) &=\paren{\cos\paren{\frac{2\pi x_{i}}M},  \sin\paren{\frac{2\pi x_{i}}M}, -\cos\paren{\frac{2\pi x_{i}}M}, \sin\paren{\frac{2\pi x_{i}}M}, 0} ,  \\
	K^2\phi(x_i) &=\paren{\cos\paren{\frac{2\pi x_{i}}M},  \sin\paren{\frac{2\pi x_{i}}M}, \sin\paren{\frac{2\pi x_{i}}M}, -\cos\paren{\frac{2\pi x_{i}}M}, 0}  .
\end{align*}
Together, these ensure that the resulting tensor products reduce to a trigonometric expression that is maximized when $x_i + x_{j_1} + x_{j_2} = 0 \pmod M$. That is,
\[(\phi(X)Q  ((\phi(X) K^1) \star (\phi(X) K^2))^\T)_{i,( j_1-1) + j_2}
= c  \cos\paren{\frac{2\pi(x_{i} + x_{j_1} + x_{j_2})}{M}}.\]
We similarly let $V^1 \phi(x_i) = V^2 \phi(x_i) = \vec1$ and $V^1 \phi(x') = V^2 \phi(x') = \vec0$.
The remaining choice of $c$ and the output MLP, and the analysis of the softmax proceeds identically to the previous proof. 
\end{proof}

\subsection{Heuristic argument for Informal Conjecture~\ref{iconj:2trans-triid}}\label{assec:conj}

\begin{conjecture}[Formal version of Informal Conjecture~\ref{iconj:2trans-triid}] \label{conj:2trans-triid}
For sufficiently large $N$ and any $d\geq 1$, for all $M \geq N+1$ and $mpHD \leq N^{\Omega(1)}$, there is no $f \in \tr_{1, m, 1, p}^{D, H}$ satisfying $f(X) = \triid(X)$ for all $X \in [M]^N$.
\end{conjecture}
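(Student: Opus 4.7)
My approach would be to extend the two-party communication complexity reduction used in \Cref{thm:2heads-triid} to a multi-round, multi-party setting that reflects the layered structure of transformers. The key intuition is that any $D$-layer transformer of width $m$, heads $H$, and precision $p$, when its tokens are partitioned across $k$ parties, can be simulated by a protocol in which each party transmits $O(mpH \cdot N/k)$ bits per layer (namely, the post-attention, post-MLP representation of each of its tokens after that layer). Thus any such transformer induces a $k$-party protocol using at most $O(mpHDN/k)$ bits of total communication, and a matching lower bound for a carefully chosen communication problem would yield the conjecture.

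\textbf{First step.} I would embed a hard communication problem into $\triid$. A natural candidate is three-party (or more generally $k$-party) set disjointness in the number-in-hand blackboard model: each party $\pi \in [k]$ holds a set $S_\pi \subseteq [n]$, and they wish to determine whether there exists $j \in [n]$ with $j \in S_\pi$ for all $\pi$. Following the template of \Cref{thm:2heads-triid}, I would partition the input positions $[N]$ into blocks of size $N/k$ (one per party) and design an encoding modulo $M$ so that the unique way for three tokens to sum to zero mod $M$ is to pick one index $j$ that appears in every $S_\pi$. A correct $\triid$ output at a designated position thus decides the underlying disjointness instance.

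\textbf{Second step.} I would then carefully bound the per-round communication of the transformer simulation. Between layers, parties must share the full updated representations of their tokens with everyone else (attention is all-to-all), contributing $O(mpH \cdot N/k)$ bits per party per layer, hence $O(mpHDN)$ in total across all layers. To push the lower bound to $mpHD \geq N^{\Omega(1)}$, I would then invoke an $\Omega(n/\mathrm{poly}(k))$ lower bound on the communication complexity of multi-party disjointness (e.g., the Chakrabarti--Khot--Sun-style direct-sum bounds, or their variants for three-party NIH disjointness), and balance $k$ against $n = \Theta(N)$ to optimize the resulting inequality.

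\textbf{Main obstacle.} The sticking point, which is why this remains a conjecture, is the gap between available multi-party NIH lower bounds and what is needed. The naive accounting above only yields a contradiction if the chosen communication problem has complexity genuinely super-linear in the per-party communication budget, and known bounds for three-party disjointness are tight only up to $\tilde{\Omega}(\sqrt{n})$ or $\tilde{\Omega}(n^{1/3})$ depending on the model. Worse, transformer attention is not a generic blackboard protocol: each layer's message from a party depends on the current representations in a highly constrained way (a softmax-weighted average of value vectors across \emph{all} tokens). Exploiting this constraint to strengthen the lower bound, or alternatively formalizing the information-theoretic heuristic argument in \Cref{assec:conj}---which would argue that ``pairwise" attention layers can only build up triple-wise correlations through an amount of information that grows with $mpHD$---is the heart of the difficulty, and likely requires either a new direct-product theorem for disjointness in restricted models or a carefully designed hybrid argument over layers tracking the entropy of intermediate representations.
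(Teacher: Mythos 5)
The statement is a \emph{conjecture}; the paper does not prove it. What the paper offers is a heuristic information-theoretic argument in \Cref{assec:conj}: one tracks, for each token, the set of input positions on which its representation ``depends,'' argues that this set can grow by at most $mp$ positions per head per layer, and then shows that with high probability under a planted distribution no token ever sees two of the three planted indices until $mpHD = \Omega(\sqrt{N})$. Your proposal is a genuinely different route, attacking the problem via multi-party communication complexity rather than via an information-flow counting heuristic, and you are right that the paper's two-party disjointness reduction (\Cref{thm:2heads-triid}) is the natural template to try to extend. You also correctly flag that the conjecture is open precisely because this extension is hard, and you mention the paper's heuristic as an alternative to formalize.

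However, there is a concrete arithmetic gap in your plan that is worth naming, because it explains \emph{why} the $D>1$ case is so much harder than \Cref{thm:2heads-triid} and why the known multi-party disjointness bounds are not the true bottleneck. Your simulation has each party broadcast the post-layer representations of its $N/k$ tokens, at cost $O(mp \cdot N/k)$ bits per party per layer, hence $O(mpDN)$ total across $k$ parties and $D$ layers. But the entire $\triid$ instance is only $N\log M = O(N\log N)$ bits, so the hardest communication problem you can embed has $n = O(N \log N)$, and no set-disjointness variant in any model has complexity exceeding $O(n)$. Plugging in an ideal $\Omega(n)$ lower bound gives $mpDN = \Omega(N\log N)$, i.e.\ $mpD = \Omega(\log N)$ — nowhere near $N^{\Omega(1)}$. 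The $N$ factor in the simulation cost exactly cancels the $N$ factor in the problem size, so even the best possible multi-party disjointness theorem would not help. The reason \Cref{thm:2heads-triid} succeeds is that a \emph{single} attention layer admits a dramatically cheaper two-party simulation, $O(mpH\log\log N)$ bits total rather than $O(mpHN)$: Alice and Bob only need to exchange partial softmax normalizers and partial value-weighted sums for the one query position they care about, not per-token state. Once a second layer is present, every token's representation depends globally on all inputs, and to continue the simulation a party really does seem to need the full updated representations of the other parties' tokens, reintroducing the $N$ factor. That is the obstruction your proposal would have to overcome, and it is orthogonal to (and more severe than) the quality of the multi-party disjointness bound. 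The paper's \congest-based lower bounds for $\fivecycle$ and $\tricycledir$ (\Cref{thm:2trans-fivecycle,thm:2trans-tricycle}) sidestep this because the graph input has $\Theta(N^2)$ bits, so an $n = \Theta(N^2)$ disjointness instance can absorb the $N$-factor simulation overhead and still leave an $\Omega(N/\mathrm{polylog})$ bound; $\triid$ lacks that slack. A proof of \Cref{conj:2trans-triid} therefore needs either a simulation whose cost is genuinely sublinear in $N$ per layer (which global attention seems to preclude), or a different framework altogether — e.g.\ a rigorous version of the entropy/information-flow argument in \Cref{assec:conj}, which you gesture at in your final paragraph.
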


We believe that the conjecture holds due to a heuristic information-theoretic argument.
Define the distribution $\mathcal{D}$ over inputs $X \in \R^{N}$ that will be used to show that the model cannot compute $\triid$ for $M = N^4$ with high probability.
We draw $\bX$ from $\mathcal{D}$ as follows:
\begin{itemize}
	\item[($E_1$)] With probability $\frac12$, draw each $\bx_i$ iid from $\unif([M])$. 
	\item[($E_2$)]  With probability $\frac12$, draw $j_1, j_2, j_3$ iid from $\unif({[N] \choose 3})$. For all $i \neq j_3$, draw each $\bx_i$ iid from $\unif([M])$. Let $\bx_{j_3} = - \bx_{j_1} - \bx_{j_2} \pmod M$.
\end{itemize}
Note that under event $E_1$, a three matching elements exist with probability at most $\frac1N$, and \[\pr{\triid(\bX) = \vec{0} \mid E_1} \geq 1 - \frac1N.\]
Under event $E_2$, a triple of matching elements is always planted, so $\triid(\bX) \neq \vec0$.
It would suffice to prove that---unless a transformer is sufficiently large---it is impossible to determine whether $\triid(\bX) = \vec{0}$ with probability at least $0.9$.

Under $\mathcal{D}$, any subset of $\{\bx_1, \dots, \bx_N\}$ consists of iid integers drawn uniformly from $[M]$, unless all of $\bx_{j_1}, \bx_{j_2}, \bx_{j_3}$ appear in the subset.
Consider a transformer architecture with $p$-bit precision, $m$-dimensional embeddings, $H$ heads per layer, and $D$ layers.
We argue informally that a single-element output of a self-attention unit can take into account information about $mp$ more inputs $\bx_1, \dots, \bx_{N}$ than that it had in the previous layer.
By induction, after $D$ layers of $H$-headed self-attention with interleaved MLPs, each element is a function of at most $mpHD$ inputs. 
Until an element exists that is a function of at least two of the three of $\bx_{j_1}, \bx_{j_2}, \bx_{j_3}$, we assume that the elements ``known'' by each output are chosen independently of the indices $j_1, j_2, j_3$. (Given two elements of the triple, the third element can be identified with a single self-attention unit.)
Hence, we argue that it suffices to show that the probability any two elements of the triple $j_1, j_2, j_3$ occurring within any of the $N$ sets of $mpHD$ inputs is vanishingly small for sufficiently large transformer parameters. 
The probability of single collection having any of two of the three inputs is at most \[\frac{3{mpHD \choose 2} }{{N \choose 2}} \leq 3\paren{\frac{empHD}N}^2.\]
Thus, the probability that any collection has all three inputs is no more than $\fracl{3(empHD)^2}{N}$.
If $mpHD = O(\sqrt{N})$, then the randomly chosen triple will not jointly appear as the outcome of a single element of a self-attention unit with probability at least $0.9$, and the transformer will be unexpected to successfully distinguish between the two cases.

Should the conjecture hold, it would represented a tight lower bound on the size of the smallest standard transformer architecture necessary to compute $\triid$.

\begin{theorem}[Tightness of Conjecture~\ref{conj:2trans-triid}]\label{thm:2trans-triid}
For any sequence length $N$, if the input range satisfies $M = N^{O(1)}$ and the transformer size parameters satisfy $p \geq \log(M)$, $H = 1$, $m \geq 4$, and $mD \geq CN^2$ for some universal constant $C$, then there exists a transformer architecture $f \in \tr_{1, m, 1, p}^{D, H}$ such that $f(X) = \triid(X)$.
\end{theorem}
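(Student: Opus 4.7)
The plan is to build a depth-$D$ transformer in which each layer $\ell$ is responsible for a single pair of sequence indices $(p^{(\ell)}, q^{(\ell)})$; by enumerating all $\binom{N+1}{2} = O(N^2)$ unordered pairs (including self-pairs $(p,p)$ so as to cover triples with repeated indices, which $\triid$'s existential quantifier permits) in successive layers, we obtain $D = O(N^2)$, so that $mD \geq CN^2$ holds for any $m \geq 4$ and a universal constant $C$; any slack from larger $m$ or $D$ is absorbed by identity layers or unused coordinates. At every layer, the embedding at each position $i$ carries the tuple $(x_i, i, c_i, 1)$, where $c_i \in \{0,1\}$ is a running indicator initialized to $0$ and flipped to $1$ as soon as some earlier layer detects a triple containing $x_i$ summing to $0 \kernmod{M}$. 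A final MLP projects $c_i = \triid(X)_i$.

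Within layer $\ell$, I would first design query and key projections using a trigonometric positional encoding (in the style of \Cref{thm:2attn-pairid}) so that, for each position $i$, the softmax attention weights concentrate on $\{i, p^{(\ell)}, q^{(\ell)}\}$, with weight $\alpha$ on $i$ itself and $\beta$ on each of $p^{(\ell)}, q^{(\ell)}$, chosen so that $\alpha > 2\beta > 0$; the pre-softmax logits are scaled so that the remaining weights lie below $2^{-p}$. The layer-specific MLP preceding the attention has hard-coded knowledge of $(p^{(\ell)}, q^{(\ell)})$, so at each position $j$ (whose index is visible through the positional encoding) it emits a value vector whose distinct coordinates hold both an unfiltered channel $x_j$ (always) and a filtered channel $x_j \cdot \indicator{j \in \{p^{(\ell)}, q^{(\ell)}\}}$, and analogously for $c_j$ and the positional index $j$.

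The attention output at position $i$ then takes the form of known linear combinations such as $A_1 = \alpha x_i + \beta(x_{p^{(\ell)}} + x_{q^{(\ell)}})$ and $A_2 = \beta(x_{p^{(\ell)}} + x_{q^{(\ell)}})$, from which the post-attention MLP recovers $x_i = (A_1 - A_2)/\alpha$ and the pair sum $x_{p^{(\ell)}} + x_{q^{(\ell)}} = A_2/\beta$ separately; analogous coordinate pairs allow recovery of $c_i$ and the position index $i$. The MLP then evaluates the check $x_i + x_{p^{(\ell)}} + x_{q^{(\ell)}} \equiv 0 \kernmod{M}$, updates $c_i^{\text{new}} = c_i \vee \indicator{\text{check passes}}$, and re-emits $(x_i, i, c_i^{\text{new}}, 1)$ for the next layer.

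The principal obstacle is the intrinsic tension of single-head attention: one softmax pattern is shared across every value channel, so the contribution of $x_i$ itself is aliased with those of $x_{p^{(\ell)}}$ and $x_{q^{(\ell)}}$ in each output coordinate. The resolution is the asymmetry $\alpha \neq \beta$ combined with position-filtered value channels, which together supply the post-attention MLP with enough linearly independent combinations to disentangle the relevant quantities. Two minor details complete the construction: ensuring softmax weights are close enough to their nominal values under fixed precision (handled by scaling logits large enough, which $p \geq \log M$ permits), and observing that the boundary case $i \in \{p^{(\ell)}, q^{(\ell)}\}$ is automatically covered by the self-pair enumeration and by $\triid$'s permissive definition on triples with repeated indices.
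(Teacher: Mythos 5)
Your high-level plan — enumerate candidate pairs across layers, check $x_i + x_{p} + x_{q} \equiv 0 \pmod M$ at each layer, and carry forward a running match bit — is the same as the paper's. The main gap is that you assign exactly \emph{one} pair to each layer, which forces $D \geq \binom{N+1}{2}$ regardless of $m$. But the theorem asserts a transformer in $\tr_{1,m,1,p}^{D,1}$ for \emph{any} given $(m,D)$ with $m\geq 4$ and $mD \geq CN^2$; in particular for $m \approx N^2$ and $D = O(1)$ a construction must still exist, and your scheme offers none. The paper's proof resolves this by letting each layer process a batch of $\ell := \lfloor m/2\rfloor - 1$ pairs simultaneously: the (single, position-independent) query concentrates the softmax uniformly on the $2\ell$ indices appearing in that layer's pair set $P_k$, and the value map routes each such $x_j$ into a \emph{dedicated coordinate slot} (two coordinates per pair, hence the $m/2$), so the post-attention MLP sees all $\ell$ pair values at once. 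That is precisely where the $mD$ trade-off comes from, and it is the step your proof is missing. The sentence ``any slack from larger $m$ or $D$ is absorbed by identity layers or unused coordinates'' handles the case where the given budget exceeds what you need, but not the opposite case where $D$ is small and $m$ must do the work.

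A secondary point: your per-layer mechanism is more convoluted than it needs to be. You use position-dependent queries so that the softmax lands on $\{i, p^{(\ell)}, q^{(\ell)}\}$, then disentangle $x_i$ from $x_p + x_q$ via the $\alpha\neq\beta$ asymmetry and the filtered/unfiltered value channels. The paper instead uses a single global query (so the attention output is position-independent) and relies on the residual connection to carry $x_i$ forward, with attention reserved purely for importing the pair values into distinct coordinates. This avoids the aliasing problem entirely and, importantly, is what makes the multi-pair-per-layer generalization clean. Your treatment of the boundary case $i \in \{p^{(\ell)}, q^{(\ell)}\}$ (where the filtered and unfiltered channels coincide and the subtraction trick fails) is waved off; it is probably repairable since the MLP has access to both $i$ and the hard-coded $(p^{(\ell)}, q^{(\ell)})$ and can branch, but you should say so explicitly rather than appeal to ``$\triid$'s permissive definition,'' which does not by itself resolve the aliasing.
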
  
\begin{proof}
We construct an architecture that collects a group of candidate pairs in each layer of single-headed self-attention and verifies whether there exists a triple incorporating each pair that satisfies the summation property. 
Then, all candidate triples are disposed of, and the subsequent layer collects a new family of candidates.

To do so, we first let $\ell := \floor{ \frac{m}{2}} - 1 \geq 1$ represent the total number of pairs shared in each layer of attention.
We let $P = {[N] \choose 2}$ represent a collection of all pairs of indices and partition it into $D$ subsets $P_1, \dots, P_D$, each containing $\ell$ distinct pairs.
(Since $\abs{P} = \frac{N(N+1)}2$, any $D$ satisfying the theorem's preconditions is sufficiently large for this to be a proper partition.)
Our construction ensures that there exist $x_i + x_{j_1} +  x_{j_2} = 0 \pmod M$ for $(j_1, j_2) \in P_k$, then the $k$th layer of self attention will verify its existence and mark $x_i$ as belonging to the match. 
Throughout the network, we maintain that the first two dimensions of any embedding of the $i$th element correspond to $x_i \in [M]$ and a bit indicating whether a match has been found yet containing $x_i$.

Consider the first layer of self-attention, and let $P_1 =\{ (i_1, j_1), \dots, (i_\ell, j_\ell)\}$.
We set the input MLP $\phi_1: \R^d \to \R^m$ and respective matrices $Q^1, K^1 \in \R^{m \times m}$ such that 
\[Q^1 \phi_1(x_i) = c e_1 \ \text{and}  \ K^1 \phi_1(x_i) = \begin{cases} e_1 & \iif i \in P_1 \\ \vec0 & \text{otherwise,} \end{cases} \]
for sufficiently large $c$. We additionally let 
\[V^1\phi_1(x_i) = 
\begin{cases}
(2\ell + 1)\cdot (x_i; 0; \vec0) & i \not\in P_1  , \\
(2\ell + 1)\cdot (x_i; 0;  x_i e_{2 \iota - 1})& i = i_\iota , \\
(2\ell + 1)\cdot (x_i; 0; x_i e_{2 \iota})& i =j_\iota.
\end{cases}\]
By making use of a residual connection, we ensure that the $i$th outcome of the self-attention is $(x_i, 0, x_{i_1}, x_{j_1}, \dots, x_{i_\ell}, x_{j_\ell})$.
We encode an MLP to compute 
\[(x_i, 0, x_{i_1}, x_{j_1}, \dots, x_{i_\ell}, x_{j_\ell}) \mapsto \paren{x_i, \indicator{\exists \iota \in [\ell] \st x_i + x_{i_\iota} + x_{j_\iota} = \vec0 \pmod M}; \vec0}.\]

We repeat this construction $D$ times, with the only modifications being the replacement of $P_1$ and the fact that the second dimension of the embedding remains 1 after being set to that value.
After $D$ layers, the final MLP outputs the value of the second dimension, which will be 1 if and only if the respective $x_i$ belongs to a three-way match. 
\end{proof}

\subsection{Sharper separations for embedded subgraph detection problems}\label{ssec:graph}
In pursuit of proving separations analogous to the one between Theorem~\ref{thm:3attn-triid} and Conjecture~\ref{conj:2trans-triid}, we draw techniques for proving lower bounds for graph problems in the \congest model of distributed computation with restricted bandwidth~\citep{peleg2000distributed}.\footnote{At a high level, the \congest model features $N$ players that communicate in synchronous rounds over a network (an undirected graph with $[N]$ as its vertices) to solve a computational problem~\cite{peleg2000distributed}.
  In each round, each player can send a message to each of its neighbors.
  The computation that each player does with the messages received from its neighbors is unrestricted; the primary resources considered in \congest is the number of rounds of communication and the message sizes.
  Although \congest is often studied for solving computational problems on input graphs with vertices $[N]$, the input graph need not be the same as the communication network.}

The problems we consider take, as input, the adjacency matrix $X \in \{0,1\}^{N \times N}$ of an $N$-vertex graph $G = (\mathcal{V},\mathcal{E})$ with $\mathcal{V}=[N]$, so $x_{i,j} = \indicator{\text{$(i,j) \in \mathcal{E}$}}$.
We may regard each row of $X$ as a high-dimensional ($d=N$) embedding of the $i$-th vertex containing information about which (outgoing) edges are incident to the $i$-th vertex.
We consider the following problems:
\begin{align*}
\tricycledir(X) &= \paren{\indicator{\exists j_1, j_2 \in [N] \st x_{i, j_1} x_{j_1, j_2} x_{j_2, i} = 1}}_{i \in [N]} ; \\
\fivecycle(X) &= \paren{\indicator{\exists j_1, j_2, j_3 , j_4\in [N] \st x_{i, j_1} x_{j_1, j_2} x_{j_2, j_3} x_{j_3, j_4} x_{j_4, i}= 1}}_{i \in [N]},  \\ & \qquad \text{with} \ \mathrm{dom}(\fivecycle) = \{X: \ X = X^\T\} .
\end{align*}
The former treats $X$ as a directed graph (where $X$ need not be symmetric) and asks whether each input belongs to a directed 3-cycle.
The latter insists that $X$ be an undirected graph by enforcing symmetry and determines membership in (undirected) 5-cycles.

However, solving these problems with any transformer model of constant order trivially requires having the product of the precision $p$, embedding dimension $m$, heads per layer $H$, and depth $D$ grow polynomially with $N$, since each attention unit is limited to considering at most $pm$ bits of information from each input.
Such a lower bound is not interesting for dense graphs, where every vertex may have $\Omega(N)$ incident edges; the bottleneck is not due to any feature of standard attention units (and would persist with higher-order attention).

To circumvent this issue, we consider an augmented self-attention unit, which permits each element of the self-attention tensor to depend on both its respective inner product and on the presence of edges among corresponding inputs.

\begin{definition}
For order $s \geq 2$, input dimension $d$, output dimension $d'$, embedding dimension $m$, bit complexity $p$, matrices $Q, K^1, \dots, K^{s-1} \in \R^{d \times m}$ and $V^1, \dots, V^{s-1} \in \R^{d \times d'}$ (encoded with $p$-bit fixed-point numbers), and cell-wise attention tensor function $\kappa: \bit^{s(s-1)} \times \R \to \R$, an \emph{$s$-order graph self-attention unit} is a function $f_{Q, K, V}: \R^{N \times d} \to \R^{N \times d'}$ with
\[f_{Q, K, V}(X) = \sm(\kappa(X, XQ ( (XK^1) \star \dots \star (XK^{s-1}))^\T)) ((XV^1) \star \dots \star (XV^{s-1})).\]
For attention tensor $A \in \R^{N^{\otimes s}}$, we abuse notation by writing $\kappa(X, A)$ as short-hand for the particular cell-wise application of a fixed function, incorporating information about all relevant edges: \[\kappa(X, A)_{i_1, \dots, i_s}  = \kappa(x_{i_1, i_2}, x_{i_1, i_3}, \dots, x_{i_s, i_{s-1}}, x_{i_s, i_{s-2}}, A_{i_1, \dots, i_s}).\]
Let $\attngraph_{d, m, d', p}^{\otimes s}$ and $\trgraph_{d, m, d', p}^{D, H, \otimes s}$ denote all such attention units and all such transformers respectively.
\end{definition}

Now, we provide four results that exhibit separations between orders of graph self-attention.

\begin{theorem}[Hardness of representing $\fivecycle$ with standard graph transformer]\label{thm:2trans-fivecycle}
For sufficiently large $N$, any $f \in  \trgraph_{N, m, 1, p}^{D, H}$ satisfying $f(X) = \fivecycle(X)$ for all $X \in\bit^{N \times N}$ with $X = X^\T$ requires $mpHD = \Omega(N / \log^2 N)$.
\end{theorem}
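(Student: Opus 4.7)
The plan is to follow the two-party communication template of \Cref{thm:qsa-lb-bound-prec} and \Cref{thm:2heads-triid}, reducing $\disj$ to $\fivecycle$ via a vertex partition. Assume some $f \in \trgraph_{N, m, 1, p}^{D, H}$ that computes $\fivecycle$ exactly, and partition $[N]$ into halves $S, \bar S$ of size $N/2$, with Alice holding the rows $\{x_i : i \in S\}$ and Bob holding $\{x_j : j \in \bar S\}$. Because $X$ is symmetric, every cross-cut edge is known to both parties from their own rows; each party's private information is only the adjacency pattern within its own half.

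\textbf{Simulation step.} I will simulate $f$ layer by layer. Within each of the $D$ layers and each of the $H$ heads, every token's updated embedding is a softmax-weighted combination of values in which both the query--key inner products and the edge inputs to $\kappa$ are needed cell by cell. Alice transmits to Bob the $O(mp)$-bit query, key, and value embeddings of each of her $N/2$ tokens and receives the analogous embeddings from Bob; all of $\kappa$'s edge inputs are shared by symmetry, so both parties reconstruct the full attention tensors and independently update their own tokens' embeddings before the next layer. Aggregating over the $HD$ layer--head pairs and including an $O(\log N)$ per-round overhead for synchronization and the element-wise MLPs, the total cross-cut communication is $O(NmpHD \log N)$ bits.

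\textbf{Reduction from $\disj$.} I will next embed a $\disj$ instance on $n = \Theta(N^2/\log N)$ bits into an $N$-vertex undirected graph whose adjacency matrix $X$ satisfies $\fivecycle(X) \neq \vec0$ iff $\disj(a,b) = 1$, and whose edges split according to the $(S, \bar S)$ partition in the manner required above. Reserving $O(1)$ hub vertices $u, v, w$ and placing $\Theta(N)$ vertices on each side, I will identify each index $k \in [n]$ with a pair $(l_k, r_k) \in S \times \bar S$. A fixed ``backbone'' of wiring edges ensures that the would-be 5-cycle $u - l_k - v - r_k - w - u$ closes iff a specific edge controlled by Alice via $a_k$ and a specific edge controlled by Bob via $b_k$ are both present; the backbone is chosen so that no 5-cycle spans two distinct indices. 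Alice and Bob toggle their respective edges according to $a$ and $b$, so the simulated protocol solves $\disj$. Combining with the simulation step and \Cref{fact:disj} yields $O(NmpHD \log N) = \Omega(N^2/\log N)$, hence $mpHD = \Omega(N/\log^2 N)$.

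\textbf{Main obstacle.} The delicate part is the gadget in the reduction: I need to encode $\Omega(N^2/\log N)$ disjointness bits into an $N$-vertex \emph{undirected} graph, while simultaneously guaranteeing (i) that each pair $(a_k, b_k) = (1,1)$ completes some 5-cycle, (ii) that no spurious 5-cycles emerge from reused backbone vertices, and (iii) that the edge set splits cleanly across the $(S, \bar S)$ cut under the symmetry constraint $X = X^\T$. Standard constructions for \congest lower bounds on $C_5$ detection, in particular bipartite-incidence gadgets in the style of Drucker--Kuhn--Oshman, supply the template; the remaining work is to tune the density so that it matches the $NmpHD \log N$ upper bound of the simulation step, yielding exactly $N/\log^2 N$ after combining the two bounds.
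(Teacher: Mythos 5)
Your high-level strategy — reduce set disjointness to $\fivecycle$ via a two-party vertex partition and a communication-cost simulation of the transformer — matches the paper. Two components diverge, one benignly, one fatally.

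\textbf{Simulation step.} The paper routes through a dedicated \congest communication graph $G^N$ with $O(N^2)$ degree-$3$ nodes (\Cref{lemma:trgraph-sim}), then bounds the $(V_a^N, V_b^N)$ cut by $O(N\log N)$ edges (\Cref{lemma:cut}); composing these gives $O(mpHDN\log^2 N)$ bits for the two-party protocol. Your direct simulation (exchange current per-token embeddings across the cut each layer) is sound and in fact simpler: note that after the input MLP each token's state is an $m$-vector of $p$-bit numbers, both parties know the architecture's $Q,K,V$ matrices, and under $X = X^\T$ every cross-cut edge is known to both sides, so exchanging $O(Nmp)$ bits per layer suffices and neither the head count $H$ nor a ``synchronization'' $\log N$ factor is actually needed (two-party communication has no per-round overhead). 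Those spurious factors only weaken your bound, so no harm, but worth noting the direct route is actually sharper than the paper's.

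\textbf{Reduction gadget — this is where the argument breaks.} Your gadget reserves $O(1)$ hubs $u, v, w$, takes $\Theta(N)$ free vertices on each side, and asks the would-be 5-cycle $u - l_k - v - r_k - w - u$ to close iff Alice's bit $a_k$ and Bob's bit $b_k$ are both $1$. With a single shared hub $v$ and fixed backbone edges $(l, v)$, $(v, r)$ present for all $l, r$ (which is forced if you want $\Theta(N^2)$ index pairs through $O(1)$ hubs), any Alice-controlled edge $(u, l)$ and any Bob-controlled edge $(r', w)$ complete a 5-cycle $u - l - v - r' - w - u$ even when $(l, r')$ is \emph{not} one of the designated pairs $(l_k, r_k)$. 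So the graph contains a 5-cycle iff Alice has at least one $1$-bit \emph{and} Bob has at least one $1$-bit — that is the AND of two ORs, not set disjointness. You acknowledge this as ``the delicate part'' and gesture at Drucker--Kuhn--Oshman, but the sketch you give does not resolve it. The paper instead partitions $[N]$ into five blocks of size $N/5$, encodes $a$ in the bipartite block-1/block-2 edges and $b$ in the block-4/block-5 edges, and — crucially — wires the three backbone block-transitions as identity matchings $\delta_{i, j-N/5}$. The matchings force a unique continuation at every backbone step, so the only 5-cycles are $v_1 - v_2 - (v_2{+}N/5) - (v_2{+}2N/5) - (v_1{+}4N/5) - v_1$, and one exists iff $a_{v_1, v_2'} = b_{v_1, v_2'} = 1$ for some index pair. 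This is the mechanism your gadget lacks.

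Finally, your choice of $n = \Theta(N^2/\log N)$ disjointness bits is unmotivated; the paper's construction packs $n = N^2/25 = \Theta(N^2)$ bits, and with your (correct, un-padded) $O(NmpD)$ simulation cost you would actually obtain the stronger conclusion $mpD = \Omega(N)$. Tuning $n$ downward so as to exactly reproduce the theorem's stated exponent is working backwards from the answer rather than from the construction.
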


\begin{theorem}[Efficient construction of $\fivecycle$ with fifth-order graph transformer]\label{thm:5attn-fivecycle}
For sequence length $N$ and bit-complexity $p = O(\log N)$, there exists a fourth-order graph transformer architecture $f \in \trgraph_{N, 1, 1, p}^{1, 1, \otimes 5}$ with a single graph self-attention unit such that for all $X \in\bit^{N \times N}$ with $X = X^\T$, $f(X) = \fivecycle(X)$. 
\end{theorem}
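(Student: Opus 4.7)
The plan is to exhibit a single fifth-order graph self-attention unit that directly checks the five-edge closed-walk condition through $\kappa$ and uses an appended blank token to absorb softmax mass in the ``no walk'' case. The key leverage is that for $s=5$, $\kappa$ has access to all $20$ directed edges among any candidate tuple $(i_1,i_2,i_3,i_4,i_5)$, so verifying the five edges $x_{i_1,i_2}=x_{i_2,i_3}=x_{i_3,i_4}=x_{i_4,i_5}=x_{i_5,i_1}=1$ reduces to a single cell-wise operation.

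Concretely, I would append a blank token $x_{N+1}=\vec{0}$ and use the input MLP to emit the scalar non-blank indicator $\phi(x_i)=\indicator{x_i\neq\vec{0}}$; this is $0$ for the blank (and harmlessly $0$ for any isolated real vertex, which cannot lie on a $5$-cycle). Taking $m=1$ with scalar weights $Q=K^1=\cdots=K^4=V^1=\cdots=V^4=1$, the attention factor collapses to $A_{i_1,\ldots,i_5}=\prod_{k=1}^{5}\phi(x_{i_k})\in\{0,1\}$, an indicator for ``no blank anywhere in the tuple,'' while the tuple value equals $\prod_{k=2}^{5}\phi(x_{i_k})$. I would then set
\[
\kappa(\text{edges},A)\;=\;c\cdot A\cdot \indicator{x_{i_1,i_2}=x_{i_2,i_3}=x_{i_3,i_4}=x_{i_4,i_5}=x_{i_5,i_1}=1}\;+\;c_0(1-A),
\]
with $c=6\log N$ and $c_0=2\log N$, so each closed-walk tuple through $i_1$ carries softmax weight $e^c=N^6$, each of the $\Theta(N^3)$ blank-containing tuples through $i_1$ carries weight $e^{c_0}=N^2$, and every remaining tuple carries weight $1$.

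A direct calculation then shows $f(X)_{i_1}\geq 1-O(1/N)$ whenever some closed walk of length $5$ contains $i_1$ (the $T_{i_1}e^c=\Omega(N^6)$ term dominates the denominator $T_{i_1}e^c+\Theta(N^5)+O(N^4)$, while blank-containing tuples contribute zero to the numerator), and $f(X)_{i_1}\leq O(1/N)$ otherwise (the numerator is $O(N^4)$, whereas the denominator is still $\Theta(N^5)$). A constant-size output MLP thresholding at $1/2$ recovers $\fivecycle(X)_{i_1}$ exactly, and since every intermediate quantity is $N^{O(1)}$, $p=O(\log N)$ bits of fixed precision suffice.

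The main obstacle is absorbing softmax mass in the no-walk case: against the rank-one value tensor induced by scalar $V^k$, plain softmax sums to $1$ and destroys the desired indicator. The blank token solves this by simultaneously zeroing the tuple value and the factor $A$, which then lets $\kappa$ detect it via the $(1-A)$ term and assign it the intermediate weight $c_0$ — large enough to dominate the denominator but carrying zero value, so the numerator collapses. Verifying that the highly restricted $m=1$ embedding still suffices to encode blank presence (as a single product of scalar indicators) and that the graph-attention semantics let $\kappa$ read edges from the original adjacency matrix are the main technical details to check.
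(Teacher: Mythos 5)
Your construction follows the paper's (terse) sketch: use the cell-wise function $\kappa$ to detect the five cycle-edges, append a blank token to control the softmax denominator in the no-walk case, and threshold with the output MLP. You also supply sensible numerics that the paper omits. However, the specific choice $\phi(x_i) = \indicator{x_i \neq \vec{0}}$ introduces a genuine gap, and the claim that conflating the blank with isolated real vertices is ``harmless'' is false. Concretely: if $i_1$ is a real isolated vertex, then $\phi(x_{i_1}) = 0$ forces $A_{i_1,\dots,i_5}=0$ for every tuple, so $\kappa \equiv c_0$, the softmax is uniform, and the output is approximately the fraction of tuples $(i_2,\dots,i_5)$ that avoid all blank/isolated indices, which is close to $1$ while $\fivecycle(X)_{i_1}=0$. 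The same conflation hurts even when $i_1$ is non-isolated: any tuple containing an isolated index $i_k$ (for $k\geq 2$) also has $A=0$ and hence $\kappa = c_0$ with tuple value $0$; if the graph has $\Theta(N)$ isolated vertices, these $\Theta(N^4)$ tuples contribute $\Theta(N^4)e^{c_0}=\Theta(N^6)$ to the denominator, the same order as a single planted closed walk $e^c = N^6$, so the ratio sits near $1/2$ and the threshold does not give a clean separation.

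The fix is easy and leaves the rest of your argument intact: make $\phi$ distinguish the blank from \emph{all} real tokens, e.g.\ via a positional coordinate (which the paper's framework explicitly permits) or by choosing the appended constant row $x'$ outside the adjacency-row domain $\bit^N$, and set $\phi(x_i)=1$ for every real $i\in[N]$ and $\phi(x')=0$. Then $A=0$ exactly when the tuple includes the blank; tuples containing isolated real vertices retain $A=1$, get $\kappa=0$ (no cycle edges) and value $1$, and fold into the $O(N^4)$ bulk; and your $e^c=N^6$, $e^{c_0}=N^2$ bookkeeping delivers the intended $1-O(1/N)$ versus $O(1/N)$ gap for the output MLP to threshold.
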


\begin{theorem}[Hardness of representing $\tricycledir$ with standard graph transformer]\label{thm:2trans-tricycle}
For sufficiently large $N$, any $f \in  \trgraph_{N, m, 1, p}^{D, H}$ satisfying $f(X) = \tricycledir(X)$ for all $X \in\bit^{N \times N}$ requires $mpHD = \Omega(N / \log^2 N)$.
\end{theorem}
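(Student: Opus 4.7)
The plan is a reduction from two-party set disjointness, along the lines of \Cref{thm:qsa-lb-bound-prec}, \Cref{thm:2heads-triid}, and \Cref{thm:2trans-fivecycle}, instantiated with a graph gadget specialized to directed 3-cycles. Given an instance $(a, b) \in \bit^n \times \bit^n$ of $\disj$ with $n = \Theta(N)$, I first construct an $N$-vertex directed graph with a distinguished root $r$ and, for each $i \in [n]$, a pair of auxiliary vertices $u_i, v_i$ (padding with isolated vertices if $N$ is not of the form $2n+1$). Always include the public ``backbone'' edge $(u_i, v_i)$; include the Alice-edge $(r, u_i)$ iff $a_i = 1$; and include the Bob-edge $(v_i, r)$ iff $b_i = 1$. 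The only directed 3-cycles have the form $r \to u_i \to v_i \to r$, and this cycle is present iff $a_i b_i = 1$, so that $\tricycledir(X)_r = \disj(a, b)$. Partitioning the rows of the adjacency matrix, Alice holds $X_r$ and the public rows $X_{u_i}$ (all of which she can determine from $a$ and the public backbone), while Bob holds the rows $X_{v_i}$ (determined by $b$).

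The next step is to show that any $f \in \trgraph_{N, m, 1, p}^{D, H}$ computing $\tricycledir$ induces a two-party protocol for $\disj(a, b)$ of total cost $O(mpHD \log^2 N)$ bits. The parties jointly simulate $f$ layer-by-layer: in each of the $D$ layers and for each of the $H$ heads, they exchange partial softmax numerators (vectors in $\R^m$ of bit-precision $p$) and partial log-normalizer scalars (of size $O(p + \log N)$ bits) aggregated over their respective halves of the vertex set, exactly as in the proof of \Cref{thm:2heads-triid}. Crucially, because the graph kernel $\kappa$ at cell $(i, j)$ consults only the edge $x_{i, j}$, which is held by the party owning row $i$ (the one computing attention at $i$), the edge-augmentation requires no extra cross-cut communication. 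The per-layer, per-head cost is $\tilde O(mp)$ bits, so summing over all $D$ layers and $H$ heads gives $O(mpHD \log^2 N)$ bits total. Invoking \Cref{fact:disj} then forces $mpHD \log^2 N = \Omega(N)$, i.e., $mpHD = \Omega(N/\log^2 N)$.

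The main obstacle I anticipate is justifying that the per-round cross-cut communication is truly $\tilde O(mpH)$ bits, rather than a factor of $N$ larger: a naive layer-by-layer simulation would require each party to send $\tilde O(mpH)$ bits per head \emph{per own vertex}, since each vertex has a distinct query and thus produces its own cross-cut aggregate. The resolution --- which I expect to inherit from the argument underlying \Cref{thm:2trans-fivecycle} --- is to exploit the fact that only $r$'s final embedding matters, and that the gadget is engineered so that information about the opposite party's input can reach $r$ only through a small, slowly growing ``frontier'' of vertices whose embeddings must be synchronized. Making this frontier argument precise --- while permitting arbitrary interleaving MLPs and correctly accounting for the graph kernel's dependence on private edge bits --- is the principal technical step, and is what generates the two $\log N$ factors in the bound: one for the bit-precision of the transmitted log-normalizers and one for the number of rounds of synchronization across the $D$ layers.
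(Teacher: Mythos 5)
Your reduction gadget (a ``rosette'' rooted at $r$ with paths $r \to u_i \to v_i \to r$) is sound as a detector for $\disj$, but it encodes only $n = \Theta(N)$ bits of set disjointness, and this is precisely where your proof diverges from the paper's and runs into the gap you yourself flag. The paper's proof of this theorem (and of \Cref{thm:2trans-fivecycle}, from which you hope to ``inherit'' a frontier argument) does not use any frontier argument. It has exactly two simulation ingredients: \Cref{lemma:trgraph-sim}, which shows any $f \in \trgraph_{N,m,1,p}^{D,H}$ can be simulated by a \congest protocol on a fixed $O(N^2)$-node auxiliary graph $G^N$ in $O(HD(m + \log N))$ rounds of $p$-bit communication per edge; and \Cref{lemma:cut}, which exhibits a split of $G^N$ into Alice's and Bob's halves cutting only $O(N\log N)$ edges. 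Together these give a two-party protocol of total cost $O(mpHD\, N\log^2 N)$ bits -- and this $N$ factor is not an artifact to be shaved off, since the partial softmax numerators and normalizers for a single query really are aggregated over all $N$ keys, and the intermediate aggregates live at distinct nodes $v_{i,j}$ scattered across the cut. The paper compensates for this $N$ factor not by reducing the simulation cost but by packing $n = \Theta(N^2)$ disjointness bits into the adjacency matrix: a dense tripartite construction with $x_{i,j} = a_{i, j-N/2}$ for $i \in (0,N/4]$, $j \in (N/2, 3N/4]$; $x_{i,j} = b_{j - 3N/4, i - N/2}$ for $i \in (N/2, 3N/4]$, $j \in (3N/4, N]$; $x_{i,j} = \delta_{i, j+3N/4}$ for $i \in (3N/4, N]$, $j \in (0, N/4]$; and $x_{i,j} = 0$ otherwise. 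A directed 3-cycle exists iff some $a_{i,j}b_{i,j} = 1$, so balancing $\Omega(N^2)$ against $O(mpHD\,N\log^2 N)$ gives $mpHD = \Omega(N/\log^2 N)$.

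With your $n = \Theta(N)$ gadget, the same two lemmas give only $mpHD\,N\log^2 N = \Omega(N)$, i.e., $mpHD = \Omega(1/\log^2 N)$, which is vacuous. To rescue the bound you would need the ``frontier'' argument to be a theorem: a proof that only a polylogarithmic number of vertex embeddings need to be synchronized across the cut per layer. But this cannot hold in general for graph transformers, because attention is \emph{dense}: every query attends to every key regardless of the input graph, with $\kappa$ only modulating the pre-softmax scores. Even a vertex $u_i$ that is nearly isolated in your gadget has, at layer $\ell$, an embedding that depends on the layer-$(\ell-1)$ embeddings of \emph{all} $v_j$ on the other side of the cut. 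So the frontier does not shrink, and the missing idea is not a cleverer simulation but a denser $\disj$ encoding, which is exactly what the paper supplies.
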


\begin{theorem}[Efficient construction of $\tricycledir$ with fourth-order graph transformer]\label{thm:3attn-tricycle}
For sequence length $N$ and bit-complexity $p = O(\log N)$, there exists a third-order graph transformer architecture $f \in \trgraph_{N, 1, 1, p}^{1, 1, \otimes 3}$ with a single graph self-attention unit such that for all $X \in\bit^{N \times N}$, $f(X) = \tricycledir(X)$. 
\end{theorem}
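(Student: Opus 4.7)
The plan is to mirror the proof of Theorem~\ref{thm:3attn-triid} (third-order attention for $\triid$) and borrow the blank-token device of Theorem~\ref{thm:2attn-pairid}, exploiting that the graph kernel $\kappa$ may directly inspect the three edges $x_{i,j_1}, x_{j_1,j_2}, x_{j_2,i}$ that close a candidate directed 3-cycle through vertex $i$. Following the \texttt{<END>}-token convention of Section~\ref{ssec:attn-prelims}, I first augment $X$ with a single blank row $x'$ at position $N+1$, distinguished from real rows by a positional marker coordinate; I design the MLP $\phi\colon\R^N\to\R$ so that $\phi(x_i) = 1$ for every real vertex and $\phi(x') = 0$. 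Since $m = d' = 1$, I take all of $Q, K^1, K^2, V^1, V^2$ equal to the scalar $1$. Then the attention tensor reduces to $A_{i,j_1,j_2} = \phi(x_i)\phi(x_{j_1})\phi(x_{j_2}) \in \{0,1\}$, an indicator that ``all three tokens are real'', and the value $V_{j_1,j_2} = \phi(x_{j_1})\phi(x_{j_2})$ is the indicator that ``both $j$'s are real''.

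Next, design the graph kernel. Letting $\eta_{i,j_1,j_2} := x_{i,j_1} x_{j_1,j_2} x_{j_2,i}$ be the directed-cycle indicator (computable from the six edges that $\kappa$ already receives), set
\[
\kappa(\mathrm{edges}, A) \;=\; C \cdot A \cdot (2\eta - 1), \qquad C := \lceil 4 \log(N+1) \rceil.
\]
This assigns logit $+C$ to each real cycle triple, $-C$ to each real non-cycle triple, and $0$ to every blank-involving triple. For real $i \in [N]$, let $T_i = \{(j_1,j_2)\in[N]^2 : \eta_{i,j_1,j_2}=1\}$. If $|T_i| \geq 1$, the softmax concentrates on $T_i$ (each weight $e^C = (N+1)^4$ dominates the $2N+1$ blank cells of weight $1$ and the real non-cycle cells of weight $e^{-C}$), and since these cells have value $1$ the output is at least $1 - O(N^{-3})$. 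If $T_i = \emptyset$, the softmax mass sits on the blank-involving cells of value $0$ (only $O(N^{-5})$ leaks to real non-cycle cells), so the output is $O(N^{-3})$. A constant-size output MLP $\psi$ thresholding at $1/2$ rounds to the correct integer; the $\Theta(\log N)$-bit precision is sufficient to represent all logits, softmax denominators of size $O(N^4)$, and the rounding MLP.

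The main obstacle is adapting the blank-token trick cleanly to the graph setting, since a blank ``vertex'' must specify edges to and from all other vertices so that $\kappa$ can parse each triple consistently. I handle this by taking $x'$ to be the isolated vertex with all-zero edges, made distinguishable from any genuine zero-degree real vertex only by a positional marker coordinate that $\phi$ reads; detection of blank-involving triples is then routed entirely through the attention tensor $A$ rather than through edge content, which is what keeps the embedding dimension at $m = 1$. Once this convention is fixed, verifying that the remaining parameters ($H = D = 1$, $p = O(\log N)$) suffice is a routine numerical check analogous to the closing arguments of Theorems~\ref{thm:2attn-pairid} and~\ref{thm:3attn-triid}.
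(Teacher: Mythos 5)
Your construction matches the paper's (brief) proof: use the edge-inspection power of $\kappa$ to assign a large positive logit to cycle triples, append a blank token so that the softmax mass sits on blank cells (value $0$) when no cycle exists, and threshold with the output MLP. The paper only sketches this argument as ``immediate from the construction,'' whereas you fill in the concrete parameter choices and softmax bounds, but the underlying idea is the same.
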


The proofs of Theorems~\ref{thm:5attn-fivecycle} and \ref{thm:3attn-tricycle} are immediate from the construction.
Because each cell of the self-attention tensor has explicit access the the existence of all relevant edges, $\kappa$ can be configured to ensure that cell's value is large if and only if the requisite edges for the desired structure all exist. 
Taking a softmax with a blank element (like in Theorem~\ref{thm:2attn-pairid}) ensures that the outcome of the self-attention unit for a given element distinguishes between whether or not it belongs to a 5-cycle or a directed 3-cycle.
The output MLP ensure that the proper output is returned.

We prove Theorems~\ref{thm:2trans-fivecycle} and \ref{thm:2trans-tricycle} by introducing a particular \congest communication graph that can be used to simulate any model in $\trgraph_{d, m, d', p}^{D, H}$ (and hence, also any model in $\tr_{d, m, d', p}^{D, H}$) in $O(mHD \log N)$ rounds of communication.
Then, we show for each problem that we can encode each instance of the set disjointness communication problem as an instance of $\fivecycle$ (or $\tricycledir$) and derive a contradiction from the communication graph.

\subsubsection{A \congest communication graph that generalizes standard graph transformer computation}
The key principle of our analysis is that the predominant limitation of a transformer model is in its communication bandwidth and \emph{not} its computational abilities.
We model transformers as having element-wise multi-layer perceptron units with unbounded computational ability (but bounded precision inputs and outputs) and self-attention units, which compute linear combinations of inputs in a carefully regimented way that limits the ability of individual elements to share information with one another.
Here, we introduce a specific \congest graph for each sequence length $N$ and show that every transformer has a communication protocol that simulates its computation in this graph.

\begin{figure}
\centering
\includegraphics[width=0.9\textwidth]{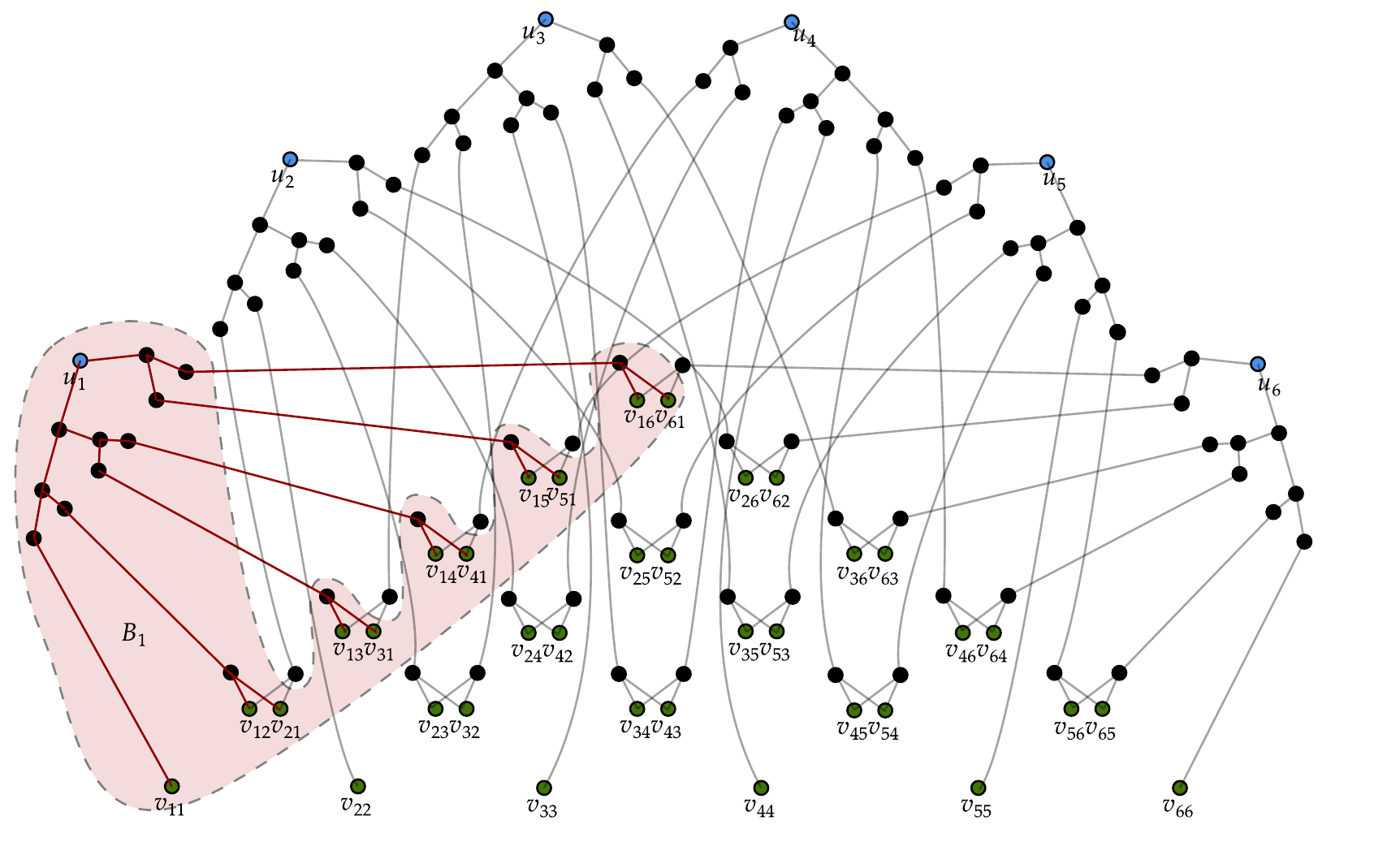}
\caption{The \congest graph $G^N$ visualized for $N = 6$ with root nodes $\{u_i\}_{i \in [N]}$ in blue, leaf nodes $\{v_{i, j}\}_{i, j \in [N]}$ in green, and the nodes $V_1$ of the binary tree $B_1$ shaded red and edges $E_1$ colored red.}
\label{fig:congest}
\end{figure}

For fixed $N$, we design an undirected \congest graph $G^N = (V^N, E^N)$ with $O(N^2)$ nodes, each having degree at most 3.
(Note that this graph is \emph{not} the same as the graph provided as input $X$ to a transformer; this graph is consistent across all transformers taking input of sequence size $N$.) 
Let $u_1, \dots, u_N$ be nodes in $V^N$ corresponding to each input.
For every pair $i, j \in [N]$, let $v_{i, j}$ be a node as well.
For each $i \in [N]$, let $B_i= (V_i, E_i)$ be a balanced binary trees having root $u_i$ and leaves $v_{i, 1}, \dots, v_{i, N}, v_{1, i}, \dots, v_{N, i}$.
Hence, each $B_i$ has $O(N)$ vertices of degree 3 and is of depth $O(\log N)$.
Let $V^N = V_1 \cup \dots \cup V_N$ and $E^N = E_1 \cup \dots \cup E_N$.
Noting that $E_1, \dots, E_N$ are disjoint and that $V_1, \dots, V_N$ are disjoint, except for leaves $v_{i, j}$, we ascertain that $G^N$ contains $O(N^2)$ vertices of degree at most 3 and has diameter $O(\log N)$. 
We visualize the graph $G^N$ with a highlighted tree $B_1$ in \Cref{fig:congest}.

\begin{lemma}\label{lemma:trgraph-sim}
For any transformer $f \in \trgraph_{d, m, d', p}^{D, H}$ and any $X \in \R^{N \times d}$ with $p$-bit fixed-precision numbers, there exists a \congest communication protocol on the graph $G^N$ that shares $p$ bits of information between adjacent vertices per round satisfying the following characteristics:
\begin{itemize}
\item Before any communication begins, each node $u_i$ is provided with $x_i$ and each node $v_{i, j}$ is provided with $x_{i, j}$ and $x_{j, i}$.
\item After $T = O(HD(m + \log N))$ rounds of communication, each node $u_i$ outputs $f(X)_i$.
\end{itemize} 
\end{lemma}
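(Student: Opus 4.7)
The plan is to simulate the transformer $f$ layer by layer, and within each layer simulate the $H$ attention heads and the MLP. The element-wise MLPs and the local $Q,K,V$ computations can be done entirely at the root nodes $u_i$ with no communication, so the only nontrivial work is to route the inner products and convex combinations performed by each self-attention head through the binary trees $B_1, \dots, B_N$. The crucial structural fact is that for every pair $(i,j)$, the leaf $v_{i,j}$ lies in \emph{both} $B_i$ and $B_j$, so $v_{i,j}$ can receive information from $u_i$ via $B_i$ and from $u_j$ via $B_j$; together with the edge data $x_{i,j},x_{j,i}$ stored locally at $v_{i,j}$, this gives $v_{i,j}$ everything it needs to compute the $(i,j)$ entry of the (graph) attention tensor.

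For a single attention head at a given layer, the protocol proceeds in three phases. \textbf{Broadcast phase:} each root $u_i$ holds the current embedding $\phi(x_i)$ and locally computes the $p$-bit fixed-precision vectors $Q\phi(x_i), K\phi(x_i) \in \R^m$ and $V\phi(x_i) \in \R^{d'}$; these vectors are then pipelined down the tree $B_i$ to every leaf. Because $B_i$ has depth $O(\log N)$ and each edge carries $p$ bits per round, a pipelined broadcast of $O(m)$ scalars takes $O(m + \log N)$ rounds. \textbf{Local combination phase:} each leaf $v_{i,j}$ now knows $Q\phi(x_i), K\phi(x_j), V\phi(x_j), x_{i,j}, x_{j,i}$, so it locally computes the unnormalized attention weight $\alpha_{i,j} = \exp\bigl(\kappa(x_{i,j}, x_{j,i}, \langle Q\phi(x_i), K\phi(x_j)\rangle)\bigr)$ and the product $\beta_{i,j} = \alpha_{i,j}\, V\phi(x_j) \in \R^{d'}$, all without communication. \textbf{Convergecast phase:} the scalars $\alpha_{i,j}$ and vectors $\beta_{i,j}$ are summed over $j$ up the tree $B_i$ to $u_i$ by pipelined addition at each internal node; this again takes $O(m + \log N)$ rounds (using $d' \le m$). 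Finally $u_i$ outputs the head value $\bigl(\sum_j \beta_{i,j}\bigr)/\bigl(\sum_j \alpha_{i,j}\bigr)$. The total cost of simulating one head is therefore $O(m + \log N)$ rounds.

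Summing contributions across the $H$ heads of a layer, then composing with the locally-computed MLP and iterating across the $D$ layers, yields a total of $T = O\bigl(HD(m + \log N)\bigr)$ rounds, as claimed. The higher-order and multi-headed versions are handled identically, running the heads sequentially (or in parallel with scheduled bandwidth) without changing the asymptotic count.

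The main obstacle to handle carefully is the bandwidth accounting: naively, each leaf $v_{i,j}$ would need to send $O(m)$ bits back to its parent, which is too much in a single CONGEST round. The standard resolution, which I expect to formalize, is pipelining along the tree, together with the observation that at every internal node of $B_i$ the aggregation is simply an additive sum so messages arriving in one round can be combined before being forwarded in the next. A minor subtlety is that $\kappa$ and the softmax normalization require that $v_{i,j}$ see $x_{i,j}$, $x_{j,i}$, and the scores $\langle Q\phi(x_i), K\phi(x_j)\rangle$ with sufficient precision; this is automatic under the $p$-bit fixed-precision assumption, since every scalar transmitted is already a $p$-bit number.
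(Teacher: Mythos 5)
Your proposal is correct and takes essentially the same approach as the paper's proof: a broadcast of the local $Q,K,V$ embeddings down each tree $B_i$ to its leaves, local computation of the attention weight $\alpha_{i,j}$ at each leaf $v_{i,j}$ (which lies in both $B_i$ and $B_j$, and already holds the edge data), followed by a pipelined convergecast of $\sum_j \alpha_{i,j}$ and $\sum_j \alpha_{i,j} V\phi(x_j)$ up to $u_i$, with the MLPs computed locally at the roots. The only cosmetic difference is that you broadcast all three of $Q\phi(x_i), K\phi(x_i), V\phi(x_i)$ from $u_i$ to every leaf of $B_i$ whereas the paper routes $Q\phi(x_i)$ only to the leaves $v_{i,\cdot}$ and $K\phi(x_i), V\phi(x_i)$ only to the leaves $v_{\cdot,i}$, but this does not change the $O(m+\log N)$ per-head round count.
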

\begin{proof}
It suffices to give a protocol that computes the outcome of a single-headed unit of graph self-attention with parameters $Q, K, V \in \R^{m \times m}$ and $\kappa: \flip^2 \times \R \to \R$ and transmits its $i$th output back to $u_i$ in $O(m\log N)$ rounds of $p$-bit communication.
The remainder of the argument involves computing the outcomes of all element-wise MLPs within respective vertices $u_1, \dots, u_N$ (since we assume each node to have unbounded computational power in the \congest model) and to repeat variants of the protocol $HD$ times for every individual self-attention unit.
Because the protocol is designed for a particular transformer architecture $f$, we can assume that every node in the \congest graph has knows every parameter of $f$.

We give the protocol in stages. We assume inductively that every input to $f$, $y_1, \dots, y_N \in \R^m$, is known by its respective vertex $u_1, \dots, u_N$.
\begin{enumerate}
\item Every vertex $u_i$ computes $Q^\T y_i \in \R^m$ and propagates it to every vertex $v_{i, 1}, \dots, v_{i, N}$. This can be done in $O(m +  \log N)$ rounds by transferring one $p$-bit fixed-precision number per round from an element of the binary tree $B_i$ to each of its children per round.
Because the respective edges $E_1, \dots, E_N$ are disjoint, this operation can be carried out in parallel.
\item Each $u_i$ computes $K^\T y_i, V^\T y_i \in \R^m$ and propages them to $v_{1, i}, \dots, v_{N, i}$ in $O(m +  \log N)$ rounds.
\item Each $v_{i, j}$, using their knowledge of $x_{i, j}$ and $x_{j,i}$, computes $\alpha_{i, j} := \exp(\kappa(x_{i, j}, x_{j, i}, y_i^\T Q K^\T y_j))$. This takes zero rounds.
\item Each $u_i$ computes $\sum_{j=1}^N \alpha_{i, j}$ by propagating each $\alpha_{i, j}$  in $v_{i, j}$ up $B_i$ to $u_i$, iteratively summing terms passed up. This takes $O( \log N)$ rounds.
\item Similarly, $u_i$ computes $\sum_{j=1}^N \alpha_{i, j} V^\T y_j$ in $O(m \log N)$ rounds. Then, it computes \[\frac{\sum_{j=1}^N \alpha_{i, j} V^\T y_j}{\sum_{j=1}^N \alpha_{i, j}},\] which is the target output of the self-attention unit.
\end{enumerate}
Because all steps are achievable in parallel with $O(m + \log N)$ rounds, the claim follows.
\end{proof}
 
\subsubsection{Reduction from set disjointness}

Before proving Theorems~\ref{thm:2trans-fivecycle} and \ref{thm:2trans-tricycle} by embedding an instance of a transformer model into an instance of each subgraph identification problem, we first introduce a partition of the vertices $V^N$ of the \congest graph into those possessed by Alice and Bob for use in a two-party communication protocol.
We call those two sets $V_a^N$ and $V_b^N$.

\begin{figure}
\centering
\includegraphics[width=0.9\textwidth]{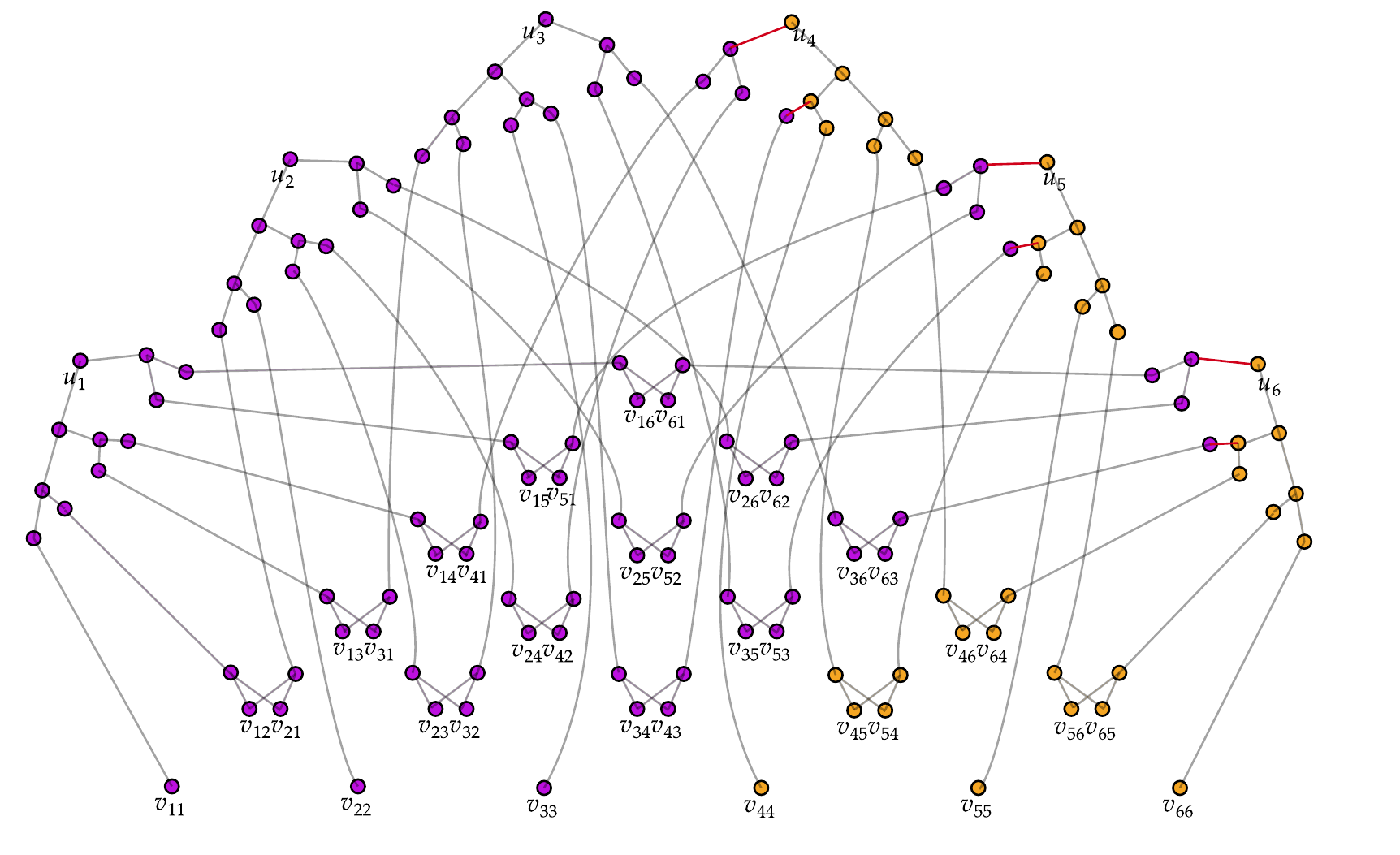}
\caption{The \congest graph $G^N$ with vertices partitioned into sets $V_a^N$ (violet) and $V_b^N$ (orange) for $N = 6$. The six edges cut by the partition are colored red.}
\label{fig:congest-alice}
\end{figure}

Note that the previous section made no assumptions about the organization of edges in the binary tree. 
We thus add an additional condition: that each binary tree $B_i$ can be oriented to respect the left-to-right ordering $v_{i,1}, v_{1, i}, \dots, v_{i,N}, v_{N,i}$.
Let $u_i \in V_a^N$ if and only if $i \leq \frac{N}2$, and $v_{i, j} \in V_a^N$ if and only if $\min(i, j) \leq \frac{N}2$.
We label are remaining nodes in $B_i$ by labeling a parent node $w_p$ as a function of its child nodes $w_\ell$ and $w_r$ using the following rules:
\begin{enumerate}[label=(\alph*)]
\item If $w_\ell, w_r \in V_a^N$, then let $w_p \in V_a^N$.
\item If $w_\ell, w_r \in V_b^N$, then let $w_p \in V_b^N$.
\item Otherwise, let $w_p \in V_a^N$ if and only if root $u_i \in V_a^N$.
\end{enumerate} 
This partition, which we visualize in \Cref{fig:congest-alice}, bounds the number of bits Alice and Bob can exchange by simulating a protocol on \congest graph $G^N$.

\begin{lemma}\label{lemma:cut}
Suppose Alice and Bob simulate an $R$-round $p$-bit protocol on \congest communication graph $G^N$ where Alice has access to all vertices $V_a^N$ and Bob $V_b^N$. No other communication is permitted besides sharing bits as permitted by the \congest protocol between neighboring vertices. 
Then, Alice and Bob exchange at most $O(pRN \log N)$ bits.
\end{lemma}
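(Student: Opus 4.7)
The plan is to observe that the only bits that leave Alice's side and reach Bob's side (or vice versa) must traverse an edge of $G^N$ with one endpoint in $V_a^N$ and the other in $V_b^N$: every other edge lies wholly within one party's territory, and is simulated internally without communication. Since each cut edge carries at most $p$ bits in each direction per \congest round, and there are $R$ rounds, it suffices to establish $|E(V_a^N, V_b^N)| = O(N \log N)$ and multiply by $pR$.

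Because the edge sets $E_1, \ldots, E_N$ of the binary trees $B_1, \ldots, B_N$ partition $E^N$, I bound the cut tree by tree. Trees $B_i$ with $i \leq N/2$ contribute nothing: every leaf $v_{i, j}$ or $v_{j, i}$ of $B_i$ has $\min(i, j) = i \leq N/2$ and thus lies in $V_a^N$, and applying rule (a) from leaves upward forces every internal node of $B_i$ into $V_a^N$ as well. For $i > N/2$ the root $u_i$ lies in $V_b^N$, and the $V_a^N$-leaves of $B_i$ are exactly those with ``other index'' at most $N/2$; under the stipulated left-to-right ordering $v_{i, 1}, v_{1, i}, v_{i, 2}, v_{2, i}, \ldots$ these are exactly the first $N$ of the $2N$ leaves of $B_i$, i.e., a contiguous prefix. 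An easy induction using the labeling rules then shows that an internal node of $B_i$ is colored $V_a^N$ iff all of its leaf descendants are in $V_a^N$ (only rule (a) ever assigns the color $V_a^N$, since root $u_i \in V_b^N$ makes rule (c) assign $V_b^N$). Consequently, the cut edges of $B_i$ are precisely the parent-to-child edges where the child is the root of a maximal all-$V_a^N$-leaf subtree. The number of such maximal subtrees, by the standard decomposition of a contiguous leaf interval in a balanced binary tree, is $O(\log N)$.

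Summing the per-tree cut bound over the $N$ trees yields $|E(V_a^N, V_b^N)| = O(N \log N)$, which combined with the $p$-bit, $R$-round bandwidth constraint gives the claimed $O(pRN \log N)$ bound on the total communication between Alice and Bob. The only non-routine step is the second paragraph's cut count for trees with $i > N/2$: once one confirms via the labeling rules and the specific leaf ordering that the $V_a^N$-colored nodes are exactly the roots of all-$V_a^N$-leaf subtrees, the $O(\log N)$ bound reduces to the familiar balanced-binary-tree fact that any contiguous range of leaves decomposes into $O(\log N)$ maximal subtrees. I expect this combinatorial reduction to be the main obstacle, since it relies jointly on the careful assignment of mixed-subtree parents to the root's side (rule (c)) and on the specific interleaved leaf ordering that makes each side contiguous.
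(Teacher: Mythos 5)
Your proof is correct and takes essentially the same route as the paper's: reduce to bounding the cut $\abs{E(V_a^N, V_b^N)}$ and then use the contiguity of the $V_a^N$-leaves within each tree $B_i$ to get $O(\log N)$ cut edges per tree, hence $O(N \log N)$ in total. The only cosmetic difference is that the paper counts ``at most one case-(c) parent per layer'' (since the coloring stays contiguous layer by layer), whereas you count ``$O(\log N)$ maximal all-$V_a^N$ subtrees covering a contiguous leaf prefix''; both identify the same $O(\log N)$ boundary edges along the root-to-boundary path.
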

\begin{proof}
It suffices to show that the partition $V_a^N, V_b^N$ induces a cut of size at most $O(N \log N)$; this ensures that each can send no more than $O(p N \log N)$ bits per round.

Per the rules defined above, an edge in $(w_p, w_\ell)$ and $(w_p, w_r)$ is cut if and only if they are described by case (c).
Within each tree $B_i$ under the orientation described above, an inductive argument shows that in every layer, all elements in $V_a^N$ are to the left of all elements in $V_b^N$.
Thus, there exists at most one parent of that layer that belongs to case (c), and thus, no more than one cut edge per layer.
Because each tree has $O(\log N)$ layers and because there are $N$ trees, the partition cuts at most $O( N \log N)$ edges.
\end{proof}

It remains to embed an instance of $\disj$ in $V_a^N, V_b^N$ for each problem such that its output corresponds identically with that of $\disj$.

\begin{proof}[Proof of \Cref{thm:2trans-fivecycle}]
Assume for the sake of simplicity that $N$ is divisible by 5.
Let $a, b \in \bit^n$ for $n = \frac{N^2}{25}$ be an input to $\disj$, and let Alice and Bob possess $a$ and $b$ respectively.
We index those vectors as $a = (a_{1, 1}, a_{1, 2}, \dots, a_{\fracl{N}5, \fracl{N}5 - 1}, a_{\fracl{N}5, \fracl{N}5 })$ and $b = (b_{1, 1}, \dots,  b_{\fracl{N}5, \fracl{N}5 })$ for ease of analysis.
We design input matrix $X \in \bit^{N \times N}$ as follows:
\begin{itemize}
\item If $i \in (0, \frac{N}5]$ and $j \in (\frac{N}5, \frac{2N}5]$, then $x_{i, j} = x_{j, i} = a_{i, j - N/5}$.
\item If $i \in (\frac{N}5, \frac{3N}5]$ and $j \in (\frac{2N}5, \frac{4N}5]$, then $x_{i, j} = x_{j, i} = \delta_{i, j - N/5}$.
\item If $i \in (\frac{3N}5, \frac{4N}5]$ and $j \in (\frac{4N}5, N]$, then $x_{i, j} = x_{j, i} = b_{j - 4N/5, i - 3N/5}$.
\item If $i \in (\frac{4N}5, N]$ and $j \in (0, \frac{N}5]$, then $x_{i, j} = x_{j, i} = \delta_{i, j + 4N/5}$.
\item Otherwise, $x_{i,j} = 0$.
\end{itemize}
This ensures that $X$ has a 5-cycle if and only there exist $i, j \in (0, \frac{N}5]$ such that $a_{i, j} b_{i, j} = 1$\footnote{We consider 5-cycles rather than 4-cycles because a spurious 4-cycle could exist among edges $\{x_{i,j}: i \in (0, \frac{N}5], \ j \in (\frac{N}5, \frac{2N}5]\}$. }.
In addition, note that under the protocol in \Cref{lemma:trgraph-sim}, Alice's and Bob's inputs $a$ and $b$ are known exclusively by nodes belonging to $V_a^N$ and $V_b^N$ respectively.

Consider any transformer architecture $f \in \trgraph_{N, m, 1, p}^{D, H}$ that computes $\fivecycle$. 
By \Cref{lemma:trgraph-sim}, there exists a protocol on the \congest graph $G^N$ that computes $\fivecycle$ after $O(HD(m + \log N))$ rounds of communication of $p$-bits each.
If Alice and Bob simulate this protocol, and output 1 if and only if at least one of their outputs indicates the existence of a $\fivecycle$, then they successfully decide $\disj$.
By \Cref{lemma:cut}, this communication algorithm solves $\disj$ after exchanging $O(mpHD N \log^2 N)$ bits of communication.
However, Fact~\ref{fact:disj} implies that no communication algorithm can do so without exchanging $\Omega(n) = \Omega(N^2)$ bits, which concludes the proof.
\end{proof}

\begin{proof}[Proof of \Cref{thm:2trans-tricycle}]
The proof is identical to its predecessor, but uses a different embedding of an instance $a,b \in \bit^n$ to $\disj$.
Let $n = \frac{N^2}{16}$. Then:
\begin{itemize}
\item If $i \in (0, \frac{N}4]$ and $j \in (\frac{N}2, \frac{3N}4]$, then $x_{i, j} = a_{i, j - N/2}$. 
\item If $i \in (\frac{N}2, \frac{3N}4]$ and $j \in (\frac{3N}4, N]$, then $x_{i, j}  = b_{ j - 3N/4, i - N/2}$.
\item If $i \in (\frac{3N}4, N]$ and $j \in (0, \frac{N}4]$, then $x_{i, j} = \delta_{i, j+3N/4}$.
\item Otherwise, $x_{i,j} = 0$.
\end{itemize}
This construction ensures that a directed 3-cycle exists if and only if a corresponding pair of elements in $a$ and $b$ are both 1.
\end{proof}

\section{Experiment details}\label{asec:experiments}

This section describes the experimental setup behind \Cref{fig:attn-mat}, and provides further
experiments suggesting an \emph{implicit bias} of transformers for $\qsa$,
in particular when compared with MLPs and RNNs.

\paragraph{Experimental setup.}
Experiments used synthetic data, generated for $\qsa$ with $n=1000$ training and testing examples,
a sequence length $N=20$, $q=3$, with the individual inputs described in more detail as follows.
\begin{itemize}
  \item
    The positional encoding of element $i$ is a random vector sampled uniformly from the sphere
    in $\R^{d_0}$ with $d_0 := \lceil 1 + 2 \ln(N)\rceil$, a quantity which agrees with the theory
    but was not tuned.
  \item
    A sequence element then consists of the data portion $z\in\R^{d_1}$ where $d_1 = 4$,
    also sampled from the unit sphere, then the positional encoding of this sequence element,
    and then $q$ further positional encodings identifying elements to average to produce the output;
    this differs from (and is more tractable than) the presentation in \Cref{sec:averaging},
    where the positional encoding is provided as an integer and the MLP layer input to our
    attention layers is expected to choose a sufficient positional encoding.
\end{itemize}
As such, the total dimension of a sequence element is $d_1 + (q+1)d_0 = 32$.
The architectures are detailed as follows.
\begin{itemize}
  \item
    The attention is identical to the description in the paper body, with the additional detail
    of the width and embedding dimension $m$ being fixed to $100$.
  \item
    \Cref{fig:imp-bias} also contains an MLP, which first flattens the input, then has a single hidden
    ReLU layer of width 256, before a final linear layer and an output reshaping to match the
    desired output sequence shapes.
  \item
    \Cref{fig:imp-bias} also contains an LSTM, which is a standard \texttt{pytorch} LSTM
    with $2$ layers and a hidden state size $800$, which is $200$ times larger than the target output dimension $4$.
\end{itemize}
Experiments fit the regression loss using Adam and a minibatch size of 32,
with default precision,
and take a few minutes to run on
an NVIDIA TITAN XP, and would be much faster on standard modern hardware.
\ifanonymoussubmission
All code is provided in the supplementary materials.
\fi

\paragraph{Further discussion of \Cref{fig:attn-mat} and \Cref{fig:attn-mat:3}.}
In \Cref{fig:attn-mat} and \Cref{fig:attn-mat:3}, we plot (post-softmax) alignment matrices after $T\in\{0, 1000, 40000\}$ iterations of Adam.
The alignment matrices in \Cref{fig:attn-mat} are taken from the training example
whose loss is the median loss across all examples.
\Cref{fig:attn-mat:3} is similar, but additionally shows the examples of minimal
and maximal loss.

\paragraph{Further discussion of \Cref{fig:imp-bias}.}
Figure~\ref{fig:imp-bias} plots training and testing error curves for the same attention
architecture as in Figure~\ref{fig:attn-mat}, but with further MLP and LSTM architectures as
described above.
but also an MLP trained on flattened (vectorized)
error bars reflect $5$ separate training runs from random initialization.
A few variations of these architectures were attempted, however curves did not qualitatively
change, and in particular, only the attention layer achieves good generalization across all
attempts.

\begin{figure}[t!]
    \centering
    \includegraphics[width=0.5\textwidth]{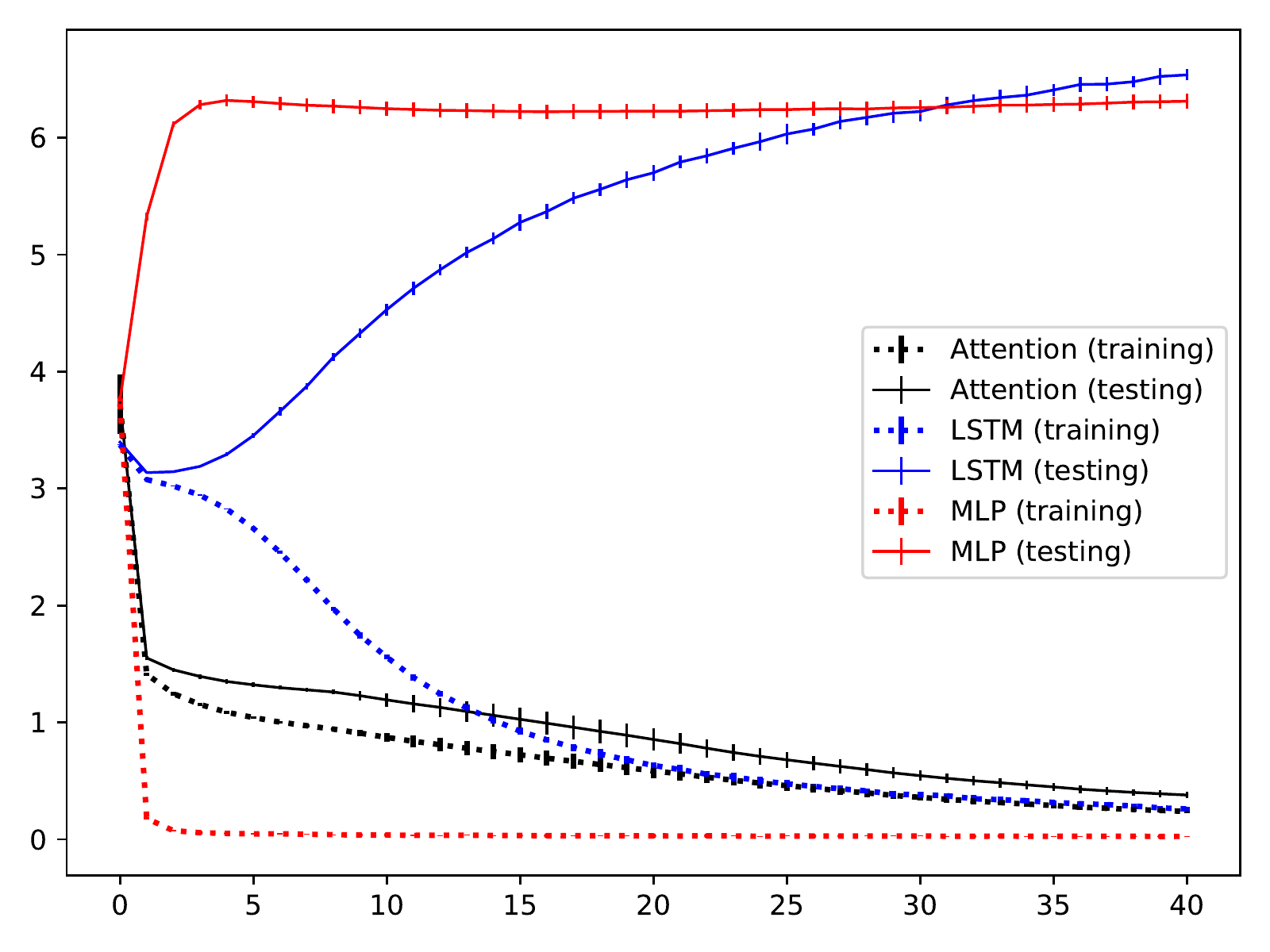}
\caption{Test and train error curves of fitting various architectures to $\qsa$,
    where the horizontal axis denotes thousands of training iterations, and the
    vertical axis denotes the regression objective; see
    Section~\ref{asec:experiments} for further details.
  }
  \label{fig:imp-bias}
\end{figure}

\begin{figure}[t!]
  \centering
  \begin{subfigure}[t]{0.32\textwidth}
    \centering
    \includegraphics[width=\textwidth]{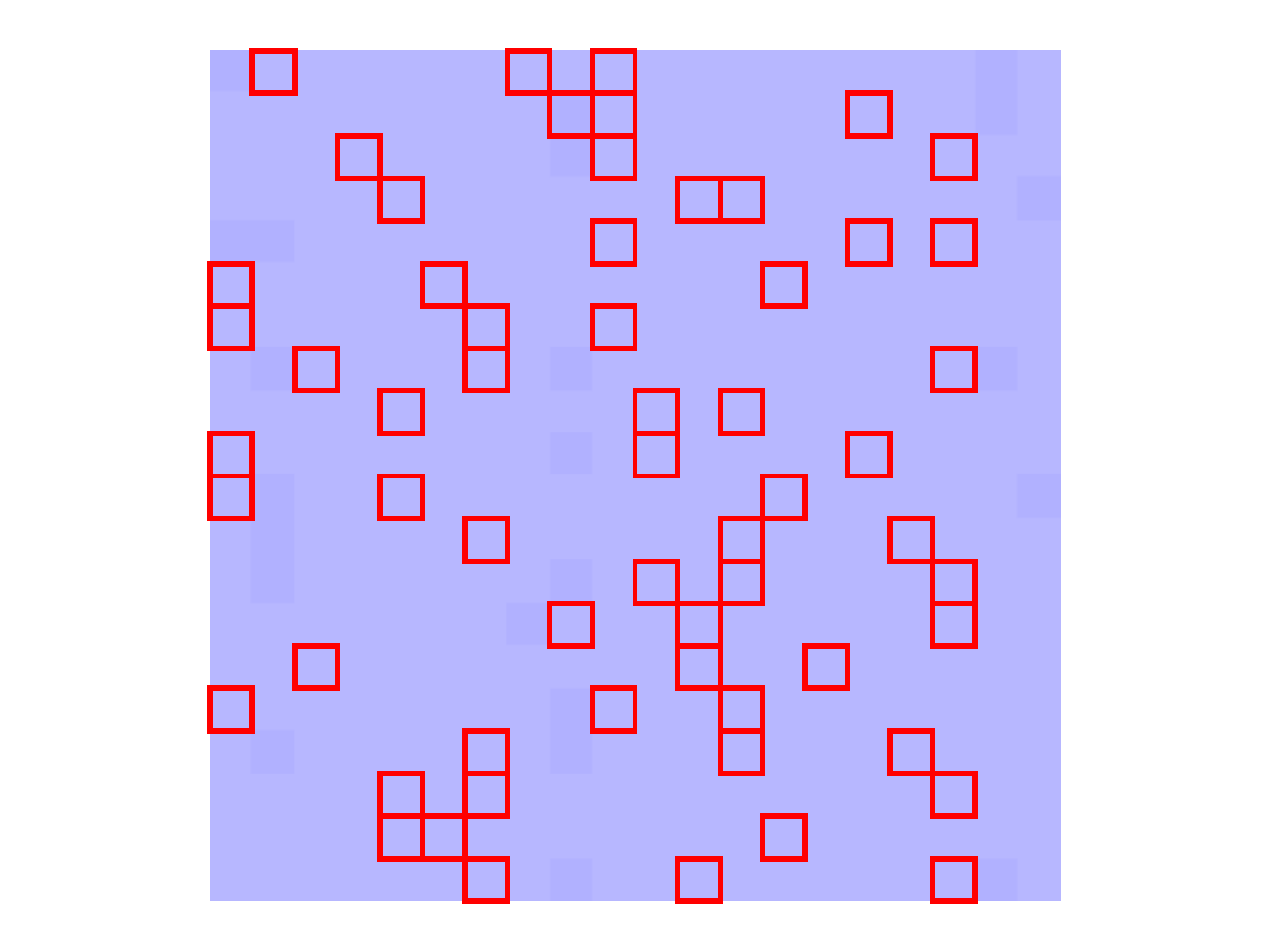}
    \caption{Min loss, $T=0$.}
\end{subfigure}
  \hfill
  \begin{subfigure}[t]{0.32\textwidth}
    \centering
    \includegraphics[width=\textwidth]{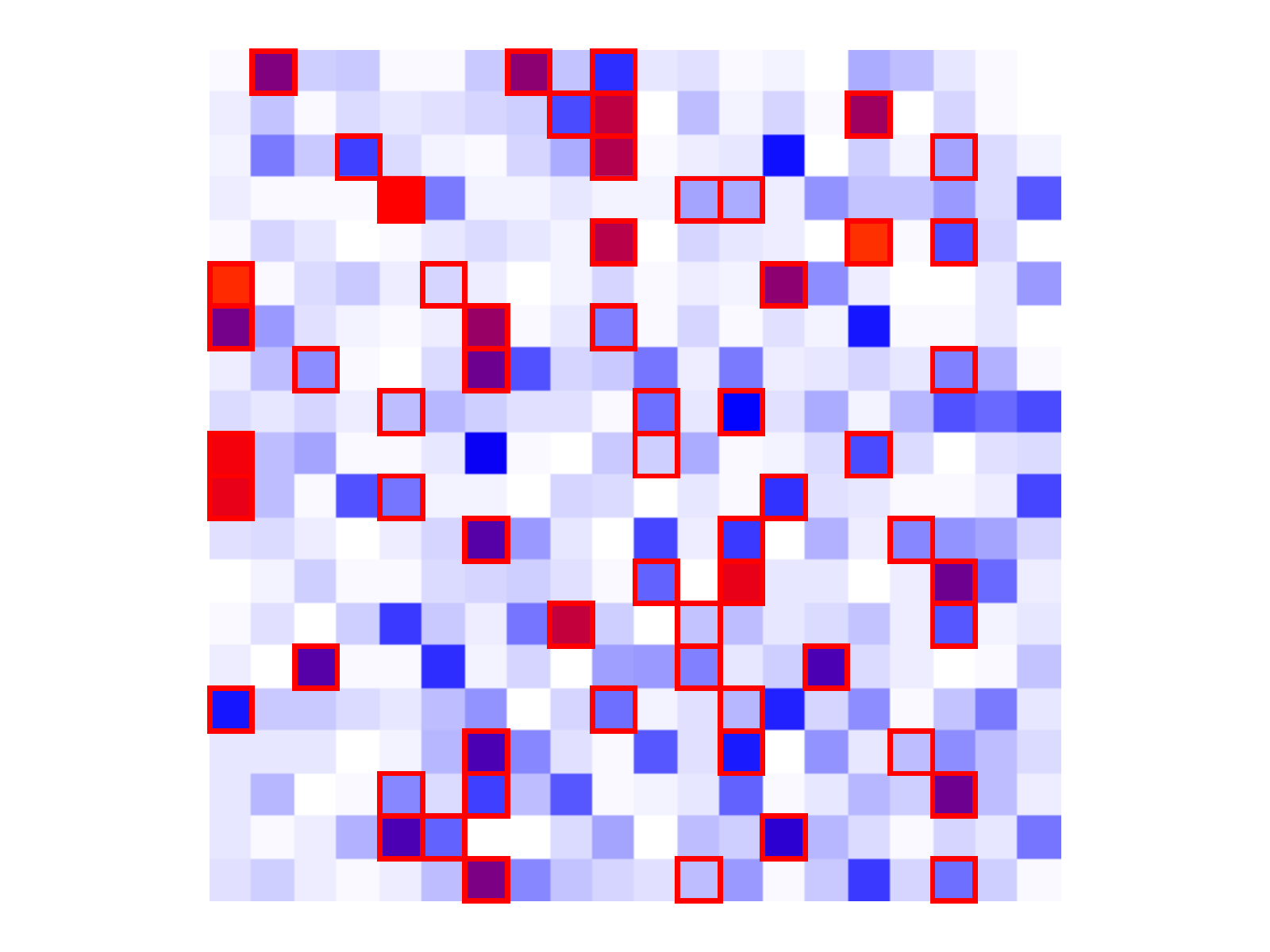}
    \caption{Min loss, $T=1000$.}
\end{subfigure}\hfill
  \begin{subfigure}[t]{0.33\textwidth}
    \centering
    \includegraphics[width=\textwidth]{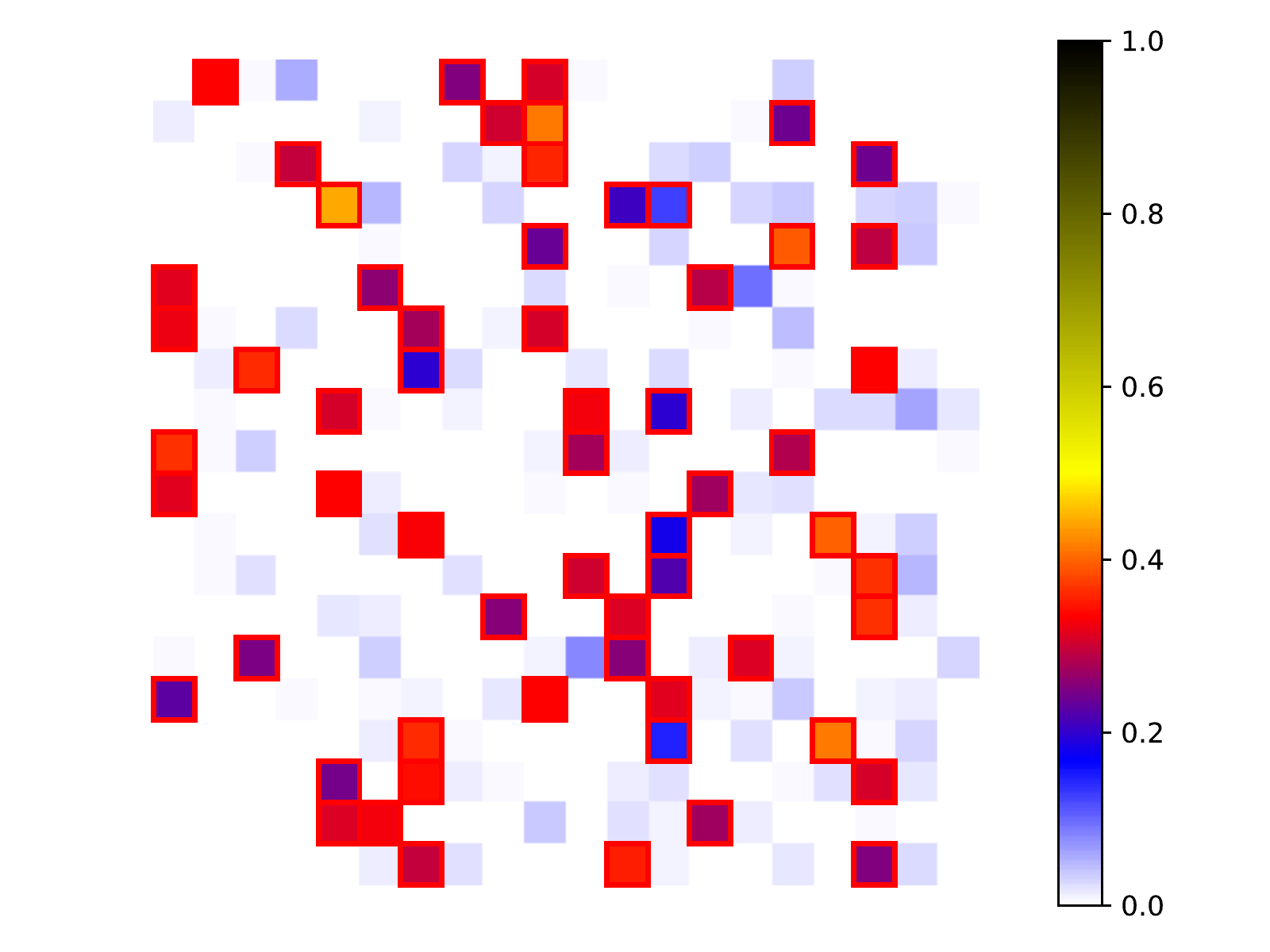}
    \caption{Min loss, $T=40000$.}
\end{subfigure}

  \begin{subfigure}[t]{0.32\textwidth}
    \centering
    \includegraphics[width=\textwidth]{figs/1685780947.4081957__alignment_quant0.5_20_3_0.pdf}
    \caption{Median loss, $T=0$.}
\end{subfigure}
  \hfill
  \begin{subfigure}[t]{0.32\textwidth}
    \centering
    \includegraphics[width=\textwidth]{figs/1685780947.4081957__alignment_quant0.5_20_3_1000.pdf}
    \caption{Median loss, $T=1000$.}
\end{subfigure}\hfill
  \begin{subfigure}[t]{0.33\textwidth}
    \centering
    \includegraphics[width=\textwidth]{figs/1685780947.4081957__alignmentwithcbar_quant0.5_20_3_40000.pdf}
    \caption{Median loss, $T=40000$.}
\end{subfigure}

  \begin{subfigure}[t]{0.32\textwidth}
    \centering
    \includegraphics[width=\textwidth]{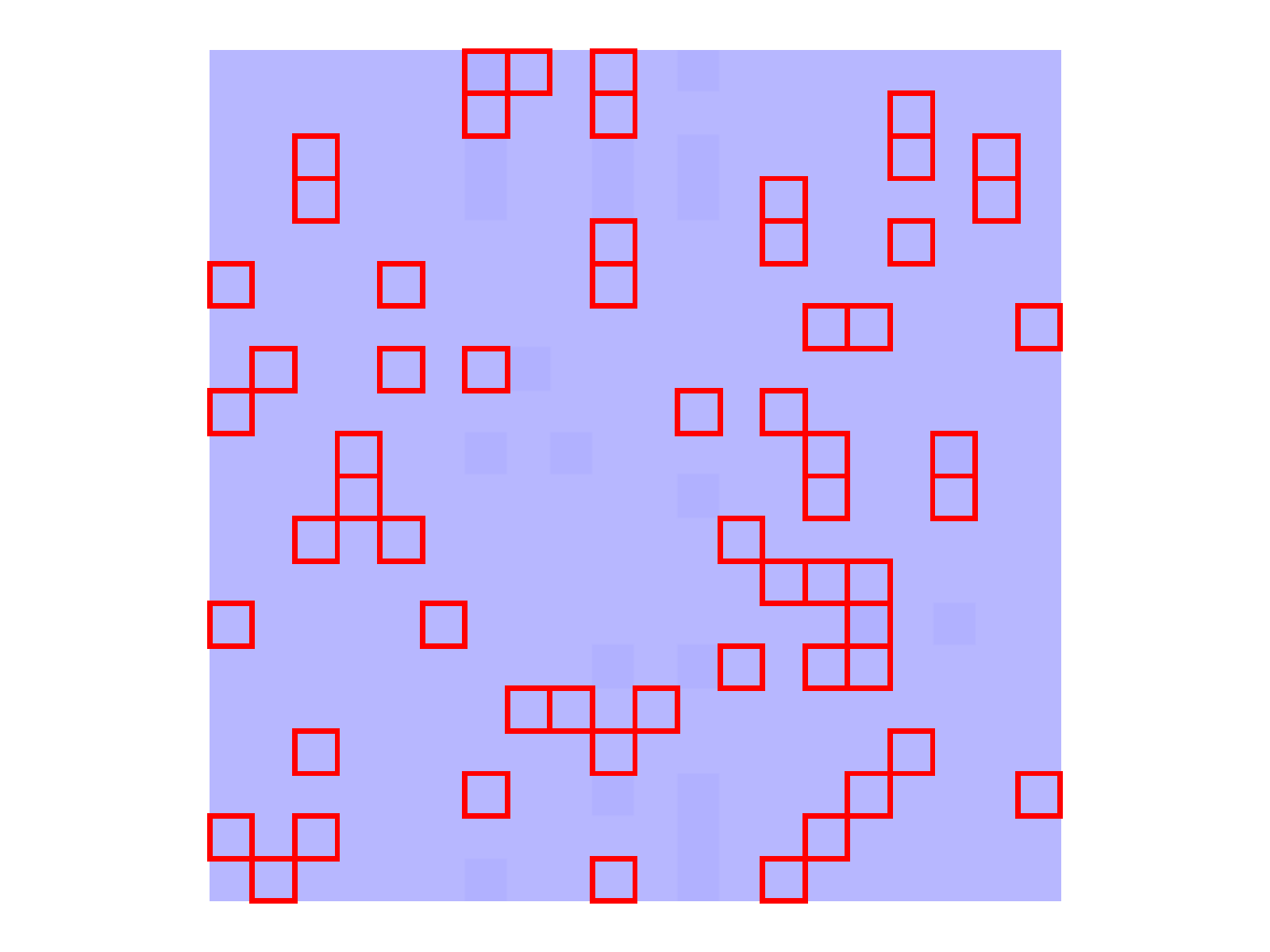}
    \caption{Max loss, $T=0$.}
\end{subfigure}
  \hfill
  \begin{subfigure}[t]{0.32\textwidth}
    \centering
    \includegraphics[width=\textwidth]{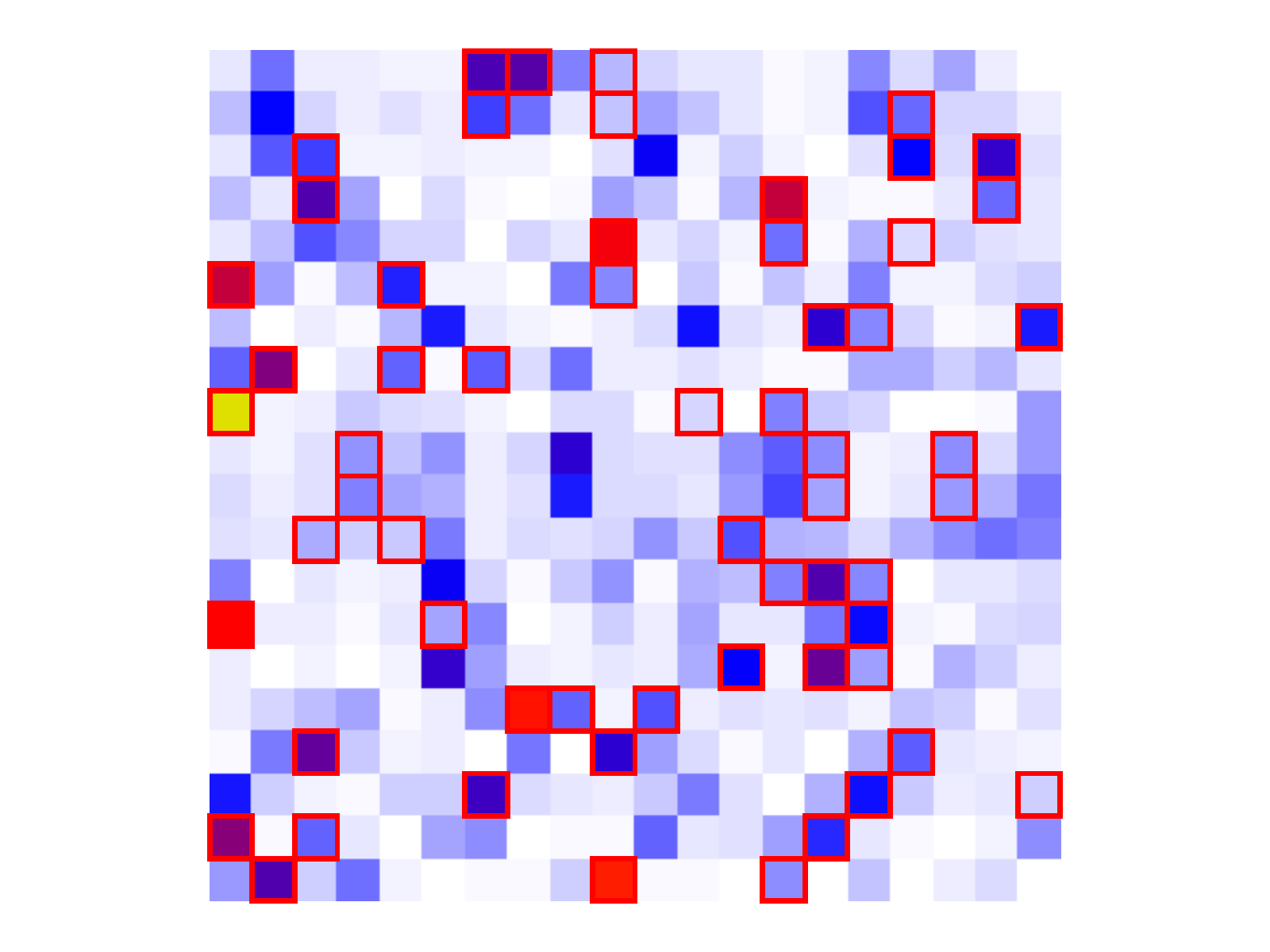}
    \caption{Max loss, $T=1000$.}
\end{subfigure}\hfill
  \begin{subfigure}[t]{0.33\textwidth}
    \centering
    \includegraphics[width=\textwidth]{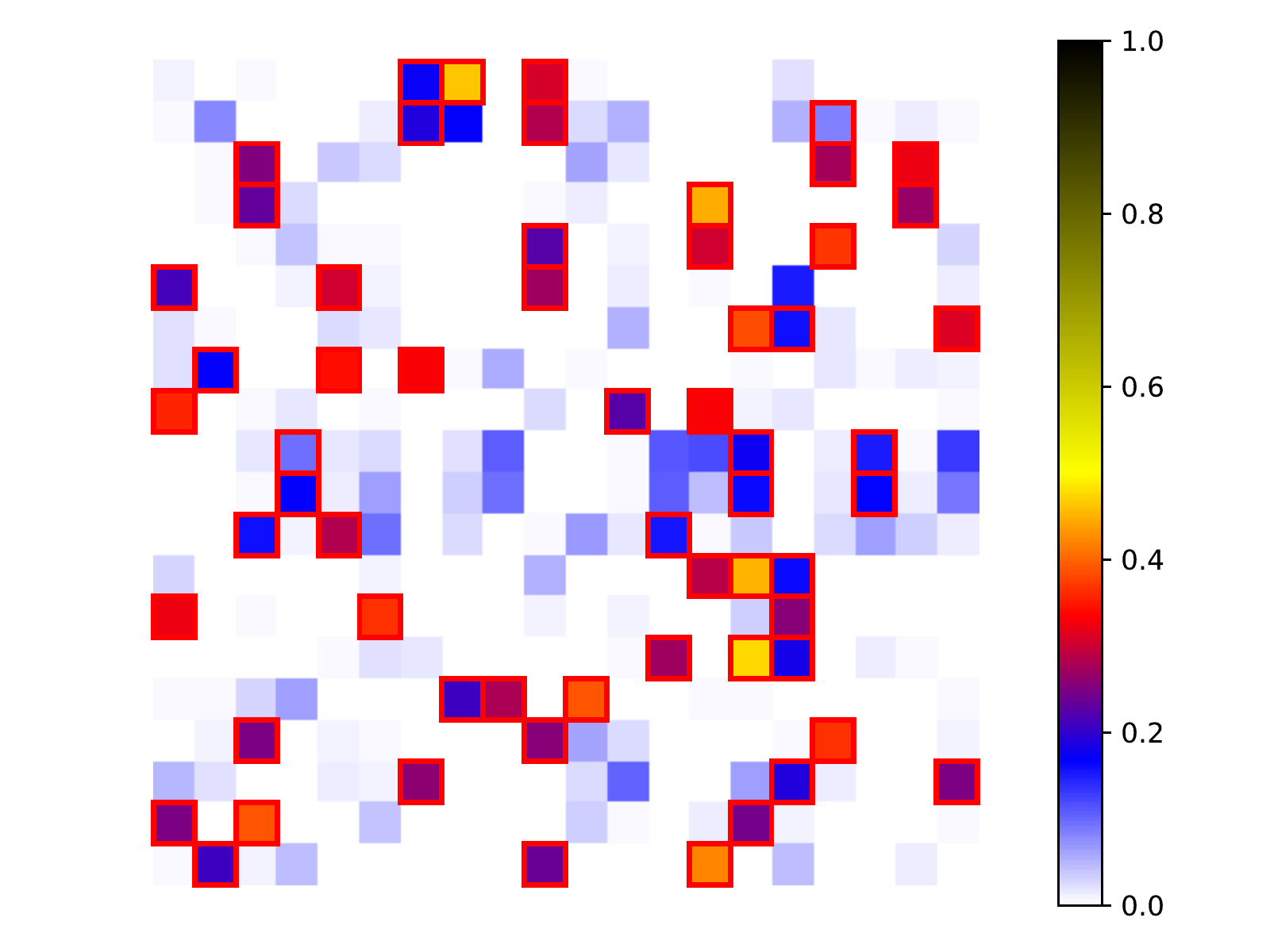}
    \caption{Max loss, $T=40000$.}
\end{subfigure}\caption{Alignment plots as in \Cref{fig:attn-mat}, but using examples with minimum, median, and maximum
    loss, whereas \Cref{fig:attn-mat} only uses the example with median loss.
  }
  \label{fig:attn-mat:3}
\end{figure}

\end{document}